%% file: main.tex
\documentclass[11pt,letterpaper]{article}
\usepackage[margin=1in]{geometry}
\usepackage{authblk}
\usepackage{natbib}
\setcitestyle{authoryear,round,citesep={;},aysep={,},yysep={;}}
\renewcommand{\topfraction}{0.95}
\renewcommand{\textfraction}{0.05}

\usepackage{amsmath,amssymb,amsthm,mathtools}
\usepackage{graphicx}
\usepackage{placeins}
\usepackage{booktabs}
\usepackage{microtype}
\usepackage{xcolor}
\usepackage{tikz}
\usetikzlibrary{fit,positioning}
\usepackage{hyperref}
\usepackage{xurl}
\definecolor{linkblue}{RGB}{30,60,140}
\hypersetup{colorlinks=true,linkcolor=linkblue,citecolor=linkblue,urlcolor=linkblue}

\newcommand{\indep}{\perp\!\!\!\perp}
\newcommand{\R}{\mathbb{R}}
\newcommand{\E}{\mathbb{E}}
\newcommand{\Prob}{\mathbb{P}}
\newcommand{\indic}[1]{\mathbf{1}\{#1\}}
\newcommand{\cA}{\mathcal{A}}

\newcommand{\cF}{\mathcal{F}}
\newcommand{\cG}{\mathcal{G}}

\newcommand{\inner}[2]{\langle #1,\,#2\rangle}
\newcommand{\norm}[1]{\lVert #1\rVert}

\newcommand{\dd}{\mathrm{d}}
\DeclareMathOperator*{\argmax}{arg\,max}
\DeclareMathOperator{\sign}{sign}
\DeclareMathOperator{\softmax}{softmax}
\DeclareMathOperator{\spn}{span}

\theoremstyle{plain}
\newtheorem{theorem}{Theorem}
\newtheorem{lemma}[theorem]{Lemma}
\newtheorem{proposition}[theorem]{Proposition}

\theoremstyle{definition}

\theoremstyle{remark}
\newtheorem{remark}{Remark}

\newcommand{\titleblockfont}{\fontencoding{T1}\fontfamily{qtm}\selectfont}

\makeatletter
\renewcommand{\@maketitle}{
  \newpage
  \null
  \begingroup
  \centering
  \titleblockfont
  \hrule height 2pt
  \vspace{0.22in}
  {\fontsize{20}{25}\selectfont\scshape\@title\par}
  \vspace{0.20in}
  \hrule height 2pt
  \vspace{0.42in}
  {\fontsize{12}{15}\selectfont\@author\par}
  \vspace{0.36in}
  {\fontsize{12}{15}\selectfont\@date\par}
  \endgroup
  \vspace{0.10in}
}
\makeatother

\title{Learning Style, Forgetting Semantics:\\
A case study of SFT and RFT on Classification Tasks}

\author{Haodong Liang}
\author{Yanhao Jin}
\author{Krishnakumar Balasubramanian}
\author{Lifeng Lai}

\affil{University of California, Davis \\ 
\texttt{\fontencoding{T1}\selectfont\{hdliang, yahjin, kbala, lflai\}@ucdavis.edu}}
\date{October 1, 2026}

\begin{document}
\maketitle

\begin{abstract}
Why does supervised fine-tuning (SFT) lead to more forgetting than reinforcement fine-tuning (RFT), even when all teacher demonstrations are semantically correct? We study this question on classification tasks where tokens within each semantic class express the same semantic answer in different styles. The tasks share an underlying semantic rule but differ in their prompt distributions and teachers' stylistic preferences. Using a tractable linear-softmax policy, we derive an exact decomposition of the updates into semantic and style components. We show that, at a common policy and prompt, SFT and RFT have parallel semantic updates but differ in their style dynamics. Starting from a policy with no within-class style preference, RFT with exact policy gradients preserves this symmetry, whereas SFT with a nonuniform teacher develops off-axis style drift along a nonzero task mean under population updates. We use this drift to establish a separation under explicit conditions: for population updates from a common perfectly fitted checkpoint, SFT forgetting admits a strictly positive lower bound over a finite training interval, while RFT retains zero semantic error. Simulations over task sequences support these theoretical predictions.
\end{abstract}

\section{Introduction}\label{sec:intro}
Fine-tuning plays a central role in adapting pretrained foundation models
to downstream tasks and enhancing their task-specific capabilities
\citep{howard2018universal,devlin2019bert,raffel2020exploring}.
For example, instruction tuning on diverse demonstrations improves language models'
ability to follow instructions and generalize to unseen tasks
\citep{ouyang2022training,chung2024scaling}; reward-based fine-tuning improves
mathematical and coding performance by optimizing
model-generated responses against verifiable outcomes
\citep{shao2024deepseekmath,guo2025deepseekr1}.
However, as models learn new tasks, they may lose previously acquired
capabilities, a phenomenon known as
\emph{forgetting} \citep{wang2024comprehensive,luo2025empirical,chen2026retaining}. Preserving existing capabilities while adapting to new tasks is therefore an important consideration, highlighting the need to understand when and why fine-tuning can degrade them.

Two widely used fine-tuning methods are supervised fine-tuning (SFT),
which fits teacher demonstrations, and reinforcement fine-tuning (RFT),
which updates the model using rewards for its own sampled responses.
Empirical studies find that RFT
often preserves existing capabilities better than SFT
\citep{chen2026retaining,lai2026reinforcement}. This gap can arise even
with correct supervision: \citet{chu2025sft} observe that SFT on correct
expert demonstrations degrades existing performance under input
distribution shifts that preserve the task rules, whereas RFT from the
same checkpoint improves it. Existing theories offer distributional
explanations for such differences. \citet{shenfeld2026razor} identify
an implicit bias of online RL toward the solution closest to the
initial policy in KL divergence among those that solve the new task.
\citet{balasubramanian2026quantitative} quantify how forward- and
reverse-KL objectives affect the preservation of the
old-task component in a two-component Gaussian mixture. These accounts, however, leave a specific
mechanistic question unanswered: how does fitting a distribution of \emph{correct}
responses induce errors on previously mastered tasks?

The ParityMNIST experiments of \citet{shenfeld2026razor} make this
issue concrete: the two SFT baselines, trained on different distributions over equally correct labels, exhibit different retention trade-offs, but both preserve prior-task performance less effectively than RFT. Related
language-model studies show that SFT can teach response formats
\citep{zhou2023lima}, but that imitation can accompany declines in factual accuracy
\citep{gudibande2024false}. To understand this phenomenon, we isolate the role of these preferences
in a classification case study, separating an answer's \emph{semantic
content} from its particular expression, which we call its \emph{style}.
For example,
``4,'' ``four,'' and ``The answer is 4'' are all correct responses to
$2+2$, but a teacher may prefer one expression over the others. In our
setting, SFT fits this preference, whereas RFT rewards semantic
correctness regardless of style. This distinction motivates our
central question:
\begin{center}
\emph{Can fitting the teacher's style preference alone cause semantic
forgetting, even when every demonstration is correct?}
\end{center}

We study this question in a tractable classification model with a linear-softmax policy, where tasks share the same semantic rule but differ in their input distributions and teachers’ stylistic preferences. For this policy class, we derive an exact decomposition of the updates into semantic and style components, allowing us to trace how style fitting induces errors on a previously mastered task. Under explicit conditions and population updates, we prove that SFT causes strictly positive semantic forgetting even when every demonstration is correct, whereas RFT preserves zero error.

Our main contributions are:
\begin{itemize}
\item \textbf{An exact decomposition of the update.}
We split the policy parameters into a semantic channel and a style
channel and characterize the update direction in each channel under
SFT and RFT. The semantic updates are parallel, whereas the style
updates differ: from a policy with no style preference, RFT keeps the
style channel exactly zero throughout training, while SFT drifts along
the task mean (Theorems~\ref{thm:rate} and~\ref{thm:style}).
\item \textbf{A provable population retention gap between SFT and RFT.}
Starting from a checkpoint that perfectly fits the previous task and
has no style preference, we prove a positive lower bound on SFT
forgetting under population updates when the next task has a nonzero
prompt mean and the teacher's style preference is stronger in one semantic
class than in the other, whereas RFT retains perfect semantic accuracy
(Theorem~\ref{thm:separation}).
\item \textbf{A finite-sample bound on off-axis energy.} On top of the population-level analysis, we account for the additional
effects of finite-sample fluctuations on the off-axis parameter energy
of SFT and RFT.
Our bounds cover both semantic and style components and show that RFT
admits a smaller upper bound on the expected off-axis energy than SFT
(Theorem~\ref{thm:offaxis-energy}), and the resulting bounds are consistent with our population-level analysis in the large-sample limit.
\end{itemize}

Together, these results suggest that fitting a teacher's style
preference is by itself sufficient to cause semantic forgetting under
SFT, even when the teacher is always correct, whereas RFT avoids it
because learning from its own on-policy samples induces no style drift.
In Section~\ref{sec:experiments}, we provide empirical evidence for this mechanism by simulating the
model over multi-task sequences under population and finite-sample
updates; the results
illustrate the mechanisms in Theorems~\ref{thm:style}
and~\ref{thm:separation} and the finite-sample effects studied in
Theorem~\ref{thm:offaxis-energy}. We also report ablation studies in Appendix~\ref{app:controls} to examine the model's behavior when some of the theoretical conditions are not satisfied.

\textbf{Additional related work}: We provide a detailed discussion of more related work in Appendix~\ref{sec:related}.

\section{Preliminaries}\label{sec:preliminaries}

\subsection{Problem setup}\label{sec:setting}

We take a case study on a family of tasks that share a binary-valued semantic rule but differ in their prompt distributions and in teachers‘ preferences among semantically equivalent outputs.

\paragraph{Semantic classes.} Let $\mathbf{x}\in\R^p$ denote
a prompt representation and $\cA$ a finite output vocabulary. A map
$\sigma:\cA\to\{0,1\}$ defines two \emph{semantic classes},
$\cA_b=\sigma^{-1}(b)$, with $|\cA_0|=|\cA_1|=M\ge2$.
Tokens within a class express the same semantic answer in different styles.
A unit direction $\mathbf{u}\in\R^p$, shared by all tasks, determines
the correct class and reward:
\begin{equation}\label{eq:reward}
b^\star(\mathbf{x}) = \indic{\inner{\mathbf{u}}{\mathbf{x}}>0},
\qquad
R(a,\mathbf{x}) = \indic{\sigma(a)=b^\star(\mathbf{x})}.
\end{equation}
Thus each prompt rewards exactly one class, regardless of expression style.
Figure~\ref{fig:reward-cells} illustrates this relationship. For RFT, we assume access to an accurate semantic reward signal, which may be provided by human correctness judgments or an automatic verifier.

\textit{Notations.} Throughout our analysis, we refer to components parallel to the shared
semantic direction $\mathbf u$ as \emph{on-axis} and those orthogonal
to it as \emph{off-axis}.
We write $\mathbf v^\perp:=\mathbf v-\inner{\mathbf v}{\mathbf u}\mathbf u$
for vectors and apply this projection rowwise to matrices, i.e.,
$\mathbf V^\perp:=\mathbf V(\mathbf I_p-\mathbf u\mathbf u^\top)$.

\begin{figure}[t]
\centering
\input{Figures/fig_reward_cells.tex}
\caption{\textbf{Semantic classes and expression styles.} \emph{(a)}
The same token space is partitioned into semantic classes $\cA_0$ and $\cA_1$.
The prompt determines which class is rewarded; all tokens within a class
receive the same reward. Here, $\mathbf{x}$ denotes the prompt's vector
representation. \emph{(b)} Teachers favor different expression styles
across tasks.}
\label{fig:reward-cells}
\end{figure}

\paragraph{Tasks.} Each task $t$ specifies a prompt distribution $\mathbf{x}\sim N(\boldsymbol{\mu}_t,\mathbf{I}_p)$, and an SFT
teacher.
The teacher always selects a token from the correct semantic class, but
may favor particular styles of expressing the correct answer. For each task $t$ and semantic class $b\in\{0,1\}$, the teacher has a fixed within-class preference distribution $\rho_{t,b}$, which may differ across tasks.
The teacher distribution is
\begin{equation}\label{eq:teacher}
q_t(a\mid \mathbf{x}) =
\begin{cases}
\rho_{t,b^\star(\mathbf{x})}(a), & a\in\cA_{b^\star(\mathbf{x})},\\
0, & \text{otherwise}.
\end{cases}
\end{equation}
Thus the teacher is always semantically correct.
Throughout the analysis, we assume
$\inner{\boldsymbol{\mu}_t}{\mathbf{u}}=0$ for every task $t$, which allows task-specific prompt shifts while keeping the semantic coordinate
$\inner{\mathbf{u}}{\mathbf{x}}\sim N(0,1)$, therefore isolating the role of style
drift in semantic forgetting.

\paragraph{Policy.}
We use a linear-softmax policy, a standard parameterization in
policy-gradient reinforcement learning
\citep{sutton2000policy,kakade2001natural,agarwal2021theory}.
This linear-softmax form is also used for classification heads in supervised
fine-tuning of pretrained language models
\citep{radford2018improving,devlin2019bert}; our analysis holds the prompt
representations fixed.
The policy is parameterized by
$\mathbf{W}\in\R^{2M\times p}$, with rows $\mathbf{w}_a$:
\begin{equation}\label{eq:policy}
\pi_{\mathbf{W}}(a\mid \mathbf{x})
=\softmax(\mathbf{W}\mathbf{x})_a
=\frac{e^{\inner{\mathbf{w}_a}{\mathbf{x}}}}
{\sum_{a'\in\cA} e^{\inner{\mathbf{w}_{a'}}{\mathbf{x}}}}.
\end{equation}
The policy outputs the highest-scoring token, with semantic class
$\hat b_{\mathbf{W}}(\mathbf{x})
=\sigma\bigl(\argmax_a\inner{\mathbf{w}_a}{\mathbf{x}}\bigr)$.

Under this setup, SFT learns to match the teacher's distribution over
individual tokens, whereas RFT optimizes a reward that depends only on
the semantic class. We next specify the corresponding learning rules.

\subsection{Learning rules}\label{sec:learning-rules}

On task $t$, SFT aims to match the teacher's token distribution by minimizing
cross-entropy, whereas RFT aims to maximize the expected semantic reward.
For a prompt $\mathbf{x}$, SFT and RFT minimize the following losses, respectively:
\begin{gather}
\mathcal{L}_{\mathrm{SFT}}(\mathbf{W};\mathbf{x})
=-\E_{a^\star\sim q_t(\cdot\mid\mathbf{x})}
\bigl[\log\pi_{\mathbf{W}}(a^\star\mid\mathbf{x})\bigr],
\label{eq:sft-loss}\\
\mathcal{L}_{\mathrm{RFT}}(\mathbf{W};\mathbf{x})
=-\E_{a'\sim\pi_{\mathbf{W}}(\cdot\mid\mathbf{x})}
\bigl[R(a',\mathbf{x})\bigr].
\label{eq:rft-loss}
\end{gather}
Both methods process one prompt at a time with
step size $\eta$, starting from $\mathbf{W}=\mathbf{0}$. SFT draws a
demonstration $a^\star\sim q_t(\cdot\mid\mathbf{x})$ and takes a stochastic
gradient step on \eqref{eq:sft-loss}; RFT takes a stochastic gradient step on
\eqref{eq:rft-loss} using its \emph{exact} policy gradient.\footnote{Here, "exact" means that we compute the expected reward
by summing over all actions under the current policy and take its
exact gradient, while prompt sampling remains stochastic. We discuss the sampled REINFORCE updates
in Appendix~\ref{app:sampled}.} For every $a\in\cA$,
\begin{equation}\label{eq:updates}
\begin{aligned}
\mathbf{w}_a^+
&=\mathbf{w}_a
+\eta\bigl(\indic{a=a^\star}-\pi_{\mathbf{W}}(a\mid\mathbf{x})\bigr)
\mathbf{x}
&&\text{(SFT)},\\
\mathbf{w}_a^+
&=\mathbf{w}_a
+\eta\,\pi_{\mathbf{W}}(a\mid\mathbf{x})
\bigl(R(a,\mathbf{x})-\bar R_{\mathbf{W}}(\mathbf{x})\bigr)\mathbf{x}
&&\text{(RFT)},
\end{aligned}
\end{equation}
where $\bar R_{\mathbf{W}}(\mathbf{x})
=\sum_{a'\in\cA}\pi_{\mathbf{W}}(a'\mid\mathbf{x})R(a',\mathbf{x})$
is the expected reward. A derivation of these update rules is given in
Lemma~\ref{lem:learning-rules}.
For task $t$, we write the \emph{population} versions of the updates in
\eqref{eq:updates} in terms of the parameter matrix $\mathbf{W}$:
\begin{equation}\label{eq:pop-updates}
\begin{aligned}
\mathbf{W}^+&=\mathbf{W}+\eta\,\E_t\bigl[(q_t(\cdot\mid\mathbf{x})-\pi_{\mathbf{W}}(\cdot\mid\mathbf{x}))\mathbf{x}^\top\bigr]
&&\text{(SFT)},\\
\mathbf{W}^+&=\mathbf{W}+\eta\,\E_t\bigl[\pi_{\mathbf{W}}(\cdot\mid\mathbf{x})
(R(\cdot,\mathbf{x})-\bar R_{\mathbf{W}}(\mathbf{x}))\mathbf{x}^\top\bigr]
&&\text{(RFT)}.
\end{aligned}
\end{equation}

\subsection{Continual learning}\label{sec:continual-learning}

We train the policy sequentially on tasks $t=1,\dots,T$ and denote
its parameters after task $t$ by $\mathbf{W}_t$. We evaluate both
its final performance across tasks and its retention of earlier tasks.

For task \(t\), we define the semantic accuracy and error under its prompt distribution as
\[
a_t(\mathbf{W})
=\Prob_{\mathbf{x}\sim N(\boldsymbol{\mu}_t,\mathbf{I}_p)}
\bigl(\hat b_{\mathbf{W}}(\mathbf{x})=b^\star(\mathbf{x})\bigr),
\qquad
\mathcal{L}_t(\mathbf{W})=1-a_t(\mathbf{W}).
\]
Following standard practice in continual learning
\citep{chaudhry2018riemannian, chaudhry2019efficient,lin2023theory}, we evaluate
the final checkpoint $\mathbf{W}_T$ using the average generalization error
across all tasks and the average forgetting over earlier tasks:
\begin{equation}\label{eq:experiment-metrics}
\cG^{(T)}
=\frac{1}{T}\sum_{t=1}^{T}\mathcal{L}_t(\mathbf{W}_T),
\qquad
\cF^{(T)}
=\frac{1}{T-1}\sum_{t=1}^{T-1}
\Bigl(\max_{1\le l\le T-1}a_t(\mathbf{W}_l)-a_t(\mathbf{W}_T)\Bigr).
\end{equation}
For each earlier task, forgetting compares its final accuracy with
its best accuracy over all preceding task checkpoints. Negative values of $\cF^{(T)}$ indicate
positive backward transfer.

\section{Main Results}\label{sec:main}
In this section, we compare SFT and RFT by decomposing the policy parameters
into semantic and style channels (Section~\ref{sec:channels}).
We first show that, at the same policy and prompt, the two methods update
the semantic channel in the same direction (Theorem~\ref{thm:rate}).
Despite this alignment, their style dynamics differ: starting from a
policy uniform within each class, RFT preserves zero style deviation, whereas SFT
develops population style drift along a nonzero task mean when the
teacher has within-class style preferences (Theorem~\ref{thm:style}).
We then show that this style drift can produce a strict separation in
forgetting between SFT and RFT at the population level
(Theorem~\ref{thm:separation}).
Finally, we complement the population analysis with finite-sample bounds on the off-axis drifting energy of both methods under fresh sampling
(Theorem~\ref{thm:offaxis-energy}).

\subsection{Semantic--style decomposition}\label{sec:channels}

We decompose the policy parameters into a \emph{semantic channel}
$\boldsymbol{\alpha}=(\boldsymbol{\alpha}^{(0)},\boldsymbol{\alpha}^{(1)})$,
which represents the mean weight associated with each semantic class,
and a \emph{style channel}
$\boldsymbol{\beta}=(\boldsymbol{\beta}_a)_{a\in\cA}$,
which captures the policy's preferences among semantically equivalent
tokens within each class.
Specifically, for $b\in\{0,1\}$,
\begin{equation}\label{eq:channels}
\boldsymbol{\alpha}^{(b)} = \frac1{M}\sum_{a\in\cA_b} \mathbf{w}_a,
\qquad
\boldsymbol{\beta}_a = \mathbf{w}_a - \boldsymbol{\alpha}^{(\sigma(a))},
\qquad \text{so } \sum_{a\in\cA_b}\boldsymbol{\beta}_a = \mathbf{0}.
\end{equation}
Each token's score then splits as 
\[
\inner{\mathbf{w}_a}{\mathbf{x}}
=\inner{\boldsymbol{\alpha}^{(\sigma(a))}}{\mathbf{x}}
+\inner{\boldsymbol{\beta}_a}{\mathbf{x}}.
\]
Using this decomposition, the policy 
\eqref{eq:policy} can then be written as
\begin{equation}\label{eq:channel-policy}
\pi_{\mathbf{W}}(a\mid\mathbf{x})
=\frac{\exp\!\bigl(\inner{\boldsymbol{\alpha}^{(\sigma(a))}}{\mathbf{x}}
+\inner{\boldsymbol{\beta}_a}{\mathbf{x}}\bigr)}
{\displaystyle\sum_{b=0}^1\sum_{a'\in\cA_b}
\exp\!\bigl(\inner{\boldsymbol{\alpha}^{(b)}}{\mathbf{x}}
+\inner{\boldsymbol{\beta}_{a'}}{\mathbf{x}}\bigr)}.
\end{equation}

Let
$\Delta\boldsymbol{\alpha}
:=\boldsymbol{\alpha}^{(1)}-\boldsymbol{\alpha}^{(0)}$.
For any $\mathbf{W}$, the decision rule is
\begin{gather}
D_{\mathbf{W}}(\mathbf{x})
= \inner{\Delta\boldsymbol{\alpha}}{\mathbf{x}} + \xi_{\boldsymbol{\beta}}(\mathbf{x}),
\qquad
\hat b_{\mathbf{W}}(\mathbf{x}) = \indic{D_{\mathbf{W}}(\mathbf{x})>0},\notag\\
\xi_{\boldsymbol{\beta}}(\mathbf{x})
= \max_{a\in\cA_1}\inner{\boldsymbol{\beta}_a}{\mathbf{x}}
             - \max_{a\in\cA_0}\inner{\boldsymbol{\beta}_a}{\mathbf{x}}.
\label{eq:decision}
\end{gather}
Thus, a within-class style preference can change the semantic prediction
when $\xi_{\boldsymbol{\beta}}(\mathbf{x})$ reverses the sign of
$\inner{\Delta\boldsymbol{\alpha}}{\mathbf{x}}$.
If $\Delta\boldsymbol{\alpha}, \boldsymbol{\beta}$ are on-axis, i.e.,
$\Delta\boldsymbol{\alpha}=\gamma\mathbf{u}$ and
$\boldsymbol{\beta}_a=c_a\mathbf{u}$ for every token $a$, the policy's
decision boundary coincides with the true semantic boundary whenever
$\gamma>2\max_a|c_a|$ (Lemma~\ref{lem:on-axis-classification}).
In particular, for a class-uniform policy ($\boldsymbol{\beta}=\mathbf{0}$),
this decision boundary is determined solely by $\Delta\boldsymbol{\alpha}$.
We next compare how SFT and RFT update these two channels, beginning with
the semantic component.

\subsection{The semantic channel: a shared update direction}\label{sec:semantic}

Define the cumulative probability assigned to semantic class $b$ by
\begin{equation}\label{eq:class-prob}
P_b(\mathbf{W},\mathbf{x}):=\sum_{a\in\cA_b}\pi_{\mathbf{W}}(a\mid\mathbf{x}).
\end{equation}
We first show that SFT and RFT update the semantic channel in
the same direction.

\begin{theorem}[Shared Semantic  direction]\label{thm:rate}
Under the setup of Section~\ref{sec:setting}, the following hold.
\begin{enumerate}
\item[\emph{(i)}] \emph{(per-sample identity)} For any $\mathbf{W}$ and
prompt $\mathbf{x}$, the semantic channel moves as
\begin{equation}\label{eq:semantic-steps}
\begin{aligned}
\Delta\boldsymbol{\alpha}^+_{\mathrm{SFT}} - \Delta\boldsymbol{\alpha}
&= \frac{2\eta}{M}\,\bigl(2b^\star(\mathbf{x})-1\bigr)
\bigl(1-P_{b^\star(\mathbf{x})}(\mathbf{W},\mathbf{x})\bigr)\mathbf{x},\\
\Delta\boldsymbol{\alpha}^+_{\mathrm{RFT}} - \Delta\boldsymbol{\alpha}
&= P_{b^\star(\mathbf{x})}(\mathbf{W},\mathbf{x})\cdot
\bigl(\Delta\boldsymbol{\alpha}^+_{\mathrm{SFT}}-\Delta\boldsymbol{\alpha}\bigr).
\end{aligned}
\end{equation}
\item[\emph{(ii)}] \emph{(population direction)} If $\boldsymbol{\beta}=\mathbf{0}$ and
$\Delta\boldsymbol{\alpha}=c\mathbf{u}$ for some $c\in\mathbb{R}$, then
the task-$t$ population semantic steps are parallel to $\mathbf{u}$:
\begin{equation}\label{eq:pop-alpha-task-mean}
\begin{aligned}
\E_t\bigl[\Delta\boldsymbol{\alpha}^+_{\mathrm{SFT}}-\Delta\boldsymbol{\alpha}\bigr]
&=\frac{2\eta}{M}\sqrt{\frac{2}{\pi}}
\left(\int_0^\infty\frac{z e^{-z^2/2}}{1+e^{cz}}\,\dd z\right)\mathbf{u},\\
\E_t\bigl[\Delta\boldsymbol{\alpha}^+_{\mathrm{RFT}}-\Delta\boldsymbol{\alpha}\bigr]
&=\frac{2\eta}{M}\sqrt{\frac{2}{\pi}}
\left(\int_0^\infty\frac{z e^{-z^2/2}e^{cz}}{(1+e^{cz})^2}\,\dd z\right)\mathbf{u}.
\end{aligned}
\end{equation}
\end{enumerate}
\end{theorem}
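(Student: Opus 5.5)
Part (i) follows by averaging the per-token updates \eqref{eq:updates} over each semantic class. For SFT, summing $\mathbf{w}_a^+-\mathbf{w}_a=\eta(\indic{a=a^\star}-\pi_{\mathbf W}(a\mid\mathbf x))\mathbf x$ over $a\in\cA_b$ and dividing by $M$ gives
\[
\boldsymbol\alpha^{(b)+}-\boldsymbol\alpha^{(b)}=\frac{\eta}{M}\bigl(\indic{\sigma(a^\star)=b}-P_b\bigr)\mathbf x .
\]
Since the teacher is always correct, $\sigma(a^\star)=b^\star$ deterministically. The semantic increment is therefore the same for every draw of $a^\star$; only the style channel depends on which demonstration is drawn. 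Using $P_0+P_1=1$, the difference for $b=1$ minus $b=0$ equals $\frac{\eta}{M}\bigl[(\indic{b^\star=1}-\indic{b^\star=0})-(P_1-P_0)\bigr]\mathbf x$. Checking the two cases $b^\star=1$ and $b^\star=0$ shows this is $\frac{2\eta}{M}(2b^\star-1)(1-P_{b^\star})\mathbf x$.

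For RFT, the reward is constant on each class and $\bar R_{\mathbf W}=P_{b^\star}$. Summing over $\cA_b$ then gives
\[
\boldsymbol\alpha^{(b)+}-\boldsymbol\alpha^{(b)}=\frac{\eta}{M}P_b\bigl(\indic{b=b^\star}-P_{b^\star}\bigr)\mathbf x .
\]
For $b=b^\star$ the coefficient is $P_{b^\star}(1-P_{b^\star})$. For $b\neq b^\star$ it is $-(1-P_{b^\star})P_{b^\star}$. Taking the difference yields the factor $P_{b^\star}$ times the SFT step, which proves the second line of \eqref{eq:semantic-steps}.

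For part (ii), substitute $\boldsymbol\beta=\mathbf 0$ and $\Delta\boldsymbol\alpha=c\mathbf u$. All tokens in a class then share a score, so $P_1=\bigl(1+e^{-cz}\bigr)^{-1}$ with $z=\inner{\mathbf u}{\mathbf x}$. Writing $s=2b^\star-1=\sign z$, we get $1-P_{b^\star}=(1+e^{c|z|})^{-1}$, which depends on $\mathbf x$ only through $z$.

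Decompose $\mathbf x=z\mathbf u+\mathbf x^\perp$. Because $\inner{\boldsymbol\mu_t}{\mathbf u}=0$ and the covariance is the identity, $z\sim N(0,1)$ is independent of $\mathbf x^\perp\sim N(\boldsymbol\mu_t^\perp,\mathbf I-\mathbf u\mathbf u^\top)$. This independence is the key step: the off-axis part contributes $\E[s\,g(|z|)]\,\boldsymbol\mu_t$, which vanishes because $s\,g(|z|)$ is an odd function of a symmetric variable. The nonzero task mean thus drops out of the semantic channel.

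The on-axis part is $\E[s\,z\,g(|z|)]\,\mathbf u=\E[|z|g(|z|)]\,\mathbf u=2\int_0^\infty z\,g(z)\,\phi(z)\,\dd z\,\mathbf u$. With $\phi(z)=e^{-z^2/2}/\sqrt{2\pi}$, the prefactor is $2/\sqrt{2\pi}=\sqrt{2/\pi}$. Taking $g(z)=(1+e^{cz})^{-1}$ gives the SFT formula. For RFT, $g(z)=P_{b^\star}(1-P_{b^\star})=e^{cz}/(1+e^{cz})^2$, which gives the RFT formula.

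The calculations are routine. The only care needed is the sign bookkeeping in $1-P_{b^\star}$ for the $b^\star=0$ case, together with the independence/symmetry argument that removes the $\boldsymbol\mu_t$ term.
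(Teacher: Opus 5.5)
Your proof is correct and follows essentially the same route as the paper. For part (i), both average the row updates over each class and then split on the value of $b^\star(\mathbf{x})$. For part (ii), both use $\mathbf{x}=z\mathbf{u}+\boldsymbol{\mu}_t+\boldsymbol{\xi}^{\perp}$ with $z$ independent of the off-axis part. Oddness of $\sign(z)\,g(|z|)$ then removes the $\boldsymbol{\mu}_t$ term, and evenness reduces the rest to the half-line integral; you merely fold $\boldsymbol{\mu}_t$ and the centered noise into one $\mathbf{x}^{\perp}$ with mean $\boldsymbol{\mu}_t$.
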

\noindent The proof of Theorem~\ref{thm:rate} is in Appendix~\ref{app:proof-rate}.

In particular, at a fixed policy and prompt, the SFT semantic update
in \eqref{eq:semantic-steps} does not depend on which style token the
teacher sampled, while the RFT update is rescaled by the probability
assigned to the correct class. Moreover, both coefficients in
\eqref{eq:pop-alpha-task-mean} are strictly positive, so both population
updates increase the semantic margin along $\mathbf{u}$. Despite this shared semantic update direction, the two methods
differ in how they update the style channel, as we show next.

\subsection{The style channel: RFT freezes, SFT drifts}\label{sec:style}

We next examine whether SFT and RFT preserve uniformity within each
semantic class. The following theorem shows that RFT preserves zero
style deviation, whereas SFT can develop population style drift
when the teacher has within-class preferences.

\begin{samepage}
\begin{theorem}[Style dynamics]\label{thm:style}
Under the setup of Section~\ref{sec:setting}, the following hold.
\begin{enumerate}
\item[\emph{(i)}] \emph{(RFT: style invariance)} The RFT updates
\eqref{eq:updates} preserve $\boldsymbol{\beta}\equiv\mathbf{0}$.
\item[\emph{(ii)}] \emph{(SFT: style drift)} For any $\mathbf{W}$ and prompt $\mathbf{x}$,
the SFT update \eqref{eq:updates} gives
\begin{equation}\label{eq:betastep}
\boldsymbol{\beta}_a^+ - \boldsymbol{\beta}_a
= \eta\left[
\indic{a=a^\star}
-\pi_{\mathbf{W}}(a\mid\mathbf{x})
-\frac{\indic{\sigma(a)=b^\star(\mathbf{x})}
-P_{\sigma(a)}(\mathbf{W},\mathbf{x})}{M}
\right]\mathbf{x}.
\end{equation}
If $\boldsymbol{\beta}=\mathbf{0}$,
the task-$t$ population SFT step
\eqref{eq:pop-updates} satisfies, for every $a\in\cA_b$,
\begin{equation}\label{eq:pop-drift-full}
\begin{aligned}
\E_t[\boldsymbol{\beta}_a^+-\boldsymbol{\beta}_a]
&=
\eta\Bigl(\rho_{t,b}(a)-\frac1{M}\Bigr)
\left(
\frac12\boldsymbol{\mu}_t+\frac{2b-1}{\sqrt{2\pi}}\mathbf{u}
\right),
\\
\text{so that}\qquad
\E_t[\boldsymbol{\beta}_a^+-\boldsymbol{\beta}_a]^{\perp}
&=\frac{\eta}{2}\Bigl(\rho_{t,b}(a)-\frac1{M}\Bigr)\boldsymbol{\mu}_t.
\end{aligned}
\end{equation}
\end{enumerate}
\end{theorem}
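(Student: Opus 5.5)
The plan is to compute the style update directly from the channel definitions \eqref{eq:channels}. Since $\boldsymbol{\beta}_a=\mathbf{w}_a-\frac1M\sum_{a'\in\cA_{\sigma(a)}}\mathbf{w}_{a'}$ is linear in $\mathbf{W}$, any update $\mathbf{w}_a^+=\mathbf{w}_a+\eta g_a\mathbf{x}$ gives
\[
\boldsymbol{\beta}_a^+-\boldsymbol{\beta}_a=\eta\Bigl(g_a-\tfrac1M\sum_{a'\in\cA_{\sigma(a)}}g_{a'}\Bigr)\mathbf{x}.
\]
Both parts of the theorem reduce to evaluating the within-class centered coefficient $g_a-\bar g_{\sigma(a)}$.

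\emph{Part (i), RFT.} Here $g_a=\pi_{\mathbf{W}}(a\mid\mathbf{x})(R(a,\mathbf{x})-\bar R_{\mathbf{W}}(\mathbf{x}))$. The reward $R(a,\mathbf{x})$ depends on $a$ only through $\sigma(a)$. If $\boldsymbol{\beta}=\mathbf{0}$, then \eqref{eq:channel-policy} shows that $\pi_{\mathbf{W}}(a\mid\mathbf{x})=P_{\sigma(a)}(\mathbf{W},\mathbf{x})/M$ is constant within each class. So $g_a$ is constant on each $\cA_b$, its centered version vanishes, and $\boldsymbol{\beta}^+=\mathbf{0}$. Induction over steps (for any prompt sequence, and likewise for the population update, which is an average of such steps) gives the invariance.

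\emph{Part (ii), SFT per-sample.} Here $g_a=\indic{a=a^\star}-\pi_{\mathbf{W}}(a\mid\mathbf{x})$. Summing over $a'\in\cA_b$ gives $\indic{a^\star\in\cA_b}-P_b$. Because the teacher is always correct, $a^\star\in\cA_b$ if and only if $b=b^\star(\mathbf{x})$. Dividing by $M$ yields exactly \eqref{eq:betastep}.

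\emph{Part (ii), population.} With $\boldsymbol{\beta}=\mathbf{0}$, the terms $\pi_{\mathbf{W}}(a\mid\mathbf{x})=P_b/M$ and $P_b/M$ cancel in \eqref{eq:betastep}. Taking the expectation over $a^\star\sim q_t$ then leaves
\[
\eta\,\indic{b^\star(\mathbf{x})=b}\bigl(\rho_{t,b}(a)-\tfrac1M\bigr)\mathbf{x}.
\]
The $\mathbf{W}$-dependence disappears, which is the key simplification. It remains to compute $\E_t[\indic{b^\star(\mathbf{x})=b}\,\mathbf{x}]$ for $\mathbf{x}\sim N(\boldsymbol{\mu}_t,\mathbf{I}_p)$ with $\inner{\boldsymbol{\mu}_t}{\mathbf{u}}=0$. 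Decompose $\mathbf{x}=z\mathbf{u}+\mathbf{x}^\perp$, where $z\sim N(0,1)$ is independent of $\mathbf{x}^\perp\sim N(\boldsymbol{\mu}_t,\mathbf{I}-\mathbf{u}\mathbf{u}^\top)$.
\begin{itemize}
\item The off-axis part contributes $\Prob(\text{event})\,\boldsymbol{\mu}_t=\tfrac12\boldsymbol{\mu}_t$.
\item The on-axis part contributes $\E[z\indic{z>0}]=1/\sqrt{2\pi}$ for $b=1$ and $-1/\sqrt{2\pi}$ for $b=0$, i.e. $\frac{2b-1}{\sqrt{2\pi}}\mathbf{u}$.
\end{itemize}
This gives \eqref{eq:pop-drift-full}, and projecting out $\mathbf{u}$ gives the off-axis formula. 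The boundary $z=0$ has measure zero, so ties can be ignored.

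The main obstacle is mild bookkeeping rather than analysis. We must confirm that the teacher-expectation step uses $q_t$ conditioned on the same $\mathbf{x}$ before averaging over $\mathbf{x}$. We must also confirm that the cancellation of the policy terms requires $\boldsymbol{\beta}=\mathbf{0}$ exactly, which is where that hypothesis enters.
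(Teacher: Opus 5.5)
Your proposal is correct and follows the paper's proof essentially step for step. It argues class-constancy of the RFT increment under $\boldsymbol{\beta}=\mathbf{0}$ for (i), and obtains \eqref{eq:betastep} by subtracting the class-averaged SFT row update; the population formula then comes from the same cancellation of the policy-centering term when $\boldsymbol{\beta}=\mathbf{0}$, the teacher expectation conditional on $\mathbf{x}$, and the Gaussian moment $\E_t[\indic{b^\star(\mathbf{x})=b}\,\mathbf{x}]=\tfrac12\boldsymbol{\mu}_t+\tfrac{2b-1}{\sqrt{2\pi}}\mathbf{u}$ computed through the independent decomposition $\mathbf{x}=z\mathbf{u}+\mathbf{x}^\perp$. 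Your extra remark that the population RFT step also preserves $\boldsymbol{\beta}=\mathbf{0}$, being an average of class-constant increments, is a small valid addition to the paper's per-sample induction.
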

\end{samepage}
\noindent The proof of Theorem~\ref{thm:style} is in Appendix~\ref{app:proof-style}.

Theorem~\ref{thm:style} gives us intuition for why SFT can be more prone
to forgetting than RFT: starting from a class-uniform policy
($\boldsymbol{\beta}=\mathbf{0}$), SFT can develop
off-axis population drift along the task mean by fitting the teacher's style
preferences, while RFT preserves within-class symmetry.
A special case is when either teachers have no within-class style
preferences (i.e., $\rho_{t,b}(a)=1/M$) or task means are zero
(i.e., $\boldsymbol{\mu}_t=\mathbf{0}$), in which case both methods
avoid off-axis population style drift.
When drift does arise, it can shift the semantic decision boundary
through $\xi_{\boldsymbol{\beta}}$ in \eqref{eq:decision}.
In Section~\ref{sec:separation}, we show that this mechanism causes
forgetting under explicit conditions.

\subsection{From Style Drift to Semantic Error}\label{sec:separation}

We now compare how SFT and RFT retain a previously learned task when
trained on the next one. Both methods start from the same checkpoint,
which perfectly classifies the previous task and whose policy is uniform
within each semantic class. The following theorem establishes a
finite-time separation in semantic error on both the previous and
current tasks.

\begin{theorem}[Population error separation from a perfectly fitted checkpoint]
\label{thm:separation}
Under the setup of Section~\ref{sec:setting}, suppose that:
\begin{enumerate}
\item[\emph{(a)}] \emph{(Common start)} SFT and RFT start from the same
checkpoint $\mathbf{W}_1$, which fits the previous task perfectly and
has no style preference:
\[
\mathcal{L}_1(\mathbf{W}_1)=0,
\qquad \boldsymbol{\beta}(\mathbf{W}_1)=\mathbf{0}.
\]
\item[\emph{(b)}] \emph{(Next task)} The next task has a nonzero 
mean $\boldsymbol{\mu}_2\neq\mathbf{0}$, and its teacher's
style preference is
stronger in one semantic class than in the other, after relabeling the two
classes if necessary:
$\Delta_\rho:=\max_{a\in\cA_1}\rho_{2,1}(a)-\max_{a\in\cA_0}\rho_{2,0}(a)>0$.
\item[\emph{(c)}] \emph{(Training)} Each method runs $k_2\ge1$
population updates \eqref{eq:pop-updates} on task $2$ with step size
$\eta_2>0$; write $\tau_2:=k_2\eta_2$ for the effective training time and
$\mathbf{W}_2$ for the resulting checkpoint.
\end{enumerate}
\pagebreak[3]
Then:
\begin{enumerate}
\item[\emph{(i)}] \emph{(RFT preserves perfect accuracy on both tasks)}
For every $\eta_2>0$ and $k_2\ge1$,
\[
\mathcal{L}_1(\mathbf{W}_2)=\mathcal{L}_2(\mathbf{W}_2)=0.
\]
\item[\emph{(ii)}] \emph{(SFT incurs semantic error on both tasks)}
There is a threshold $\tau_0>0$, given in \eqref{eq:tau0} that depends only on
$\norm{\boldsymbol{\mu}_2}$, $\Delta_\rho$, and $M$, such that for every
$0<\tau_2\le\tau_0$ and $r\in\{1,2\}$,
\[
\mathcal{L}_r(\mathbf{W}_2)
\ge\int_0^\infty\phi(t)\Phi\!\left(
\frac{\inner{\boldsymbol{\mu}_r}{\boldsymbol{\mu}_2}}{\norm{\boldsymbol{\mu}_2}}
-\frac{4(\norm{\Delta\boldsymbol{\alpha}(\mathbf{W}_1)}+2\tau_2)t}{\norm{\boldsymbol{\mu}_2}\Delta_\rho\tau_2}
\right)\,\dd t>0,
\]
where $\phi$ and $\Phi$ denote the standard Gaussian density and distribution
function, respectively.
\end{enumerate}
\end{theorem}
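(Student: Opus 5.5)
The plan treats the two methods separately, reducing both to the channel decomposition \eqref{eq:channels}. The first step is common to both. Since $\boldsymbol{\beta}(\mathbf{W}_1)=\mathbf{0}$, the decision rule \eqref{eq:decision} reduces to $\hat b=\indic{\inner{\Delta\boldsymbol{\alpha}}{\mathbf{x}}>0}$. The prompt distribution $N(\boldsymbol{\mu}_1,\mathbf{I}_p)$ has full support, so $\mathcal{L}_1(\mathbf{W}_1)=0$ forces $\sign\inner{\Delta\boldsymbol{\alpha}}{\mathbf{x}}=\sign\inner{\mathbf{u}}{\mathbf{x}}$ almost everywhere. This gives $\Delta\boldsymbol{\alpha}(\mathbf{W}_1)=c_0\mathbf{u}$ with $c_0>0$.

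\emph{RFT.} I would argue by induction over the $k_2$ population steps that $\boldsymbol{\beta}=\mathbf{0}$ and $\Delta\boldsymbol{\alpha}=c\mathbf{u}$ with $c>0$ are preserved. The first property is Theorem~\ref{thm:style}(i), applied pointwise in $\mathbf{x}$ and then averaged. Given both properties, Theorem~\ref{thm:rate}(ii) shows the population semantic step is a positive multiple of $\mathbf{u}$, so $c$ stays positive and in fact increases. The decision boundary is therefore exactly $\{\inner{\mathbf{u}}{\mathbf{x}}=0\}$ at $\mathbf{W}_2$. Hence $\mathcal{L}_1(\mathbf{W}_2)=\mathcal{L}_2(\mathbf{W}_2)=0$, and indeed the error is zero for every task.

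\emph{SFT.} I would combine an upper bound on the semantic margin with a lower bound on the off-axis style drift, and then carry out a Gaussian computation.

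\textbf{Semantic margin.} From the per-step identity \eqref{eq:semantic-steps} (taking expectations), each population step changes $\Delta\boldsymbol{\alpha}$ by a vector of norm at most $(2\eta_2/M)\norm{\E_t[g(\mathbf{x})\mathbf{x}]}$ with $|g|\le1$. I will bound this so that $\norm{\Delta\boldsymbol{\alpha}(\mathbf{W}_2)}\le\norm{\Delta\boldsymbol{\alpha}(\mathbf{W}_1)}+2\tau_2$. Only the component that acts on $\inner{\mathbf{u}}{\mathbf{x}}$ actually matters for the constant; a Cauchy--Schwarz estimate against the relevant coordinate should suffice.

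\textbf{Style drift.} Starting from $\boldsymbol{\beta}=\mathbf{0}$, the first step is exactly \eqref{eq:pop-drift-full}. Each later step differs from that expression by a Lipschitz perturbation in $\boldsymbol{\beta}$ and $\Delta\boldsymbol{\alpha}$, with both arguments of size $O(\tau_2)$ relative to the first step. Summing over steps gives
\[
\boldsymbol{\beta}_a(\mathbf{W}_2)^\perp=\tfrac{\tau_2}{2}\bigl(\rho_{2,b}(a)-\tfrac1M\bigr)\boldsymbol{\mu}_2+\mathbf{e}_a,\qquad \norm{\mathbf{e}_a}\le C\tau_2^2,
\]
with $C$ depending on $\norm{\boldsymbol{\mu}_2}$ and $M$, together with an on-axis part of size $O(\tau_2)$. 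I expect this to be the main obstacle. One needs a clean Lipschitz bound on the SFT drift map in $(\boldsymbol{\beta},\Delta\boldsymbol{\alpha})$ under Gaussian prompts, and $\tau_0$ must be chosen so that $C\tau_2^2\le\tfrac14\Delta_\rho\norm{\boldsymbol{\mu}_2}\tau_2$. This also requires $\Delta\boldsymbol{\alpha}$ to enter only through quantities that are bounded uniformly, which is what makes $\tau_0$ depend only on $\norm{\boldsymbol{\mu}_2}$, $\Delta_\rho$ and $M$. I would first try bounding $\norm{\nabla}$ of $\pi$-dependent terms by $\E\norm{\mathbf{x}}^2$, since softmax Jacobians are bounded by $1$.

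\textbf{Error event.} Let $a_1=\argmax_{a\in\cA_1}\rho_{2,1}(a)$. Then
\[
\xi_{\boldsymbol{\beta}}(\mathbf{x})\ge\inner{\boldsymbol{\beta}_{a_1}}{\mathbf{x}}-\max_{a\in\cA_0}\inner{\boldsymbol{\beta}_a}{\mathbf{x}}.
\]
Write $z=\inner{\mathbf{u}}{\mathbf{x}}$ and $w=\inner{\boldsymbol{\mu}_2}{\mathbf{x}}/\norm{\boldsymbol{\mu}_2}$. Because $\boldsymbol{\mu}_r\perp\mathbf{u}$, under task $r$ these are independent with $z\sim N(0,1)$ and $w\sim N(m_r,1)$, where $m_r=\inner{\boldsymbol{\mu}_r}{\boldsymbol{\mu}_2}/\norm{\boldsymbol{\mu}_2}$. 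On the region $w>0$, the leading drift makes $\xi_{\boldsymbol{\beta}}\ge\tfrac{\tau_2}{2}\Delta_\rho\norm{\boldsymbol{\mu}_2}w$, which becomes $\ge\tfrac{\tau_2}{4}\Delta_\rho\norm{\boldsymbol{\mu}_2}w$ after absorbing the $\mathbf{e}_a$ errors. The on-axis contributions of $\Delta\boldsymbol{\alpha}$ and $\boldsymbol{\beta}$ total at most $(\norm{\Delta\boldsymbol{\alpha}(\mathbf{W}_1)}+2\tau_2)|z|$. On $z=-t<0$ the true class is $0$, and $D>0$ whenever
\[
w>\frac{4(\norm{\Delta\boldsymbol{\alpha}(\mathbf{W}_1)}+2\tau_2)\,t}{\norm{\boldsymbol{\mu}_2}\Delta_\rho\tau_2}.
\]
Integrating $\Prob(w>\cdot)=\Phi(m_r-\cdot)$ against $\phi(t)$ over $t>0$ gives the stated lower bound. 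The bound is strictly positive because the integrand is positive for small $t$.
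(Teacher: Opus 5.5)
\textbf{Gap: the SFT iterates are never confined to $V=\spn\{\mathbf u,\hat{\boldsymbol\mu}_2\}$.} Your RFT argument matches the paper's. So does the overall SFT architecture: a first-order expansion of the drift from the $\boldsymbol\beta=\mathbf 0$ checkpoint with an $O(\tau_2^2)$ remainder, a crude bound on on-axis coefficients, and the event $\{z<0,\ w>\kappa|z|\}$. The failure is at ``after absorbing the $\mathbf e_a$ errors.'' A norm bound $\norm{\mathbf e_a}\le C\tau_2^2$ does not let you dominate $\inner{\mathbf e_a}{\mathbf x}$ by a multiple of $w$. Write $\hat{\boldsymbol\mu}_2=\boldsymbol\mu_2/\norm{\boldsymbol\mu_2}$ and $\mathbf x^\perp=w\hat{\boldsymbol\mu}_2+\boldsymbol\xi_V$, with $\boldsymbol\xi_V$ the projection onto $V^\perp$. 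Unless you know $\mathbf e_a\in V$, the term $\inner{\mathbf e_a}{\boldsymbol\xi_V}$ is a nondegenerate Gaussian independent of $(z,w)$, unbounded on your event. So $D_{\mathbf W_2}>0$ is not guaranteed on $\{z<0,\ w>\kappa|z|\}$, and the stated bound, which is exactly the probability of that event, does not follow.

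\textbf{Missing off-axis semantic term.} The same problem hits $\Delta\boldsymbol\alpha^\perp$, which you drop from $D_{\mathbf W}$ entirely. Its first population step is parallel to $\mathbf u$ by Theorem~\ref{thm:rate}(ii). But once $\boldsymbol\beta\neq\mathbf 0$ has a $\hat{\boldsymbol\mu}_2$ component, the class probabilities depend on $w$, and $\Delta\boldsymbol\alpha$ acquires an off-axis component.

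\textbf{How to close it.} Prove that population SFT keeps every row in $V$. First subtract the common row mean of $\mathbf W_1$, which changes neither the policy nor the updates. If all rows lie in $V$, then $q_2(a\mid\mathbf x)-\pi_a$ depends on $\mathbf x$ only through $(z,w)$. Since $\boldsymbol\xi_V$ is mean-zero and independent of $(z,w)$, the $V^\perp$ component of the population increment vanishes. Alternatively, use equivariance under rotations fixing both $\mathbf u$ and $\boldsymbol\mu_2$, as in Lemma~\ref{lem:rotation-equivariance}. Then every row is $\zeta_a\mathbf u+\omega_a\hat{\boldsymbol\mu}_2$. All off-axis effects, including your $\mathbf e_a$ and $\Delta\boldsymbol\alpha^\perp$, enter only through $w$, and the comparison on the error event goes through.

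\textbf{The invariance also removes the $p$-dependence of $\tau_0$.} Your proposed Lipschitz estimate via $\E\norm{\mathbf x}^2=p+\norm{\boldsymbol\mu_2}^2$ would make $C$, and hence $\tau_0$, depend on $p$. That contradicts the claim that $\tau_0$ depends only on $\norm{\boldsymbol\mu_2}$, $\Delta_\rho$ and $M$. The paper instead linearizes the $2M\times 2$ coefficient matrix with rows $(\zeta_a,\omega_a)$ in the coordinates $\mathbf r=(z,w)$. There $\E_2\norm{\mathbf r}^2=\norm{\boldsymbol\mu_2}^2+2$ and $\sup_{\boldsymbol\Theta}\norm{\Delta(\boldsymbol\Theta)}_F\le\sqrt{2M}\,(\norm{\boldsymbol\mu_2}+2)$, which produce \eqref{eq:tau0}.

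\textbf{On-axis constant.} As you suspect, only the on-axis coefficient matters. The clean way to get it is row by row rather than through $\Delta\boldsymbol\alpha$ and $\boldsymbol\beta$ separately. The bound $|\zeta_a^+-\zeta_a|=\eta_2|\E_2[(q_2(a\mid\mathbf x)-\pi_a)z]|\le\eta_2$ controls the on-axis parts of both channels at once. It gives $|\zeta_{a_1}-\zeta_{a'}|\le\norm{\Delta\boldsymbol\alpha(\mathbf W_1)}+2\tau_2$, exactly the constant in the statement.
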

\noindent The proof of Theorem~\ref{thm:separation} is in Appendix~\ref{app:proof-separation}.

Theorem \ref{thm:separation}(ii) implies that SFT
introduces errors on two tasks that were initially classified perfectly.
The lower bound measures the probability of a region where style drift
overturns the correct semantic prediction. This error region grows with stronger teacher's preference $\Delta_\rho$, smaller initial semantic margin $\norm{\Delta\boldsymbol{\alpha}(\mathbf{W}_1)}$, or longer training time $\tau_2$. 
Old-task prompts are more likely to fall into this region
when $\boldsymbol{\mu}_1$ has a larger projection onto the direction of the new task mean $\boldsymbol{\mu}_2$. 

Together with \eqref{eq:experiment-metrics}, these error bounds imply strictly
positive forgetting and generalization error for SFT when
$0<\tau_2\le\tau_0$, whereas both metrics remain zero for RFT. This separation is, however, a finite-time phenomenon: under continued
population SFT updates with a sufficiently small step size,
both task errors converge to zero as $\tau_2\to\infty$,
as shown in Appendices~\ref{app:limiting-behavior}
and~\ref{app:two-task-dynamics}.

\subsection{Finite-sample off-axis energy}\label{sec:offaxis-energy}

The separation result in Theorem~\ref{thm:separation} is established
under population updates. In finite-sample training, prompt sampling
can introduce additional deviations from the shared semantic direction
$\mathbf u$. In the next theorem, we provide a bound on the expected
off-axis energy $\E\norm{\mathbf W^\perp}_F^2$ of both SFT and RFT.

\begin{theorem}[Off-axis parameter energy of SFT and RFT]
\label{thm:offaxis-energy}
Under the setup of Section~\ref{sec:setting}, let $t(s)$ denote the task
index at step $s$. Run the SFT and RFT
updates \eqref{eq:updates} from $\mathbf W(0)=\mathbf0$ with constant
step size $\eta>0$.
At each step $s$, independently draw
$\mathbf x_s\sim N(\boldsymbol\mu_{t(s)},\mathbf I_p)$, with a fresh
teacher demonstration for SFT. For every $N\ge0$, both methods satisfy
\begin{equation}\label{eq:offaxis-energy-decomposition}
\norm{\mathbf W^\perp(N)}_F^2
=\norm{\boldsymbol\beta^\perp(N)}_F^2
+\frac M2\norm{\Delta\boldsymbol\alpha^\perp(N)}^2,
\end{equation}
and the following bounds hold:
\begin{enumerate}
\item[\emph{(i)}] \emph{(RFT)}
\begin{equation}\label{eq:rft-offaxis-energy}
\E\norm{\mathbf W_{\mathrm{RFT}}^\perp(N)}_F^2
\le\frac{\eta^2}{8M}\left(N(p-1)
+\sum_{s=0}^{N-1}\norm{\boldsymbol\mu_{t(s)}}^2\right).
\end{equation}
\item[\emph{(ii)}] \emph{(SFT)}
\begin{equation}\label{eq:offaxis-energy}
\E\norm{\mathbf W_{\mathrm{SFT}}^\perp(N)}_F^2
\le 2\eta^2\left(\sqrt{N(p-1)}
+\sum_{s=0}^{N-1}\norm{\boldsymbol\mu_{t(s)}}\right)^2.
\end{equation}
\end{enumerate}
\end{theorem}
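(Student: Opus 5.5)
The proof has three parts: the energy identity \eqref{eq:offaxis-energy-decomposition}, which holds for both methods; the RFT bound, which uses Theorem~\ref{thm:style}(i) and a one-step recursion for $\Delta\boldsymbol\alpha^\perp$; and the SFT bound, which uses a cruder $L^2$ triangle inequality.

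\emph{Energy identity.} Both updates in \eqref{eq:updates} have row coefficients that sum to zero over $a\in\cA$. For SFT, $\sum_a(\indic{a=a^\star}-\pi_{\mathbf W}(a\mid\mathbf x))=0$; for RFT, $\sum_a\pi_{\mathbf W}(a\mid\mathbf x)(R(a,\mathbf x)-\bar R_{\mathbf W}(\mathbf x))=0$. Since $\mathbf W(0)=\mathbf 0$, it follows that $\sum_a\mathbf w_a=\mathbf 0$ for all $N$, i.e. $\boldsymbol\alpha^{(0)}+\boldsymbol\alpha^{(1)}=\mathbf 0$. Next I expand $\norm{\mathbf W^\perp}_F^2=\sum_b\sum_{a\in\cA_b}\norm{\boldsymbol\alpha^{(b)\perp}+\boldsymbol\beta_a^\perp}^2$. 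The cross terms vanish because $\sum_{a\in\cA_b}\boldsymbol\beta_a=\mathbf 0$. This leaves $\norm{\boldsymbol\beta^\perp}_F^2+M(\norm{\boldsymbol\alpha^{(0)\perp}}^2+\norm{\boldsymbol\alpha^{(1)\perp}}^2)$. Using $\boldsymbol\alpha^{(1)}=-\boldsymbol\alpha^{(0)}=\Delta\boldsymbol\alpha/2$, the second term equals $\frac M2\norm{\Delta\boldsymbol\alpha^\perp}^2$, which gives \eqref{eq:offaxis-energy-decomposition}.

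\emph{RFT.} By Theorem~\ref{thm:style}(i), $\boldsymbol\beta\equiv\mathbf 0$, so it suffices to control $\frac M2\E\norm{\Delta\boldsymbol\alpha^\perp(N)}^2$. With $\boldsymbol\beta=\mathbf 0$ we have $P_1=\mathrm{sig}(\inner{\Delta\boldsymbol\alpha}{\mathbf x})$, where $\mathrm{sig}$ is the logistic function. Theorem~\ref{thm:rate}(i) then gives the increment $\frac{2\eta}{M}s\,P(1-P)\,\mathbf x$, where $s=2b^\star(\mathbf x)-1$.

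First, the on-axis coordinate $\gamma:=\inner{\Delta\boldsymbol\alpha}{\mathbf u}$ is nondecreasing. Indeed, $s\inner{\mathbf u}{\mathbf x}=|\inner{\mathbf u}{\mathbf x}|\ge0$, so $\gamma(s)\ge0$ for all steps.

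Next I expand
\[
\norm{\mathbf v^+}^2=\norm{\mathbf v}^2+2\inner{\mathbf v}{\mathrm{inc}^\perp}+\norm{\mathrm{inc}^\perp}^2,
\]
where $\mathbf v=\Delta\boldsymbol\alpha^\perp$ and $\mathrm{inc}^\perp$ is the off-axis increment. I will show the conditional expectation of the cross term is $\le0$. Write $z=\inner{\mathbf u}{\mathbf x}\sim N(0,1)$, which is independent of $\mathbf x^\perp$ since $\inner{\boldsymbol\mu_t}{\mathbf u}=0$, and set $y=\inner{\mathbf v}{\mathbf x^\perp}$. The cross term is proportional to $y\,\E_z[\sign(z)\,\mathrm{sig}'(\gamma z+y)]$. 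The logistic derivative is even and decreasing in $|\cdot|$, and $z$ is symmetric with $\gamma\ge0$. Hence $\E_z[\sign(z)\mathrm{sig}'(\gamma z+y)]$ has the sign of $-y$, and the product is $\le0$.

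For the quadratic term, $P(1-P)\le\frac14$ gives $\norm{\mathrm{inc}^\perp}\le\frac{\eta}{2M}\norm{\mathbf x^\perp}$, and $\E\norm{\mathbf x_s^\perp}^2=p-1+\norm{\boldsymbol\mu_{t(s)}}^2$. Summing over steps and multiplying by $\frac M2$ yields \eqref{eq:rft-offaxis-energy} with constant $\frac M2\cdot\frac{\eta^2}{4M^2}=\frac{\eta^2}{8M}$.

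\emph{SFT.} The per-step increment is $\eta\,\mathbf c_s(\mathbf x_s^\perp)^\top$ with $c_a=\indic{a=a^\star}-\pi(a\mid\mathbf x)$. These coefficients satisfy $\norm{\mathbf c_s}^2\le(1-\pi_{a^\star})^2+\sum_{a\ne a^\star}\pi_a^2\le2$.

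I split $\mathbf x_s^\perp=\boldsymbol\mu_{t(s)}+\mathbf g_s$, where $\mathbf g_s$ is standard Gaussian in $\mathbf u^\perp$. By Minkowski's inequality in $L^2(\Omega;\R^{2M\times p})$, the mean part contributes at most $\sqrt2\,\eta\sum_s\norm{\boldsymbol\mu_{t(s)}}$. For the noise part $\sum_s\eta\,\mathbf c_s\mathbf g_s^\top$, I aim to show $\E\norm{\cdot}_F^2\le2\eta^2N(p-1)$, after which squaring gives \eqref{eq:offaxis-energy}.

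This noise step is the main obstacle: $\mathbf c_s$ depends on $\mathbf g_s$ through $\pi_{\mathbf W}$, so the summands are not martingale differences. My first attempt will be the Doob decomposition $\mathbf c_s\mathbf g_s^\top=\E_{s-1}[\mathbf c_s\mathbf g_s^\top]+\mathbf D_s$. The martingale part $\mathbf D_s$ then has orthogonal increments with second moment at most $2(p-1)$. For the predictable part, Stein's lemma gives $\E_{s-1}[\mathbf c_s\mathbf g_s^\top]=\E_{s-1}[\partial_{\mathbf g}\mathbf c_s]=-\E_{s-1}[(\mathrm{diag}\,\pi-\pi\pi^\top)\mathbf W^\perp]$. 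This is a contraction of $\mathbf W^\perp$ toward zero, and I would try to show its cross term with the running sum is nonpositive, mirroring the RFT sign argument. If that sign argument fails, I would fall back to absorbing the predictable part into the mean term, losing only constants.
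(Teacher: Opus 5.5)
Your energy identity and your RFT argument are correct and match the paper's proof: zero row sums, a monotone on-axis coordinate $\gamma\ge0$, the sign of $\E_z[\sign(z)\,\mathrm{sig}'(\gamma z+y)]$ against $y$, and $P(1-P)\le\frac14$.

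The gap is in the SFT part. You split $\mathbf W^\perp(N)=\mathbf M_N+\mathbf N_N$ with Minkowski, where $\mathbf M_N=\eta\sum_s\mathbf c_s\boldsymbol\mu_{t(s)}^\top$ and $\mathbf N_N=\eta\sum_s\mathbf c_s\mathbf g_s^\top$. The Stein term you correctly compute is $\E_{s-1}[\mathbf c_s\mathbf g_s^\top]=-\mathbf H_s\mathbf R_s$, with $\mathbf H_s=\E_{s-1}[\operatorname{diag}\boldsymbol\pi_s-\boldsymbol\pi_s\boldsymbol\pi_s^\top]\succeq0$. It contracts the \emph{full} off-axis matrix $\mathbf R_s=\mathbf M_s+\mathbf N_s$, not the noise part. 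In the recursion for $\E\norm{\mathbf N_{s+1}}_F^2$ its cross term is $-2\eta\inner{\mathbf N_s}{\mathbf H_s\mathbf N_s}_F-2\eta\inner{\mathbf N_s}{\mathbf H_s\mathbf M_s}_F$. The second piece has no sign: it is positive whenever $\mathbf N_s$ is anti-aligned with $\mathbf M_s$. The damping of the mean part is booked into $\mathbf N$, so it tends to produce exactly that configuration. Hence your target $\E\norm{\mathbf N_N}_F^2\le2\eta^2N(p-1)$ is not justified, and nothing in your plan controls it when the means are large.

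The fallback does not rescue the argument. Bounding $\norm{\eta\sum_s\mathbf H_s\mathbf R_s}_F\le\frac\eta2\sum_s\norm{\mathbf R_s}_F$ leads either to a Gronwall factor of order $e^{\eta N/2}$, or, via $\norm{\mathbf R_s}_F\le\sqrt2\eta\sum_{j<s}\norm{\mathbf x_j^\perp}$, to a term of order $\eta^2N^2$. That term does not vanish as $\eta\to0$ with $N\eta$ fixed, even when all $\boldsymbol\mu_t=\mathbf 0$. Neither loses only constants, and neither gives \eqref{eq:offaxis-energy}.

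The missing step is to use the dissipation before splitting. Expand $\norm{\mathbf R_{s+1}}_F^2$ with the full increment $\eta(\mathbf e_{a_s^\star}-\boldsymbol\pi_s)(\mathbf x_s^\perp)^\top$. Conditionally on the past, the teacher term contributes $\bar{\mathbf q}_s\boldsymbol\mu_{t(s)}^\top$, because it depends only on $\sign(z_s)$, which is independent of $\mathbf x_s^\perp$. Stein gives $\E_{s-1}[\boldsymbol\pi_s(\mathbf x_s^\perp)^\top]=\bar{\boldsymbol\pi}_s\boldsymbol\mu_{t(s)}^\top+\mathbf H_s\mathbf R_s$. The cross term is therefore $-2\eta\operatorname{tr}(\mathbf R_s^\top\mathbf H_s\mathbf R_s)+2\eta\inner{\mathbf R_s}{(\bar{\mathbf q}_s-\bar{\boldsymbol\pi}_s)\boldsymbol\mu_{t(s)}^\top}_F$. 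Its first part is now $\le0$, and its second is at most $2\sqrt2\,\eta\norm{\boldsymbol\mu_{t(s)}}\norm{\mathbf R_s}_F$. Using your bound $\norm{\mathbf c_s}^2\le2$ and Jensen, $e_s:=\E\norm{\mathbf R_s}_F^2$ satisfies
\[
e_{s+1}\le\bigl(\sqrt{e_s}+\sqrt2\,\eta\norm{\boldsymbol\mu_{t(s)}}\bigr)^2+2\eta^2(p-1).
\]
Induction with $\sqrt{(A+\sqrt V+c)^2+d}\le A+c+\sqrt{V+d}$ for $A,V,c,d\ge0$ then gives $\sqrt{e_N}\le\sqrt2\,\eta\sum_{s<N}\norm{\boldsymbol\mu_{t(s)}}+\sqrt{2\eta^2N(p-1)}$, which is \eqref{eq:offaxis-energy}. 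This is your mean-plus-noise Minkowski bound, but obtained inside the scalar recursion, after the Stein term has been discarded against the whole $\mathbf R_s$.
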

\noindent The proof of Theorem~\ref{thm:offaxis-energy} is in
Appendix~\ref{app:proof-offaxis-energy}.

Theorem~\ref{thm:offaxis-energy} shows that, even under finite sampling,
RFT admits a smaller upper bound on the expected off-axis energy
than SFT.
The following remark discusses how these bounds reflect sampling
effects and relate to the population dynamics.

\begin{remark}[Sampling effects and population drift]
We discuss the following cases as $\eta\to0$ with effective training time
$\tau=N\eta$ fixed. This keeps the cumulative step size unchanged while
increasing the number of independent samples.
\begin{itemize}
\item[\emph{(i)}] \emph{Zero task means.}
If $\boldsymbol\mu_t=\mathbf0$ for all tasks, the RFT and SFT bounds
reduce to $\tau\eta(p-1)/(8M)$ and $2\tau\eta(p-1)$, respectively.
Both bounds vanish in the limit $\eta\to0$. This is consistent with
Lemma~\ref{lem:rotation-equivariance}, which shows that the expected
parameters remain on the semantic axis for both methods when all
task means are zero.
\item[\emph{(ii)}] \emph{Bounded nonzero task means.}
The RFT bound \eqref{eq:rft-offaxis-energy} remains $\mathcal{O}(\tau\eta)$
and vanishes, consistent with its on-axis population dynamics
(Theorems~\ref{thm:rate}(ii) and~\ref{thm:style}(i)). The SFT bound \eqref{eq:offaxis-energy}, however,
does not vanish, since the term $\eta\sum_{s=0}^{N-1}\norm{\boldsymbol\mu_{t(s)}}$
remains nonzero. This is consistent with the population style drift dynamics identified in
Theorem~\ref{thm:style}(ii).
\end{itemize}

\end{remark}

Together, these results show how fitting style can interfere with shared
semantic knowledge even when all demonstrations are semantically correct.
At a common policy and prompt, SFT and RFT share a semantic update
direction, but RFT with exact policy gradients preserves an initially zero
style channel, whereas SFT can develop off-axis population style drift
that causes finite-time forgetting.
In addition, when sampling is finite, it introduces additional off-axis fluctuations, for which
RFT admits a smaller energy upper bound than SFT.
We examine these mechanisms empirically in section \ref{sec:experiments}.

\section{Experiments}\label{sec:experiments}

The experiments simulate the model of Section~\ref{sec:setting} and
examine three aspects of the learning dynamics: (1) RFT maintains zero off-axis energy under population updates
(Theorem~\ref{thm:rate} and ~\ref{thm:style}(i)); (2) nonzero task means and nonuniform
teacher preferences induce SFT population off-axis style drift and can
lead to semantic forgetting (Theorems~\ref{thm:style}(ii)
and~\ref{thm:separation}); (3) finite-sample training produces additional off-axis parameter fluctuations, even when
population off-axis drift is absent; we measure the final off-axis energy, whose expectation under Gaussian prompts is bounded in Theorem~\ref{thm:offaxis-energy}.

\paragraph{Teacher parameterization.} To construct the teachers for the
simulations, we assign each token a fixed style embedding
$\mathbf{h}_a\in\R^d$ describing how its semantic answer is expressed.
A task- and class-dependent teacher preference vector
$\mathbf{v}_{t,b}\in\R^d$ determines the within-class distribution:
\begin{equation}\label{eq:rho}
\rho_{t,b}(a) =
\frac{e^{\inner{\mathbf{v}_{t,b}}{\mathbf{h}_a}}}
{\sum_{a'\in\cA_b} e^{\inner{\mathbf{v}_{t,b}}{\mathbf{h}_{a'}}}},
\qquad a\in\cA_b .
\end{equation}
Expressions with larger compatibility scores
$\inner{\mathbf{v}_{t,b}}{\mathbf{h}_a}$ receive higher probability;
$\mathbf{v}_{t,b}=\mathbf{0}$ gives a uniform teacher within the class.
Figure~\ref{fig:reward-cells} illustrates such teacher preferences schematically.

\paragraph{Setup.}
We train SFT and RFT from $\mathbf W=\mathbf0$ on $T=8$
sequential tasks, with prompt dimension $p=24$, two semantic
classes of $M=6$ tokens each, and style dimension $d=3$.
We draw $\mathbf u$ uniformly on the unit sphere, $M$ style
embeddings $\mathbf h_1,\dots,\mathbf h_M\sim N(\mathbf0,\mathbf I_3/3)$
assigned to the tokens of each class, and teacher preference vectors
$\mathbf v_{t,b}\sim N(\mathbf0,36\,\mathbf I_3)$.\footnote{The
variance $1/d$ normalizes the style embeddings to unit scale,
$\E\norm{\mathbf h_j}^2=1$, so that the within-class logits
$\inner{\mathbf v_{t,b}}{\mathbf h_a}$ in \eqref{eq:rho} have standard
deviation controlled by $\mathbf v_{t,b}$ alone, here about $6$.
This makes the teachers strongly nonuniform while leaving their
preferences varied across tasks and classes.}
Prompts are Gaussian as in Section~\ref{sec:setting}, with task means
$\boldsymbol\mu_t\sim N(\mathbf0,\mathbf I_p-\mathbf u\mathbf u^\top)$,
so that $\boldsymbol\mu_t\perp\mathbf u$.
These task parameters are sampled once and held fixed across runs.

\paragraph{Training and evaluation.}
For finite-sample runs, we sample $n=25$ prompts per task and process
them in a single pass in random order, averaging over $50$ seeds that
control the training prompts, demonstrations, and their order.
For population runs, we approximate \eqref{eq:pop-updates} by averaging
updates over $10{,}000$ fresh prompts at each step, take $25$ steps per
task to match the number of finite-sample updates, and average over
$10$ seeds that control the Monte Carlo prompts.
In both settings, we sweep $\eta\in[10^{-4},0.5]$ on a log-spaced grid
and evaluate each task on a fixed set of $1{,}000$ held-out prompts
from its own distribution.
We report generalization error $\cG^{(T)}$, forgetting $\cF^{(T)}$, and
the final off-axis parameter energy $\norm{\mathbf W_T^\perp}_F^2$,
all averaged across seeds.

\begin{figure}[t]
\centering
\includegraphics[width=\linewidth]{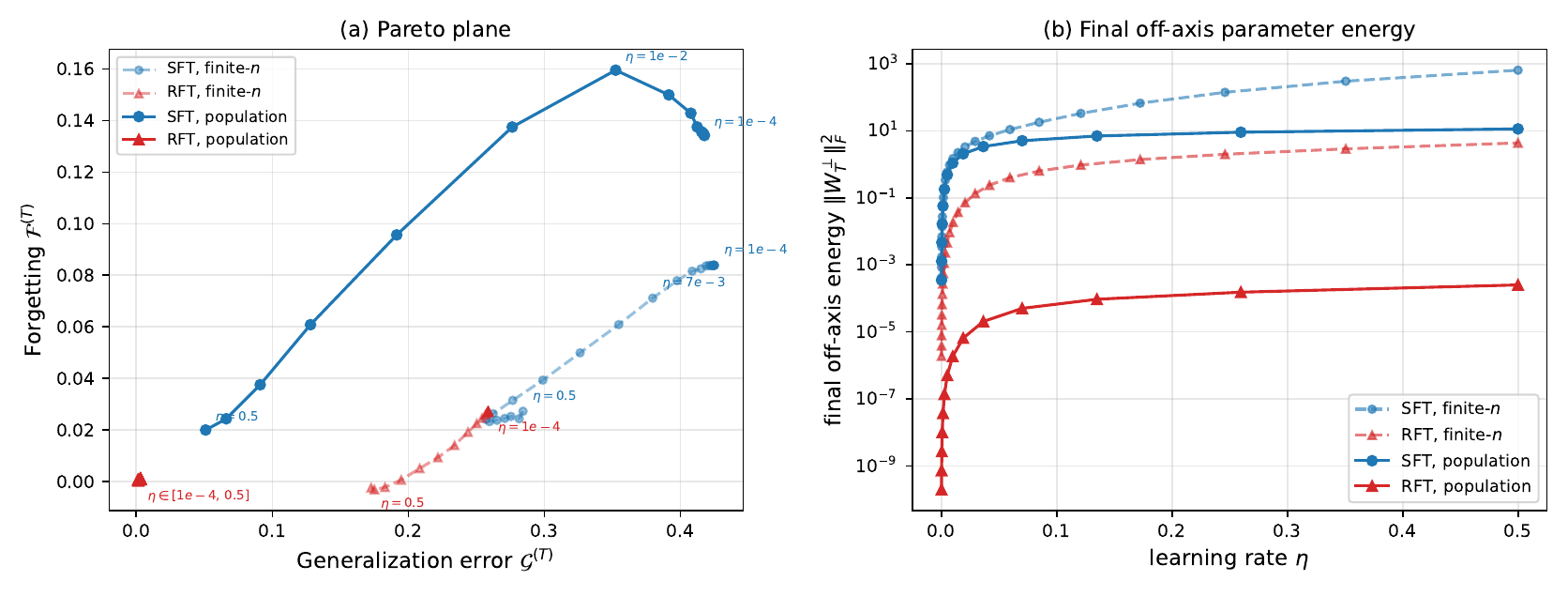}
\caption{\textbf{SFT and RFT under population and finite-sample updates.}
\emph{(a)} Generalization--forgetting plane, 
for finite-sample (dashed, faint) and population (solid) updates; selected
learning rates are annotated. \emph{(b)} Final off-axis parameter energy
$\norm{\mathbf W_T^\perp}_F^2$, averaged across seeds, versus learning rate,
with a logarithmic energy axis.}
\label{fig:pareto-pop}
\end{figure}

\paragraph{Results.} The main results are shown in Figure~\ref{fig:pareto-pop}.
In panel~(a), population RFT maintains zero generalization error
and forgetting throughout the learning-rate sweep, whereas population SFT
exhibits strictly positive values at every tested learning rate.
The persistence of this separation under population updates is consistent
with off-axis style drift driven by nonzero task means and nonuniform
teacher preferences (Theorem~\ref{thm:separation}).
In the finite-sample runs, prompt sampling introduces off-axis
fluctuations that also raise RFT's error
(Theorem~\ref{thm:offaxis-energy}), yet RFT still achieves lower
generalization error and forgetting than SFT at every matched learning rate.
Panel~(b) gives a complementary view: population SFT develops off-axis
parameter energy, while population RFT remains close to zero (Theorem~\ref{thm:rate} and Theorem~\ref{thm:style}), with small
residual energy from Monte Carlo prompt sampling.
Finite-sample updates produce additional off-axis energy, which grows
with the learning rate and is one to two orders of magnitude larger for
SFT than for RFT.
The observed energy gap qualitatively agrees with the smaller upper
bound available for RFT in Theorem~\ref{thm:offaxis-energy}.

\paragraph{Control and ablation experiments.}
We provide additional experiments in Appendix~\ref{app:controls} to
isolate the roles of task means and teacher style preferences and to
relax the assumptions behind RFT's style invariance.
With either zero task means (Figure~\ref{fig:pareto-pop-zero}) or
uniform teachers (Figure~\ref{fig:pareto-pop-uniform}), the population
SFT--RFT gap vanishes, while finite-sample SFT still accumulates more
off-axis energy and higher generalization error than RFT; replacing
Gaussian prompts by Laplace prompts leaves the results nearly unchanged
(Figure~\ref{fig:pareto-pop-laplace}).
We also compare the methods under corrupted rewards and under REINFORCE
with finitely many sampled actions, and find that population RFT retains
low forgetting in both cases; see Figure~\ref{fig:ablation-forgetting}
for details.

\section{Conclusion}\label{sec:conclusion}
This work provides intuition on why supervised fine-tuning forgets more
than reinforcement fine-tuning, even when every demonstration is
semantically correct, through a case study on classification tasks with
a linear-softmax policy. We show that semantically correct supervision can still induce forgetting when training also fits task-specific style preferences. In our model, fitting these preferences can shift the semantic decision boundary and disrupt previously learned predictions, even in the absence of sampling noise. RFT avoids this source of interference by preserving within-class symmetry under exact policy gradients from a symmetric initialization. The finite-sample bounds and simulations further characterize the off-axis deviations introduced by prompt sampling.
\paragraph{Limitations.}
Our theoretical analysis focuses on a tractable setting with a
single-token linear-softmax model and a shared semantic rule across
tasks. Future work could extend the analysis to more general settings such as nonlinear policies, multi-token outputs, and tasks with evolving semantic rules.

\bibliography{references}
\bibliographystyle{references}

\clearpage
\appendix
\numberwithin{equation}{section}
\makeatletter
\@addtoreset{theorem}{section}
\@addtoreset{figure}{section}
\@addtoreset{table}{section}
\makeatother
\setcounter{theorem}{0}
\renewcommand{\thetheorem}{\thesection.\arabic{theorem}}
\setcounter{figure}{0}
\renewcommand{\thefigure}{\thesection.\arabic{figure}}
\setcounter{table}{0}
\renewcommand{\thetable}{\thesection.\arabic{table}}

\section{Literature Review}\label{sec:related}
\paragraph{Empirical findings on SFT and RFT.}
Recent studies compare how supervised and reinforcement fine-tuning
affect generalization and forgetting in foundation models.
\citet{chu2025sft} find better generalization under RL to changes in
reasoning rules and visual conditions, while \citet{lai2026reinforcement}
report less forgetting of earlier tasks and general capabilities under
RFT across sequential multimodal tasks.
Studies of on-policy and model-generated supervision further highlight
the role of the supervision distribution. \citet{chen2026retaining}
show that approximately on-policy supervision reduces forgetting under
SFT. \citet{zhang2026data} link the retention benefits of SFT on filtered
model-generated trajectories to the magnitude and direction of
example-induced updates. In self-distillation fine-tuning,
\citet{shenfeld2026selfdistillation} use a demonstration-conditioned
teacher to supervise student-generated trajectories without an explicit
reward function, reducing forgetting during skill learning.

\paragraph{Distributional explanations of the SFT--RFT gap.}
\citet{shenfeld2026razor} relate forgetting to
the KL divergence from the initial policy and contrast SFT's fitting
of an external supervision distribution with online RL's
information-projection bias toward small policy changes.
Building on the output-mixture simulations of \citet{chen2026retaining},
\citet{balasubramanian2026quantitative} compare divergence-based
post-training objectives by quantifying loss of old-component mass
and drift within an old component. Their forward-KL objective fits
new-mode data, whereas their reverse-KL objective fits a mixture that
includes old behavior; the forgetting comparison therefore depends
on both the objectives and their target distributions.

\paragraph{Forgetting in continual-learning models.}
Complementary theoretical work examines how task structure,
optimization, and architecture affect forgetting.
In overparameterized linear regression,
\citet{lin2023theory} derive expressions for expected forgetting and
generalization error that reveal the effects of model size, task
similarity, and task order. For sequential stochastic gradient descent
in linear regression with a shared ground-truth parameter,
\citet{ding2024understanding} quantify how task covariances, ordering,
and step size affect forgetting. In linear classification with
jointly separable tasks, \citet{evron2023continual} connect weakly
regularized training to sequential max-margin learning and show that
forgetting can arise despite joint separability. In an overparameterized
mixture-of-experts regression setting, \citet{li2025theory} relate
expert specialization and routing to forgetting and generalization.

\paragraph{Explicit methods for mitigating forgetting.}
Replay methods maintain access to earlier experience:
\citet{rolnick2019experience}
combine on-policy learning with off-policy updates and behavioral
cloning from a replay buffer to reduce forgetting in continual
reinforcement learning. Parameter regularization penalizes changes
to weights estimated to be important for past tasks. Elastic weight
consolidation uses the diagonal Fisher information to weight these
penalties \citep{kirkpatrick2017overcoming}, while synaptic intelligence
estimates parameter importance from contributions to loss reduction
along the learning trajectory \citep{zenke2017continual}.
For parameter isolation, \citet{mallya2018packnet} iteratively prune a
network, freeze weights allocated to earlier tasks, and train subsequent
tasks using the remaining capacity.
They provide complementary ways to control forgetting beyond the
choice of SFT or RFT.

\section{Supporting lemmas}\label{app:supporting}

\begin{lemma}[SFT and RFT gradient updates]\label{lem:learning-rules}
Fix a prompt $\mathbf{x}$. A stochastic-gradient step on
\eqref{eq:sft-loss} using $a^\star\sim q_t(\cdot\mid\mathbf{x})$ and an
exact gradient step on \eqref{eq:rft-loss}, both with step size $\eta$,
yield the SFT and RFT updates in \eqref{eq:updates}, respectively.
\end{lemma}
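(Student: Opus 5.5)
The plan is to compute the gradient of each per-prompt loss with respect to a single row $\mathbf{w}_a$ and then take a descent step of size $\eta$. Write $z_{a'}=\inner{\mathbf{w}_{a'}}{\mathbf{x}}$ for the logits. The only ingredient is the standard softmax Jacobian
\[
\frac{\partial \pi_{\mathbf{W}}(a'\mid\mathbf{x})}{\partial z_a}
=\pi_{\mathbf{W}}(a'\mid\mathbf{x})\bigl(\indic{a'=a}-\pi_{\mathbf{W}}(a\mid\mathbf{x})\bigr),
\]
combined with the chain rule $\partial z_a/\partial\mathbf{w}_a=\mathbf{x}$. Since $\partial z_{a'}/\partial\mathbf{w}_a=\mathbf{0}$ for $a'\neq a$, the gradient with respect to $\mathbf{w}_a$ is simply $(\partial\mathcal{L}/\partial z_a)\,\mathbf{x}$.

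For SFT, the stochastic step replaces the expectation in \eqref{eq:sft-loss} by the single draw $a^\star$. The sampled loss is then $-\log\pi_{\mathbf{W}}(a^\star\mid\mathbf{x})=-z_{a^\star}+\log\sum_{a'}e^{z_{a'}}$. Differentiating in $z_a$ gives $-\indic{a=a^\star}+\pi_{\mathbf{W}}(a\mid\mathbf{x})$. The descent step $\mathbf{w}_a^+=\mathbf{w}_a-\eta\nabla_{\mathbf{w}_a}$ therefore yields the SFT line of \eqref{eq:updates}. Its expectation over $a^\star\sim q_t(\cdot\mid\mathbf{x})$ is the exact gradient, which is what justifies calling it a stochastic-gradient step.

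For RFT, the loss \eqref{eq:rft-loss} is the finite sum $-\sum_{a'}\pi_{\mathbf{W}}(a'\mid\mathbf{x})R(a',\mathbf{x})$. The reward $R$ does not depend on $\mathbf{W}$, so the Jacobian gives
\[
\frac{\partial\mathcal{L}_{\mathrm{RFT}}}{\partial z_a}=-\sum_{a'}R(a',\mathbf{x})\,\pi_{\mathbf{W}}(a'\mid\mathbf{x})\bigl(\indic{a'=a}-\pi_{\mathbf{W}}(a\mid\mathbf{x})\bigr).
\]
This equals $-\pi_{\mathbf{W}}(a\mid\mathbf{x})\bigl(R(a,\mathbf{x})-\bar R_{\mathbf{W}}(\mathbf{x})\bigr)$. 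Multiplying by $\mathbf{x}$ and taking the descent step gives the RFT line of \eqref{eq:updates}.

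No step is a genuine obstacle; the proof is routine. The only point needing care is regrouping the RFT sum to expose the baseline $\bar R_{\mathbf{W}}(\mathbf{x})$.
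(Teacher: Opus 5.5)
Your proof is correct and follows the paper's argument. For SFT you differentiate the log-softmax of the sampled loss; for RFT you combine the softmax Jacobian with the reward sum and regroup to expose the baseline $\bar R_{\mathbf{W}}(\mathbf{x})$. The only difference is cosmetic: you pass through the logits $\inner{\mathbf{w}_a}{\mathbf{x}}$ via the chain rule, whereas the paper differentiates the softmax with respect to $\mathbf{w}_a$ directly by the quotient rule.
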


\begin{proof}
For SFT, given a sampled demonstration $a^\star\sim q_t(\cdot\mid\mathbf{x})$, define
\[
\ell_{\mathrm{SFT}}(\mathbf{W};\mathbf{x},a^\star)
=-\log\pi_{\mathbf{W}}(a^\star\mid\mathbf{x}).
\]
Its expectation over the teacher is exactly \eqref{eq:sft-loss}; hence its
gradient is an unbiased stochastic gradient of
$\mathcal{L}_{\mathrm{SFT}}$. To compute its gradient, first expand the
log-softmax as
\[
\log\pi_{\mathbf{W}}(a'\mid\mathbf{x})
=\inner{\mathbf{w}_{a'}}{\mathbf{x}}
-\log\sum_{\widetilde a\in\cA}
\exp\!\bigl(\inner{\mathbf{w}_{\widetilde a}}{\mathbf{x}}\bigr).
\]
Differentiating with respect to $\mathbf{w}_a$ gives
\begin{align}
\nabla_{\mathbf{w}_a}\log\pi_{\mathbf{W}}(a'\mid\mathbf{x})
&=\indic{a=a'}\mathbf{x}
-\frac{\exp\!\bigl(\inner{\mathbf{w}_a}{\mathbf{x}}\bigr)}
{\sum_{\widetilde a\in\cA}
\exp\!\bigl(\inner{\mathbf{w}_{\widetilde a}}{\mathbf{x}}\bigr)}
\mathbf{x}\notag\\
&=\bigl(\indic{a=a'}-\pi_{\mathbf{W}}(a\mid\mathbf{x})\bigr)\mathbf{x}.
\label{eq:app-softmax-log-derivative}
\end{align}
Therefore gradient descent on $\ell_{\mathrm{SFT}}$ gives
\eqref{eq:updates}.

For RFT, expand \eqref{eq:rft-loss} as
\[
\mathcal{L}_{\mathrm{RFT}}(\mathbf{W};\mathbf{x})
=-\sum_{a'\in\cA}\pi_{\mathbf{W}}(a'\mid\mathbf{x})R(a',\mathbf{x}).
\]
We have
\begin{align}
\nabla_{\mathbf{w}_a}\pi_{\mathbf{W}}(a'\mid\mathbf{x})
&=\nabla_{\mathbf{w}_a}
\frac{\exp\!\bigl(\inner{\mathbf{w}_{a'}}{\mathbf{x}}\bigr)}
{\sum_{\widetilde a\in\cA}
\exp\!\bigl(\inner{\mathbf{w}_{\widetilde a}}{\mathbf{x}}\bigr)}
\notag\\
&=\frac{
\indic{a=a'}\exp\!\bigl(\inner{\mathbf{w}_{a'}}{\mathbf{x}}\bigr)
\left(\sum_{\widetilde a\in\cA}
\exp\!\bigl(\inner{\mathbf{w}_{\widetilde a}}{\mathbf{x}}\bigr)\right)
}{
\left(\sum_{\widetilde a\in\cA}
\exp\!\bigl(\inner{\mathbf{w}_{\widetilde a}}{\mathbf{x}}\bigr)\right)^2
}\mathbf{x}\notag\\[-2pt]
&\quad-
\frac{
\exp\!\bigl(\inner{\mathbf{w}_{a'}}{\mathbf{x}}\bigr)
\exp\!\bigl(\inner{\mathbf{w}_{a}}{\mathbf{x}}\bigr)
}{
\left(\sum_{\widetilde a\in\cA}
\exp\!\bigl(\inner{\mathbf{w}_{\widetilde a}}{\mathbf{x}}\bigr)\right)^2
}\mathbf{x}\notag\\
&=\frac{\exp\!\bigl(\inner{\mathbf{w}_{a'}}{\mathbf{x}}\bigr)}
{\sum_{\widetilde a\in\cA}
\exp\!\bigl(\inner{\mathbf{w}_{\widetilde a}}{\mathbf{x}}\bigr)}
\left(
\indic{a=a'}-
\frac{\exp\!\bigl(\inner{\mathbf{w}_{a}}{\mathbf{x}}\bigr)}
{\sum_{\widetilde a\in\cA}
\exp\!\bigl(\inner{\mathbf{w}_{\widetilde a}}{\mathbf{x}}\bigr)}
\right)\mathbf{x}\notag\\
&=\pi_{\mathbf{W}}(a'\mid\mathbf{x})
\bigl(\indic{a=a'}-\pi_{\mathbf{W}}(a\mid\mathbf{x})\bigr)\mathbf{x}.
\label{eq:app-softmax-jacobian-direct}
\end{align}
Consequently,
\begin{align}
-\nabla_{\mathbf{w}_a}\mathcal{L}_{\mathrm{RFT}}
&=\sum_{a'\in\cA}R(a',\mathbf{x})
\nabla_{\mathbf{w}_a}\pi_{\mathbf{W}}(a'\mid\mathbf{x})\notag\\
&=\sum_{a'\in\cA}R(a',\mathbf{x})
\pi_{\mathbf{W}}(a'\mid\mathbf{x})
\bigl(\indic{a=a'}-\pi_{\mathbf{W}}(a\mid\mathbf{x})\bigr)
\mathbf{x}\notag\\
&=\pi_{\mathbf{W}}(a\mid\mathbf{x})R(a,\mathbf{x})\mathbf{x}
-\pi_{\mathbf{W}}(a\mid\mathbf{x})
\sum_{a'\in\cA}\pi_{\mathbf{W}}(a'\mid\mathbf{x})
R(a',\mathbf{x})\mathbf{x}\notag\\
&=\pi_{\mathbf{W}}(a\mid\mathbf{x})
\bigl(R(a,\mathbf{x})-\bar R_{\mathbf{W}}(\mathbf{x})\bigr)\mathbf{x}.
\label{eq:app-rft-loss-gradient}
\end{align}
Gradient descent on $\mathcal{L}_{\mathrm{RFT}}$ therefore gives
\eqref{eq:updates}.
\end{proof}

\begin{lemma}[Correct classification with on-axis channels]
\label{lem:on-axis-classification}
Suppose the channels in \eqref{eq:channels} satisfy
$\Delta\boldsymbol{\alpha}=\gamma\mathbf{u}$ with $\gamma>0$ and
$\boldsymbol{\beta}_a=c_a\mathbf{u}$ for every $a\in\cA$.
Set $B:=\max_{a\in\cA}|c_a|$. If $\gamma>2B$, then the decision rule
\eqref{eq:decision} satisfies
\[
\hat b_{\mathbf{W}}(\mathbf{x})=b^\star(\mathbf{x})
\qquad\text{for every }\mathbf{x}\in\R^p.
\]
In particular, its decision boundary is
$\{\mathbf{x}:\inner{\mathbf{u}}{\mathbf{x}}=0\}$, and
$\mathcal{L}_t(\mathbf{W})=0$ for every task $t$.
\end{lemma}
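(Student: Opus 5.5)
The plan is to compute the decision statistic $D_{\mathbf W}(\mathbf x)$ from \eqref{eq:decision} directly under the on-axis hypothesis and show that its sign always agrees with the sign of $z:=\inner{\mathbf u}{\mathbf x}$. Substituting $\Delta\boldsymbol\alpha=\gamma\mathbf u$ and $\boldsymbol\beta_a=c_a\mathbf u$ gives $\inner{\Delta\boldsymbol\alpha}{\mathbf x}=\gamma z$ and $\inner{\boldsymbol\beta_a}{\mathbf x}=c_a z$. Hence
\[
\xi_{\boldsymbol\beta}(\mathbf x)=\max_{a\in\cA_1}c_a z-\max_{a\in\cA_0}c_a z .
\]
Each of the two maxima lies in $[-B|z|,\,B|z|]$, so $|\xi_{\boldsymbol\beta}(\mathbf x)|\le 2B|z|$. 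I will not try to evaluate the maxima, which depend on the sign of $z$; the bound is all that is needed.

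\emph{Sign argument.} We have $D_{\mathbf W}(\mathbf x)=\gamma z+\xi_{\boldsymbol\beta}(\mathbf x)$. If $z>0$, then $D_{\mathbf W}(\mathbf x)\ge(\gamma-2B)z>0$, so $\hat b_{\mathbf W}(\mathbf x)=1=b^\star(\mathbf x)$. If $z<0$, then $D_{\mathbf W}(\mathbf x)\le \gamma z+2B|z|=-(\gamma-2B)|z|<0$, so $\hat b_{\mathbf W}(\mathbf x)=0=b^\star(\mathbf x)$. If $z=0$, every score $\inner{\mathbf w_a}{\mathbf x}$ equals the common value $\inner{\boldsymbol\alpha^{(\sigma(a))}}{\mathbf x}$ plus zero. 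The two class values can still differ there, because only $\Delta\boldsymbol\alpha$ is assumed on-axis. In any case $D_{\mathbf W}(\mathbf x)=0$, so $\hat b_{\mathbf W}(\mathbf x)=0=b^\star(\mathbf x)$ under the convention $\indic{\cdot>0}$.

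This case analysis gives $\hat b_{\mathbf W}=b^\star$ everywhere, and shows that $D_{\mathbf W}$ vanishes exactly on the hyperplane $\{\inner{\mathbf u}{\mathbf x}=0\}$, which is the decision boundary. For the task error, $\mathcal L_t(\mathbf W)=1-a_t(\mathbf W)=0$ is immediate since the predictions agree pointwise. It also holds regardless of how ties are broken, because the hyperplane has Gaussian measure zero: under $N(\boldsymbol\mu_t,\mathbf I_p)$ the coordinate $\inner{\mathbf u}{\mathbf x}$ is a nondegenerate normal.

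\emph{Main obstacle.} The only delicate point is justifying the decision-rule identity \eqref{eq:decision} itself, namely that the argmax token lies in $\cA_1$ iff $D_{\mathbf W}(\mathbf x)>0$. I will verify this by noting that
\[
\max_{a\in\cA_b}\inner{\mathbf w_a}{\mathbf x}=\inner{\boldsymbol\alpha^{(b)}}{\mathbf x}+\max_{a\in\cA_b}\inner{\boldsymbol\beta_a}{\mathbf x},
\]
and that $D_{\mathbf W}(\mathbf x)$ is the difference of these two class-wise maxima. Ties at $D_{\mathbf W}=0$ are handled by the same measure-zero remark. Everything else is a direct consequence of the inequality $|\xi_{\boldsymbol\beta}|\le 2B|z|<\gamma|z|$ for $z\ne0$.
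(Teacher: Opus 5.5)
Your proof is correct and follows the paper's argument: you bound $|\xi_{\boldsymbol\beta}(\mathbf x)|\le 2B|z|$ and read off the sign of $D_{\mathbf W}(\mathbf x)=\gamma z+\xi_{\boldsymbol\beta}(\mathbf x)$. The paper merges your two nonzero cases into $zD_{\mathbf W}(\mathbf x)\ge(\gamma-2B)z^2>0$ and handles $z=0$ through the $\indic{\cdot>0}$ convention, exactly as you do. One small slip: at $z=0$ the two class values $\inner{\boldsymbol\alpha^{(b)}}{\mathbf x}$ cannot differ, since their difference is $\inner{\Delta\boldsymbol\alpha}{\mathbf x}=\gamma z=0$ (they are only possibly nonzero), but your conclusion $D_{\mathbf W}(\mathbf x)=0$ is unaffected.
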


\begin{proof}
Write $z:=\inner{\mathbf{u}}{\mathbf{x}}$.
Since $\inner{\boldsymbol{\beta}_a}{\mathbf{x}}=c_a z$,
\eqref{eq:decision} gives
\[
D_{\mathbf{W}}(\mathbf{x})=\gamma z+\xi_{\boldsymbol{\beta}}(\mathbf{x}),
\qquad
|\xi_{\boldsymbol{\beta}}(\mathbf{x})|
\le 2\max_{a\in\cA}|c_a z|=2B|z|.
\]
For $z\ne0$, it follows that
\[
zD_{\mathbf{W}}(\mathbf{x})
\ge\gamma z^2-|z|\,|\xi_{\boldsymbol{\beta}}(\mathbf{x})|
\ge(\gamma-2B)z^2>0.
\]
Thus $D_{\mathbf{W}}(\mathbf{x})$ has the same sign as $z$.
When $z=0$, all style scores and $D_{\mathbf{W}}(\mathbf{x})$ vanish,
so both \eqref{eq:decision} and \eqref{eq:reward} assign class $0$.
Hence the two rules agree everywhere and have the stated boundary.
\end{proof}

\begin{lemma}[Gaussian projection decomposition]\label{lem:gaussian-projection}
Under the setup of Section~\ref{sec:setting}, fix a task $t$ and let
$\mathbf{x}\sim N(\boldsymbol{\mu}_t,\mathbf{I}_p)$. Define
\[
z=\inner{\mathbf{u}}{\mathbf{x}},
\qquad
\boldsymbol{\xi}^{\perp}
=\bigl(\mathbf{I}_p-\mathbf{u}\mathbf{u}^\top\bigr)
(\mathbf{x}-\boldsymbol{\mu}_t).
\]
Then
\[
z\sim N(0,1),
\qquad
\boldsymbol{\xi}^{\perp}\sim
N(\mathbf{0},\mathbf{I}_p-\mathbf{u}\mathbf{u}^\top),
\qquad
z\indep\boldsymbol{\xi}^{\perp},
\]
and
\[
\mathbf{x}=z\mathbf{u}+\boldsymbol{\mu}_t+\boldsymbol{\xi}^{\perp}.
\]
\end{lemma}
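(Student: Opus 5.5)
The plan is to write $\mathbf{x}$ in coordinates adapted to $\mathbf{u}$ and read off each distributional claim from elementary properties of Gaussian vectors under linear maps. Let $\mathbf{P}:=\mathbf{u}\mathbf{u}^\top$ and $\mathbf{Q}:=\mathbf{I}_p-\mathbf{P}$; both are symmetric and idempotent, with $\mathbf{P}\mathbf{Q}=\mathbf{0}$ because $\norm{\mathbf{u}}=1$.

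First I will prove the decomposition identity, which is purely algebraic. Write $\mathbf{x}=\mathbf{P}\mathbf{x}+\mathbf{Q}\mathbf{x}$. We have $\mathbf{P}\mathbf{x}=z\mathbf{u}$. The standing assumption $\inner{\boldsymbol{\mu}_t}{\mathbf{u}}=0$ gives $\mathbf{Q}\boldsymbol{\mu}_t=\boldsymbol{\mu}_t$. Therefore $\mathbf{Q}\mathbf{x}=\mathbf{Q}(\mathbf{x}-\boldsymbol{\mu}_t)+\boldsymbol{\mu}_t=\boldsymbol{\xi}^\perp+\boldsymbol{\mu}_t$, and hence $\mathbf{x}=z\mathbf{u}+\boldsymbol{\mu}_t+\boldsymbol{\xi}^\perp$.

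Next I will handle the distributions. Put $\mathbf{g}:=\mathbf{x}-\boldsymbol{\mu}_t\sim N(\mathbf{0},\mathbf{I}_p)$. Then $z=\inner{\mathbf{u}}{\mathbf{g}}+\inner{\mathbf{u}}{\boldsymbol{\mu}_t}=\inner{\mathbf{u}}{\mathbf{g}}$ and $\boldsymbol{\xi}^\perp=\mathbf{Q}\mathbf{g}$. So the pair $(z,\boldsymbol{\xi}^\perp)$ is a linear image of the single standard Gaussian $\mathbf{g}$, and is therefore jointly Gaussian with mean zero. The remaining task is to compute its covariances. The variance of $z$ is $\mathbf{u}^\top\mathbf{u}=1$, so $z\sim N(0,1)$. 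The covariance of $\boldsymbol{\xi}^\perp$ is $\mathbf{Q}\mathbf{Q}^\top=\mathbf{Q}$. The cross-covariance is $\E[\boldsymbol{\xi}^\perp z]=\mathbf{Q}\,\E[\mathbf{g}\mathbf{g}^\top]\mathbf{u}=\mathbf{Q}\mathbf{u}=\mathbf{0}$.

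Finally, independence follows because jointly Gaussian vectors with zero cross-covariance are independent. This applies even though $\boldsymbol{\xi}^\perp$ has a degenerate covariance: one can see it through the characteristic function, which factorizes when the cross-covariance block vanishes. There is no real obstacle. The only point needing care is to invoke joint Gaussianity, not just marginal Gaussianity, before concluding independence from uncorrelatedness, and the common linear map from $\mathbf{g}$ supplies exactly that.
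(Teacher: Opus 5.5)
Your proposal is correct and follows essentially the same route as the paper: center at $\boldsymbol{\mu}_t$, write $z$ and $\boldsymbol{\xi}^{\perp}$ as linear images of the same standard Gaussian, compute the variances and the vanishing cross-covariance, and conclude independence from joint Gaussianity. The only cosmetic difference is in the decomposition step. You obtain it from $\mathbf{x}=\mathbf{P}\mathbf{x}+\mathbf{Q}\mathbf{x}$ together with $\mathbf{Q}\boldsymbol{\mu}_t=\boldsymbol{\mu}_t$, whereas the paper splits the centered noise directly as $\boldsymbol{\varepsilon}=z\mathbf{u}+\boldsymbol{\xi}^{\perp}$.
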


\begin{proof}
Let $\boldsymbol{\varepsilon}=\mathbf{x}-\boldsymbol{\mu}_t
\sim N(\mathbf{0},\mathbf{I}_p)$. Since
$\inner{\boldsymbol{\mu}_t}{\mathbf{u}}=0$,
\[
z=\mathbf{u}^\top\boldsymbol{\varepsilon},
\qquad
\boldsymbol{\xi}^{\perp}
=\bigl(\mathbf{I}_p-\mathbf{u}\mathbf{u}^\top\bigr)
\boldsymbol{\varepsilon}.
\]
Thus $z$ and $\boldsymbol{\xi}^{\perp}$ are jointly Gaussian. Using
$\norm{\mathbf{u}}=1$,
\[
\operatorname{Var}(z)=\mathbf{u}^\top\mathbf{u}=1,
\qquad
\operatorname{Cov}(\boldsymbol{\xi}^{\perp})
=\mathbf{I}_p-\mathbf{u}\mathbf{u}^\top,
\]
and their cross-covariance is
\[
\operatorname{Cov}(\boldsymbol{\xi}^{\perp},z)
=\bigl(\mathbf{I}_p-\mathbf{u}\mathbf{u}^\top\bigr)\mathbf{u}
=\mathbf{0}.
\]
Joint Gaussianity therefore implies
$z\indep\boldsymbol{\xi}^{\perp}$. Finally,
\[
\boldsymbol{\varepsilon}
=\mathbf{u}\mathbf{u}^\top\boldsymbol{\varepsilon}
+\bigl(\mathbf{I}_p-\mathbf{u}\mathbf{u}^\top\bigr)
\boldsymbol{\varepsilon}
=z\mathbf{u}+\boldsymbol{\xi}^{\perp},
\]
which gives the stated decomposition of $\mathbf{x}$.
\end{proof}

\begin{lemma}[Vectors invariant under $G_{\mathbf u}$]
\label{lem:fixed-vectors}
Let $\mathbf{u}\in\mathbb{R}^p$ be nonzero and define
\begin{equation}\label{eq:app-Gu}
G_{\mathbf{u}}=\{\mathbf{\mathbf{Q}}\in\mathrm{O}(p):\mathbf{\mathbf{Q}}\mathbf{u}=\mathbf{u}\}.
\end{equation}
Then $\mathbf{\mathbf{Q}}\mathbf{v}=\mathbf{v}$ for every $\mathbf{\mathbf{Q}}\in G_{\mathbf{u}}$ if and only if
$\mathbf{v}\in\operatorname{span}(\mathbf{u})$.
\end{lemma}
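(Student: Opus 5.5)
The backward direction is immediate: if $\mathbf v=c\mathbf u$, then $\mathbf Q\mathbf v=c\mathbf Q\mathbf u=c\mathbf u=\mathbf v$ for every $\mathbf Q\in G_{\mathbf u}$. So the work is entirely in the forward direction, which I would prove by contrapositive. Assume $\mathbf v\notin\operatorname{span}(\mathbf u)$; I will exhibit one $\mathbf Q\in G_{\mathbf u}$ with $\mathbf Q\mathbf v\neq\mathbf v$.

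Decompose $\mathbf v=\frac{\inner{\mathbf v}{\mathbf u}}{\norm{\mathbf u}^2}\mathbf u+\mathbf v^\perp$, where $\mathbf v^\perp$ is orthogonal to $\mathbf u$. This is the projection from the Notations paragraph, adjusted to the normalization $\norm{\mathbf u}^2$ since the lemma only assumes $\mathbf u\neq\mathbf 0$. Because $\mathbf v\notin\operatorname{span}(\mathbf u)$, we have $\mathbf v^\perp\neq\mathbf 0$.

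Take $\mathbf Q$ to be the reflection across the hyperplane orthogonal to $\mathbf v^\perp$:
\[
\mathbf Q=\mathbf I_p-2\,\frac{\mathbf v^\perp(\mathbf v^\perp)^\top}{\norm{\mathbf v^\perp}^2}.
\]
I then check three properties.
\begin{itemize}
\item[(1)] $\mathbf Q$ is orthogonal. It is symmetric and satisfies $\mathbf Q^2=\mathbf I_p$, which is the standard Householder computation.
\item[(2)] $\mathbf Q\mathbf u=\mathbf u$. This holds because $\inner{\mathbf v^\perp}{\mathbf u}=0$, so the rank-one term annihilates $\mathbf u$. Hence $\mathbf Q\in G_{\mathbf u}$.
\item[(3)] $\mathbf Q\mathbf v=\mathbf v-2\mathbf v^\perp$. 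This follows from linearity together with $\mathbf Q\mathbf v^\perp=-\mathbf v^\perp$ and the fact from (2) that the $\mathbf u$-component is fixed.
\end{itemize}
Since $\mathbf v^\perp\neq\mathbf 0$, property (3) gives $\mathbf Q\mathbf v\neq\mathbf v$, which completes the contrapositive.

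The only step needing care is the edge case $p=1$. There, every nonzero vector lies in $\operatorname{span}(\mathbf u)$, so the forward direction holds vacuously. The construction above never needs an extra dimension, since the reflection is built from $\mathbf v^\perp$ itself, so no case split is actually required. I do not expect any real obstacle beyond stating the reflection correctly.
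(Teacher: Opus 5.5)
Your proof is correct and matches the paper's argument: the paper also writes $\mathbf v=c\mathbf u+\mathbf v^\perp$ and uses the Householder reflection $\mathbf I_p-2\mathbf e\mathbf e^\top$ with $\mathbf e=\mathbf v^\perp/\norm{\mathbf v^\perp}$, which fixes $\mathbf u$ and sends $\mathbf v$ to $c\mathbf u-\mathbf v^\perp\neq\mathbf v$. The only differences are that the paper phrases this as a contradiction rather than a contrapositive, and that your $p=1$ remark is unnecessary, since there the conclusion holds trivially for every $\mathbf v$, including $\mathbf v=\mathbf 0$.
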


\begin{proof}
First suppose $\mathbf{v}=c\mathbf{u}$. For every
$\mathbf{\mathbf{Q}}\in G_{\mathbf{u}}$,
\[
\mathbf{\mathbf{Q}}\mathbf{v}=c\mathbf{Q}\mathbf{u}=c\mathbf{u}=\mathbf{v},
\]
which proves one direction.

Conversely, suppose $\mathbf{\mathbf{Q}}\mathbf{v}=\mathbf{v}$ for every
$\mathbf{\mathbf{Q}}\in G_{\mathbf{u}}$. Decompose $\mathbf{v}$ as
\[
\mathbf{v}=c\mathbf{u}+\mathbf{v}^\perp,
\qquad
c=\frac{\inner{\mathbf{u}}{\mathbf{v}}}{\norm{\mathbf{u}}^2},
\qquad
\mathbf{v}^\perp\perp\mathbf{u}.
\]
Suppose, toward a contradiction, that
$\mathbf{v}^\perp\ne\mathbf{0}$. Define
\[
\mathbf{e}=\frac{\mathbf{v}^\perp}{\norm{\mathbf{v}^\perp}},
\qquad
\mathbf{\mathbf{Q}}=\mathbf{I}_p-2\mathbf{e}\mathbf{e}^\top.
\]
Because $\norm{\mathbf{e}}=1$,
\begin{align*}
\mathbf{\mathbf{Q}}^\top \mathbf{\mathbf{Q}}
&=(\mathbf{I}_p-2\mathbf{e}\mathbf{e}^\top)^2\\
&=\mathbf{I}_p-4\mathbf{e}\mathbf{e}^\top
+4\mathbf{e}(\mathbf{e}^\top\mathbf{e})\mathbf{e}^\top\\
&=\mathbf{I}_p.
\end{align*}
Hence $\mathbf{\mathbf{Q}}\in\mathrm{O}(p)$. Since $\mathbf{e}\perp\mathbf{u}$, we also have
\[
\mathbf{\mathbf{Q}}\mathbf{u}
=\mathbf{u}-2\mathbf{e}\mathbf{e}^\top\mathbf{u}
=\mathbf{u},
\]
and therefore $\mathbf{\mathbf{Q}}\in G_{\mathbf{u}}$. On the other hand,
\begin{align*}
\mathbf{\mathbf{Q}}\mathbf{v}^\perp
&=\mathbf{v}^\perp
-2\mathbf{e}\mathbf{e}^\top\mathbf{v}^\perp\\
&=\mathbf{v}^\perp-2\mathbf{v}^\perp
=-\mathbf{v}^\perp.
\end{align*}
It follows that
\[
\mathbf{\mathbf{Q}}\mathbf{v}
=c\mathbf{Q}\mathbf{u}+\mathbf{\mathbf{Q}}\mathbf{v}^\perp
=c\mathbf{u}-\mathbf{v}^\perp
\ne c\mathbf{u}+\mathbf{v}^\perp
=\mathbf{v},
\]
contradicting the assumption that every $\mathbf{\mathbf{Q}}\in G_{\mathbf{u}}$ fixes
$\mathbf{v}$. Thus $\mathbf{v}^\perp=\mathbf{0}$ and
$\mathbf{v}\in\operatorname{span}(\mathbf{u})$.
\end{proof}

\begin{lemma}[Rotation equivariance of zero-mean trajectories]
\label{lem:rotation-equivariance}
Assume $\boldsymbol{\mu}_t=\mathbf{0}$ for every task and
$\mathbf{W}(0)=\mathbf{0}$. Fix a task sequence and suppose each step uses
a fresh prompt independent of the training history and, for SFT, a fresh
teacher draw conditional on that prompt. Let $G_{\mathbf{u}}$ be as defined in
\eqref{eq:app-Gu}; these are the orthogonal transformations that
leave $\mathbf{u}$ fixed and act only on its orthogonal complement
$\mathbf{u}^\perp$. Let $G_{\mathbf{u}}$ act rowwise on the parameter matrix,
$(\mathbf{\mathbf{Q}}\!\cdot\!\mathbf{W})_a=\mathbf{\mathbf{Q}}\mathbf{w}_a$.
Under either the SFT or RFT update, for every $\mathbf{\mathbf{Q}}\in G_{\mathbf{u}}$ and every
step $\tau$,
\begin{equation}\label{eq:app-trajectory-invariance}
\mathbf{\mathbf{Q}}\!\cdot\!\mathbf{W}(\tau)\stackrel{d}{=}\mathbf{W}(\tau).
\end{equation}
Consequently, for every token $a$,
\begin{equation}\label{eq:app-on-axis-expectations}
\E[\mathbf{w}_a(\tau)],\quad
\E[\Delta\boldsymbol{\alpha}(\tau)],\quad
\E[\boldsymbol{\beta}_a(\tau)]
\in\operatorname{span}(\mathbf{u}).
\end{equation}
\end{lemma}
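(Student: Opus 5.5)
The plan is to prove \eqref{eq:app-trajectory-invariance} by induction on $\tau$ through a coupling argument, and then read off \eqref{eq:app-on-axis-expectations} from Lemma~\ref{lem:fixed-vectors}.

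The key structural fact is that each one-step update map commutes with the rowwise action of any $\mathbf{Q}\in G_{\mathbf u}$. Write the SFT step as $\mathbf{W}^+=F_{\mathrm{SFT}}(\mathbf{W},\mathbf{x},a^\star)$ and the RFT step as $\mathbf{W}^+=F_{\mathrm{RFT}}(\mathbf{W},\mathbf{x})$, as in \eqref{eq:updates}. Since $\mathbf{Q}$ is orthogonal, $\inner{\mathbf{Q}\mathbf{w}_a}{\mathbf{Q}\mathbf{x}}=\inner{\mathbf{w}_a}{\mathbf{x}}$. Hence $\pi_{\mathbf{Q}\cdot\mathbf{W}}(\cdot\mid\mathbf{Q}\mathbf{x})=\pi_{\mathbf{W}}(\cdot\mid\mathbf{x})$. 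Since $\mathbf{Q}\mathbf{u}=\mathbf{u}$, we also have $\inner{\mathbf{u}}{\mathbf{Q}\mathbf{x}}=\inner{\mathbf{Q}^\top\mathbf{u}}{\mathbf{x}}=\inner{\mathbf{u}}{\mathbf{x}}$, using $\mathbf{Q}^\top\mathbf{u}=\mathbf{Q}^{-1}\mathbf{u}=\mathbf{u}$. Consequently $b^\star(\mathbf{Q}\mathbf{x})=b^\star(\mathbf{x})$, $R(a,\mathbf{Q}\mathbf{x})=R(a,\mathbf{x})$, and $q_t(\cdot\mid\mathbf{Q}\mathbf{x})=q_t(\cdot\mid\mathbf{x})$. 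Every scalar coefficient in \eqref{eq:updates} is therefore unchanged, while the vector factor $\mathbf{x}$ becomes $\mathbf{Q}\mathbf{x}$. This gives
\[
F(\mathbf{Q}\cdot\mathbf{W},\mathbf{Q}\mathbf{x},a^\star)=\mathbf{Q}\cdot F(\mathbf{W},\mathbf{x},a^\star),
\]
and the analogous identity holds for RFT.

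The induction runs as follows. The base case is immediate, since $\mathbf{Q}\cdot\mathbf{0}=\mathbf{0}$. Suppose $\mathbf{Q}\cdot\mathbf{W}(\tau)\stackrel{d}{=}\mathbf{W}(\tau)$, and let the fresh prompt $\mathbf{x}_\tau\sim N(\mathbf{0},\mathbf{I}_p)$ be independent of $\mathbf{W}(\tau)$. Rotation invariance of the standard Gaussian gives $\mathbf{Q}\mathbf{x}_\tau\stackrel{d}{=}\mathbf{x}_\tau$; this is the only place the hypothesis $\boldsymbol{\mu}_t=\mathbf{0}$ is used. By independence, the pair $(\mathbf{Q}\cdot\mathbf{W}(\tau),\mathbf{Q}\mathbf{x}_\tau)$ then has the same joint law as $(\mathbf{W}(\tau),\mathbf{x}_\tau)$.

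For SFT the teacher draw must be handled as well. Conditional on the prompt, $a^\star$ is drawn from $q_t(\cdot\mid\mathbf{x}_\tau)=q_t(\cdot\mid\mathbf{Q}\mathbf{x}_\tau)$. The triple $(\mathbf{Q}\cdot\mathbf{W},\mathbf{Q}\mathbf{x},a^\star)$ therefore has the same law as $(\mathbf{W},\mathbf{x},a^\star)$, with $a^\star$ generated by the same kernel in both cases. Applying the measurable map $F$ and using equivariance yields
\[
\mathbf{Q}\cdot\mathbf{W}(\tau+1)=F(\mathbf{Q}\cdot\mathbf{W}(\tau),\mathbf{Q}\mathbf{x}_\tau,a^\star)\stackrel{d}{=}F(\mathbf{W}(\tau),\mathbf{x}_\tau,a^\star)=\mathbf{W}(\tau+1),
\]
which completes the induction. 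Stating this joint-law step cleanly is the only delicate point; the rest is bookkeeping.

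For \eqref{eq:app-on-axis-expectations}, first check integrability. The rows satisfy $\norm{\mathbf{w}_a(\tau)}\le\eta\sum_{s<\tau}\norm{\mathbf{x}_s}$ because all coefficients are bounded by $1$, so expectations exist. Linearity together with distributional invariance gives $\mathbf{Q}\,\E[\mathbf{w}_a(\tau)]=\E[\mathbf{Q}\mathbf{w}_a(\tau)]=\E[\mathbf{w}_a(\tau)]$ for every $\mathbf{Q}\in G_{\mathbf u}$. Lemma~\ref{lem:fixed-vectors} then places $\E[\mathbf{w}_a(\tau)]$ in $\operatorname{span}(\mathbf u)$. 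Finally, $\Delta\boldsymbol{\alpha}$ and $\boldsymbol{\beta}_a$ are fixed linear combinations of the rows by \eqref{eq:channels}, so their expectations lie in $\operatorname{span}(\mathbf u)$ as well.
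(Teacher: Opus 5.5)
Your proposal is correct and follows essentially the same route as the paper: you establish one-step equivariance of the SFT/RFT update map under the rowwise action of $G_{\mathbf u}$, combine rotation invariance of the centered Gaussian prompt and the teacher kernel with independence of the fresh sample to get the joint-law identity, induct on $\tau$, and then apply Lemma~\ref{lem:fixed-vectors} to $\E[\mathbf{w}_a(\tau)]$. Your explicit integrability check $\norm{\mathbf{w}_a(\tau)}\le\eta\sum_{s<\tau}\norm{\mathbf{x}_s}$ supplies a small detail that the paper leaves implicit in this lemma.
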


\begin{proof}
Let $\omega$ denote the random input used in one update. For SFT, write
$\omega=(\mathbf{x},a^\star)$ and define
$\mathbf{\mathbf{Q}}\star\omega=(\mathbf{\mathbf{Q}}\mathbf{x},a^\star)$; for RFT, write $\omega=\mathbf{x}$ and
define $\mathbf{\mathbf{Q}}\star\omega=\mathbf{\mathbf{Q}}\mathbf{x}$. Since
$\mathbf{x}\sim N(\mathbf{0},\mathbf{I}_p)$, the linear transformation rule
for Gaussian vectors and $\mathbf{Q}\mathbf{Q}^\top=\mathbf{I}_p$ give
\[
\mathbf{\mathbf{Q}}\mathbf{x}
\sim N\!\left(\mathbf{\mathbf{Q}}\mathbf{0},\mathbf{\mathbf{Q}}\mathbf{I}_p\mathbf{Q}^\top\right)
=N(\mathbf{0},\mathbf{I}_p).
\]
Thus $\mathbf{\mathbf{Q}}\mathbf{x}\stackrel{d}{=}\mathbf{x}$, so the prompt law is invariant
under $\mathbf{\mathbf{Q}}$. Moreover, since $\mathbf{\mathbf{Q}}$ is
orthogonal and $\mathbf{\mathbf{Q}}\mathbf{u}=\mathbf{u}$, we also have
$\mathbf{\mathbf{Q}}^\top\mathbf{u}=\mathbf{u}$. Therefore,
\[
b^\star(\mathbf{\mathbf{Q}}\mathbf{x})
=\indic{\inner{\mathbf{u}}{\mathbf{\mathbf{Q}}\mathbf{x}}>0}
=\indic{\inner{\mathbf{\mathbf{Q}}^\top\mathbf{u}}{\mathbf{x}}>0}
=\indic{\inner{\mathbf{u}}{\mathbf{x}}>0}
=b^\star(\mathbf{x}).
\]
In addition, 
\begin{align*}
\pi_{\mathbf{\mathbf{Q}}\cdot\mathbf{W}}(a\mid \mathbf{\mathbf{Q}}\mathbf{x})
&=\frac{\exp\!\left(\inner{\mathbf{\mathbf{Q}}\mathbf{w}_a}{\mathbf{\mathbf{Q}}\mathbf{x}}\right)}
{\sum_{a'\in\cA}\exp\!\left(\inner{\mathbf{\mathbf{Q}}\mathbf{w}_{a'}}{\mathbf{\mathbf{Q}}\mathbf{x}}\right)}\\
&=\frac{\exp\!\left(\mathbf{w}_a^\top \mathbf{\mathbf{Q}}^\top \mathbf{\mathbf{Q}}\mathbf{x}\right)}
{\sum_{a'\in\cA}\exp\!\left(\mathbf{w}_{a'}^\top \mathbf{\mathbf{Q}}^\top \mathbf{\mathbf{Q}}\mathbf{x}\right)}\\
&=\frac{\exp\!\left(\inner{\mathbf{w}_a}{\mathbf{x}}\right)}
{\sum_{a'\in\cA}\exp\!\left(\inner{\mathbf{w}_{a'}}{\mathbf{x}}\right)}
=\pi_{\mathbf{W}}(a\mid\mathbf{x}).
\end{align*}
The SFT teacher is invariant as well, since \eqref{eq:teacher} depends on the
prompt only through $b^\star(\mathbf{x})$; the RFT reward is invariant for the
same reason. Let $\Phi(\mathbf{W},\omega)$ denote the updated parameter matrix
$\mathbf{W}^+$ obtained by applying the SFT or RFT update in
\eqref{eq:updates}, and let
$\Phi_a$ denote its $a$th row. For SFT, using the invariance of the policy
shown above, we have
\begin{align*}
\Phi_a(\mathbf{\mathbf{Q}}\!\cdot\!\mathbf{W},\mathbf{\mathbf{Q}}\star\omega)
&=\mathbf{\mathbf{Q}}\mathbf{w}_a
+\eta\bigl(\indic{a=a^\star}
-\pi_{\mathbf{\mathbf{Q}}\cdot\mathbf{W}}(a\mid \mathbf{\mathbf{Q}}\mathbf{x})\bigr)\mathbf{\mathbf{Q}}\mathbf{x}\\
&=\mathbf{\mathbf{Q}}\mathbf{w}_a
+\eta\bigl(\indic{a=a^\star}
-\pi_{\mathbf{W}}(a\mid\mathbf{x})\bigr)\mathbf{\mathbf{Q}}\mathbf{x}\\
&=\mathbf{\mathbf{Q}}\Phi_a(\mathbf{W},\omega).
\end{align*}
For RFT, $b^\star(\mathbf{\mathbf{Q}}\mathbf{x})=b^\star(\mathbf{x})$ implies
$R(a,\mathbf{\mathbf{Q}}\mathbf{x})=R(a,\mathbf{x})$. Together with the policy invariance, this
also gives
\begin{align*}
\bar R_{\mathbf{\mathbf{Q}}\cdot\mathbf{W}}(\mathbf{\mathbf{Q}}\mathbf{x})
&=\sum_{a'\in\cA}\pi_{\mathbf{\mathbf{Q}}\cdot\mathbf{W}}(a'\mid \mathbf{\mathbf{Q}}\mathbf{x})
R(a',\mathbf{\mathbf{Q}}\mathbf{x})\\
&=\sum_{a'\in\cA}\pi_{\mathbf{W}}(a'\mid\mathbf{x})R(a',\mathbf{x})
=\bar R_{\mathbf{W}}(\mathbf{x}).
\end{align*}
Consequently, the RFT update satisfies
\begin{align*}
\Phi_a(\mathbf{\mathbf{Q}}\!\cdot\!\mathbf{W},\mathbf{\mathbf{Q}}\star\omega)
&=\mathbf{\mathbf{Q}}\mathbf{w}_a
+\eta\,\pi_{\mathbf{\mathbf{Q}}\cdot\mathbf{W}}(a\mid \mathbf{\mathbf{Q}}\mathbf{x})
\bigl(R(a,\mathbf{\mathbf{Q}}\mathbf{x})
-\bar R_{\mathbf{\mathbf{Q}}\cdot\mathbf{W}}(\mathbf{\mathbf{Q}}\mathbf{x})\bigr)\mathbf{\mathbf{Q}}\mathbf{x}\\
&=\mathbf{\mathbf{Q}}\left[\mathbf{w}_a
+\eta\,\pi_{\mathbf{W}}(a\mid\mathbf{x})
\bigl(R(a,\mathbf{x})-\bar R_{\mathbf{W}}(\mathbf{x})\bigr)\mathbf{x}\right]\\
&=\mathbf{\mathbf{Q}}\Phi_a(\mathbf{W},\omega).
\end{align*}
Thus the following identity holds rowwise under either update rule:
\begin{equation}\label{eq:app-equivariance}
\Phi\bigl(\mathbf{\mathbf{Q}}\!\cdot\!\mathbf{W},\mathbf{\mathbf{Q}}\star\omega\bigr)
=\mathbf{\mathbf{Q}}\!\cdot\!\Phi(\mathbf{W},\omega).
\end{equation}

We prove \eqref{eq:app-trajectory-invariance} by induction. It holds at
$\tau=0$ because $\mathbf{W}(0)=\mathbf{0}$, and hence
$\mathbf{\mathbf{Q}}\!\cdot\!\mathbf{W}(0)=\mathbf{W}(0)$. Now suppose
$\mathbf{\mathbf{Q}}\!\cdot\!\mathbf{W}(\tau)\stackrel{d}{=}\mathbf{W}(\tau)$. Let
$\omega_\tau$ be the fresh sample used at step $\tau$. It is independent of
$\mathbf{W}(\tau)$, and the invariance of the sample law proved above gives
$\mathbf{\mathbf{Q}}\star\omega_\tau\stackrel{d}{=}\omega_\tau$. Therefore, the induction
hypothesis and independence imply the joint distributional identity
\begin{equation}\label{eq:app-joint-invariance}
\bigl(\mathbf{\mathbf{Q}}\!\cdot\!\mathbf{W}(\tau),\mathbf{\mathbf{Q}}\star\omega_\tau\bigr)
\stackrel{d}{=}
\bigl(\mathbf{W}(\tau),\omega_\tau\bigr).
\end{equation}
By the definition of the one-step update map,
$\mathbf{W}(\tau+1)=\Phi(\mathbf{W}(\tau),\omega_\tau)$. Together with
\eqref{eq:app-equivariance} and \eqref{eq:app-joint-invariance}, we have
\begin{align*}
\mathbf{\mathbf{Q}}\!\cdot\!\mathbf{W}(\tau+1)
&=\mathbf{\mathbf{Q}}\!\cdot\!\Phi\bigl(\mathbf{W}(\tau),\omega_\tau\bigr)\\
&=\Phi\bigl(\mathbf{\mathbf{Q}}\!\cdot\!\mathbf{W}(\tau),\mathbf{\mathbf{Q}}\star\omega_\tau\bigr)\\
&\stackrel{d}{=}\Phi\bigl(\mathbf{W}(\tau),\omega_\tau\bigr)\\
&=\mathbf{W}(\tau+1).
\end{align*}
This completes the induction and proves
\eqref{eq:app-trajectory-invariance}.

It follows that $\mathbf{\mathbf{Q}}\E[\mathbf{w}_a(\tau)]=\E[\mathbf{w}_a(\tau)]$ for every
$\mathbf{\mathbf{Q}}\in G_{\mathbf{u}}$. Lemma~\ref{lem:fixed-vectors} therefore implies that
$\E[\mathbf{w}_a(\tau)]\in\operatorname{span}(\mathbf{u})$. The remaining
claims follow because
$\Delta\boldsymbol{\alpha}$ and $\boldsymbol{\beta}_a$ are fixed linear
combinations of the rows.
\end{proof}

\begin{lemma}[Lipschitz continuity of softmax]\label{lem:softmax-lipschitz}
The softmax map is $1$-Lipschitz in the Euclidean norm: for all
$\mathbf{h},\mathbf{h}'\in\R^{2M}$,
\[
\norm{\softmax(\mathbf{h})-\softmax(\mathbf{h}')}_2
\le\norm{\mathbf{h}-\mathbf{h}'}_2.
\]
\end{lemma}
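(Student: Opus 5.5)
The plan is to identify the Jacobian of the softmax map and show that its operator norm is at most one; the mean value theorem then gives the Lipschitz bound. Write $\mathbf{s}(\mathbf{h})=\softmax(\mathbf{h})$. A direct computation, the same one as in \eqref{eq:app-softmax-jacobian-direct} with $\mathbf{x}$ removed, gives
\[
J(\mathbf{h})=\operatorname{diag}(\mathbf{s})-\mathbf{s}\mathbf{s}^\top .
\]
This matrix is symmetric. For any $\mathbf{v}\in\R^{2M}$,
\[
\mathbf{v}^\top J\mathbf{v}=\sum_i s_i v_i^2-\Bigl(\sum_i s_i v_i\Bigr)^2,
\]
which is the variance of $v$ under the probability vector $\mathbf{s}$. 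It is therefore nonnegative, so $J$ is positive semidefinite.

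For the upper bound I would use $\mathbf{v}^\top J\mathbf{v}\le\sum_i s_i v_i^2\le\max_i s_i\,\norm{\mathbf{v}}^2\le\norm{\mathbf{v}}^2$. Combined with positive semidefiniteness, this shows every eigenvalue of $J(\mathbf{h})$ lies in $[0,1]$, and hence $\norm{J(\mathbf{h})}_{\mathrm{op}}\le1$ for every $\mathbf{h}$. The bound is not sharp; the true constant is $1/2$, but $1$ is all that is needed.

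To conclude, set $\mathbf{g}(t)=\mathbf{s}(\mathbf{h}'+t(\mathbf{h}-\mathbf{h}'))$ for $t\in[0,1]$. By the fundamental theorem of calculus,
\[
\mathbf{s}(\mathbf{h})-\mathbf{s}(\mathbf{h}')=\int_0^1 J\bigl(\mathbf{h}'+t(\mathbf{h}-\mathbf{h}')\bigr)(\mathbf{h}-\mathbf{h}')\,\dd t .
\]
Taking norms and using the operator-norm bound inside the integral gives $\norm{\mathbf{s}(\mathbf{h})-\mathbf{s}(\mathbf{h}')}_2\le\norm{\mathbf{h}-\mathbf{h}'}_2$.

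There is no real obstacle in this argument. The only step that needs care is bounding the spectrum of $J$: one has to use the variance interpretation for the lower bound and the trivial $\sum_i s_iv_i^2\le\norm{\mathbf{v}}^2$ for the upper bound, rather than trying to compute the eigenvalues explicitly.
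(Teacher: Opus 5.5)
Your proposal is correct and follows the paper's proof essentially step for step: you compute the Jacobian $\operatorname{diag}(\mathbf{s})-\mathbf{s}\mathbf{s}^\top$, get positive semidefiniteness from the variance interpretation, bound the eigenvalues by $1$ via $\sum_i s_i v_i^2\le\norm{\mathbf{v}}^2$, and integrate along the segment from $\mathbf{h}'$ to $\mathbf{h}$. Your side remark that the sharp constant is $1/2$ is also right, and it matches the paper's later use of the row-sum bound $2\pi_a(1-\pi_a)\le 1/2$ in Appendix~\ref{app:limiting-behavior}.
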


\begin{proof}
For $\boldsymbol{\pi}=\softmax(\mathbf{h})$, differentiating gives the
Jacobian
\[
J(\mathbf{h})=\operatorname{diag}(\boldsymbol{\pi})
-\boldsymbol{\pi}\boldsymbol{\pi}^{\top}.
\]
For every $\mathbf{v}\in\R^{2M}$,
\[
0\le\mathbf{v}^{\top}J(\mathbf{h})\mathbf{v}
=\sum_{a=1}^{2M}\pi_a v_a^2-\left(\sum_{a=1}^{2M}\pi_a v_a\right)^2
\le\sum_{a=1}^{2M}\pi_a v_a^2
\le\norm{\mathbf{v}}_2^2.
\]
The first inequality follows because the expression is a variance under
$\boldsymbol{\pi}$, and the last uses $0\le\pi_a\le1$. Thus
$J(\mathbf{h})$ is symmetric positive semidefinite with eigenvalues at
most $1$, so $\norm{J(\mathbf{h})}_{\mathrm{op}}\le1$ for every $\mathbf{h}$.
Integrating the derivative along the line segment from $\mathbf{h}'$
to $\mathbf{h}$ therefore gives
\[
\begin{aligned}
\norm{\softmax(\mathbf{h})-\softmax(\mathbf{h}')}_2
&\le\int_0^1\norm{J(\mathbf{h}'+t(\mathbf{h}-\mathbf{h}'))}_{\mathrm{op}}
\norm{\mathbf{h}-\mathbf{h}'}_2\,\dd t\\
&\le\norm{\mathbf{h}-\mathbf{h}'}_2.
\end{aligned}
\]
\end{proof}

\section{Proofs of the main results}\label{app:proofs-main}

\subsection{Proof of Theorem~\ref{thm:rate}}\label{app:proof-rate}

\begin{proof}
\noindent\emph{Part (i): the per-sample rate identity.}
For an arbitrary $\mathbf{W}$ and prompt $\mathbf{x}$, by
\eqref{eq:class-prob},
$P_0(\mathbf{W},\mathbf{x})+P_1(\mathbf{W},\mathbf{x})=1$.

First consider the SFT step. If the demonstration is $a^\star$ with
$\sigma(a^\star)=b^\star(\mathbf{x})$, then the row update is
\begin{align}
\mathbf{w}_a^+-\mathbf{w}_a
&= \eta\bigl(\indic{a=a^\star}
-\pi_{\mathbf{W}}(a\mid \mathbf{x})\bigr)\mathbf{x}.
\label{eq:app-sft-row}
\end{align}
Averaging \eqref{eq:app-sft-row} over a class and using the definition of $\boldsymbol{\alpha}^{(b)}$
in \eqref{eq:channels} gives
\begin{align}
\boldsymbol{\alpha}^{(b)+}-\boldsymbol{\alpha}^{(b)}
&= \frac{1}{M}\sum_{a\in\cA_b}
\eta\bigl(\indic{a=a^\star}
-\pi_{\mathbf{W}}(a\mid \mathbf{x})\bigr)\mathbf{x} \notag\\
&= \frac{\eta}{M}
\Bigl(\indic{b=b^\star(\mathbf{x})}
- \sum_{a\in\cA_b}\pi_{\mathbf{W}}(a\mid \mathbf{x})\Bigr)\mathbf{x}
\notag\\
&= \frac{\eta}{M}\bigl(\indic{b=b^\star(\mathbf{x})}
-P_b(\mathbf{W},\mathbf{x})\bigr)\mathbf{x},
\label{eq:app-sft-mean}
\end{align}
where the last equality is \eqref{eq:class-prob}. Subtracting the
$b=0$ mean update from the $b=1$ mean update yields
\begin{align}
\Delta\boldsymbol{\alpha}^+_{\mathrm{SFT}}
-\Delta\boldsymbol{\alpha}
&= \eta\left[
\frac{\indic{b^\star(\mathbf{x})=1}-P_1(\mathbf{W},\mathbf{x})}{M}
-\frac{\indic{b^\star(\mathbf{x})=0}-P_0(\mathbf{W},\mathbf{x})}{M}
\right]\mathbf{x}.
\label{eq:app-sft-delta-raw}
\end{align}
If $b^\star(\mathbf{x})=1$, then $2b^\star(\mathbf{x})-1=1$,
$P_{b^\star(\mathbf{x})}(\mathbf{W},\mathbf{x})
=P_1(\mathbf{W},\mathbf{x})$, and
$P_0(\mathbf{W},\mathbf{x})
=1-P_{b^\star(\mathbf{x})}(\mathbf{W},\mathbf{x})$; if
$b^\star(\mathbf{x})=0$, then $2b^\star(\mathbf{x})-1=-1$,
$P_{b^\star(\mathbf{x})}(\mathbf{W},\mathbf{x})
=P_0(\mathbf{W},\mathbf{x})$, and
$P_1(\mathbf{W},\mathbf{x})
=1-P_{b^\star(\mathbf{x})}(\mathbf{W},\mathbf{x})$.
Therefore \eqref{eq:app-sft-delta-raw} becomes the SFT line of
\eqref{eq:semantic-steps}.

For RFT, the reward is
$R(a,\mathbf{x})=\indic{\sigma(a)=b^\star(\mathbf{x})}$, so
\eqref{eq:updates} gives
\begin{align}
\bar R_{\mathbf{W}}(\mathbf{x})
&= \sum_a \pi_{\mathbf{W}}(a\mid \mathbf{x})
\indic{\sigma(a)=b^\star(\mathbf{x})}
= P_{b^\star(\mathbf{x})}(\mathbf{W},\mathbf{x}),
\label{eq:app-rft-reward-avg}\\
\mathbf{w}_a^+-\mathbf{w}_a
&= \eta\,\pi_{\mathbf{W}}(a\mid\mathbf{x})
\bigl(\indic{\sigma(a)=b^\star(\mathbf{x})}
-P_{b^\star(\mathbf{x})}(\mathbf{W},\mathbf{x})\bigr)
\mathbf{x}.
\label{eq:app-rft-row}
\end{align}
Averaging \eqref{eq:app-rft-row} over a class gives
\begin{align}
\boldsymbol{\alpha}^{(b)+}-\boldsymbol{\alpha}^{(b)}
&= \frac{\eta}{M}\sum_{a\in\cA_b}
\pi_{\mathbf{W}}(a\mid\mathbf{x})
\bigl(\indic{b=b^\star(\mathbf{x})}
-P_{b^\star(\mathbf{x})}(\mathbf{W},\mathbf{x})\bigr)\mathbf{x}
\notag\\
&= \eta\frac{P_b(\mathbf{W},\mathbf{x})}{M}
\bigl(\indic{b=b^\star(\mathbf{x})}
-P_{b^\star(\mathbf{x})}(\mathbf{W},\mathbf{x})\bigr)\mathbf{x}.
\label{eq:app-rft-mean}
\end{align}
Subtracting the class-$0$ update in \eqref{eq:app-rft-mean} from the
class-$1$ update gives
\begin{align}
\Delta\boldsymbol{\alpha}^+_{\mathrm{RFT}}
-\Delta\boldsymbol{\alpha}
&= \eta\Biggl[
\frac{P_1(\mathbf{W},\mathbf{x})}{M}
\bigl(\indic{b^\star(\mathbf{x})=1}
-P_{b^\star(\mathbf{x})}(\mathbf{W},\mathbf{x})\bigr)\notag\\
&\hspace{2.5cm}
-\frac{P_0(\mathbf{W},\mathbf{x})}{M}
\bigl(\indic{b^\star(\mathbf{x})=0}
-P_{b^\star(\mathbf{x})}(\mathbf{W},\mathbf{x})\bigr)
\Biggr]\mathbf{x}.
\label{eq:app-rft-delta-raw}
\end{align}
Again splitting on $b^\star(\mathbf{x})$ and using
$P_0(\mathbf{W},\mathbf{x})+P_1(\mathbf{W},\mathbf{x})=1$, the right side of
\eqref{eq:app-rft-delta-raw} is
\begin{align}
&\frac{2\eta}{M}\,\bigl(2b^\star(\mathbf{x})-1\bigr)
P_{b^\star(\mathbf{x})}(\mathbf{W},\mathbf{x})
\bigl(1-P_{b^\star(\mathbf{x})}(\mathbf{W},\mathbf{x})\bigr)\mathbf{x},
\label{eq:app-rft-delta-simplified}
\end{align}
which is the RFT line of \eqref{eq:semantic-steps}.

\medskip
\noindent\emph{Part (ii): population semantic update.}
Now assume additionally that $\boldsymbol{\beta}=\mathbf{0}$ and
$\Delta\boldsymbol{\alpha}=c\mathbf{u}$ for some $c\in\mathbb{R}$. Then
\eqref{eq:channel-policy} implies that the class probabilities
$P_b(\mathbf{W},\mathbf{x})$ depend on $\mathbf{W}$ only through
$\Delta\boldsymbol{\alpha}$.
Fix task $t$ and use the notation of
Lemma~\ref{lem:gaussian-projection}:
\[
z=\inner{\mathbf{u}}{\mathbf{x}},
\qquad
\boldsymbol{\xi}^{\perp}
=\bigl(\mathbf{I}_p-\mathbf{u}\mathbf{u}^\top\bigr)
(\mathbf{x}-\boldsymbol{\mu}_t).
\]
Lemma~\ref{lem:gaussian-projection} gives $z\sim N(0,1)$,
$z\indep\boldsymbol{\xi}^{\perp}$, and
\[
\mathbf{x}=z\mathbf{u}+\boldsymbol{\mu}_t+\boldsymbol{\xi}^{\perp} .
\]
It follows that
\[
\inner{\Delta\boldsymbol{\alpha}}{\mathbf{x}}
=c\,\inner{\mathbf{u}}{z\mathbf{u}+\boldsymbol{\mu}_t+\boldsymbol{\xi}^{\perp}}
=c\,z,
\]
because both $\boldsymbol{\mu}_t$ and $\boldsymbol{\xi}^{\perp}$ are orthogonal to
$\mathbf{u}$. Hence \eqref{eq:channel-policy} implies that
$P_1(\mathbf{W},\mathbf{x})=(1+e^{-cz})^{-1}$ and
$P_0(\mathbf{W},\mathbf{x})=1-P_1(\mathbf{W},\mathbf{x})$. Since
$b^\star(\mathbf{x})=b^\star(z):=\indic{z>0}$, the correct-class probability
has the one-dimensional representation
\[
P_{b^\star(\mathbf{x})}(\mathbf{W},\mathbf{x})
=P_\star(z):=\frac{1}{1+e^{-c|z|}}.
\]
This function is even in $z$.
Also
$2b^\star(\mathbf{x})-1=\sign(z)$ almost surely.

The SFT rate identity in \eqref{eq:semantic-steps} becomes
\begin{equation}\label{eq:app-rate-sft-task-mean}
\Delta\boldsymbol{\alpha}^+_{\mathrm{SFT}}-\Delta\boldsymbol{\alpha}
=\frac{2\eta}{M}
\sign(z)\,\bigl(1-P_\star(z)\bigr)\,
\bigl(z\mathbf{u}+\boldsymbol{\mu}_t+\boldsymbol{\xi}^{\perp}\bigr).
\end{equation}
Taking the task-$t$ population expectation gives
\begin{align}
\E_t\bigl[\Delta\boldsymbol{\alpha}^+_{\mathrm{SFT}}-\Delta\boldsymbol{\alpha}\bigr]
&=\frac{2\eta}{M}
\Bigl(
\E[\sign(z)(1-P_\star(z))z]\,\mathbf{u}
\notag\\
&\hspace{3.6cm}
+\E[\sign(z)(1-P_\star(z))]\,\boldsymbol{\mu}_t
\notag\\
&\hspace{3.6cm}
+\E[\sign(z)(1-P_\star(z))\boldsymbol{\xi}^{\perp}]
\Bigr).
\label{eq:app-pop-alpha-sft-expanded}
\end{align}
The last term vanishes because
$\sign(z)(1-P_\star(z))$ is a function of $z$
alone, $z$ is independent of $\boldsymbol{\xi}^{\perp}$, and
$\E[\boldsymbol{\xi}^{\perp}]=\mathbf{0}$. The $\boldsymbol{\mu}_t$ term also
vanishes: $1-P_\star(z)$ is even, so
$\sign(z)(1-P_\star(z))$ is odd, and its expectation under the symmetric
$z\sim N(0,1)$ is zero. Since $\sign(z)z=|z|$, the remaining scalar
expectation is
\[
\begin{aligned}
\E\bigl[|z|(1-P_\star(z))\bigr]
&=\E\left[\frac{|z|}{1+e^{c|z|}}\right]\\
&=\sqrt{\frac{2}{\pi}}\int_0^\infty
\frac{z e^{-z^2/2}}{1+e^{cz}}\,\dd z.
\end{aligned}
\]
The last equality uses the standard normal density and the evenness of
the integrand. Substituting into
\eqref{eq:app-pop-alpha-sft-expanded} gives the SFT line of
\eqref{eq:pop-alpha-task-mean}.

Similarly, the RFT rate identity in \eqref{eq:semantic-steps} becomes
\begin{equation}\label{eq:app-rate-rft-task-mean}
\Delta\boldsymbol{\alpha}^+_{\mathrm{RFT}}-\Delta\boldsymbol{\alpha}
=\frac{2\eta}{M}
\sign(z)\,P_\star(z)
\bigl(1-P_\star(z)\bigr)\,
\bigl(z\mathbf{u}+\boldsymbol{\mu}_t+\boldsymbol{\xi}^{\perp}\bigr).
\end{equation}
Taking the task-$t$ population expectation gives
\begin{align}
\E_t\bigl[\Delta\boldsymbol{\alpha}^+_{\mathrm{RFT}}-\Delta\boldsymbol{\alpha}\bigr]
&=\frac{2\eta}{M}
\Bigl(
\E[\sign(z)P_\star(z)
(1-P_\star(z))z]\,\mathbf{u}
\notag\\
&\hspace{3.6cm}
+\E[\sign(z)P_\star(z)
(1-P_\star(z))]\,\boldsymbol{\mu}_t
\notag\\
&\hspace{3.6cm}
+\E[\sign(z)P_\star(z)
(1-P_\star(z))\boldsymbol{\xi}^{\perp}]
\Bigr).
\label{eq:app-pop-alpha-rft-expanded}
\end{align}
Again the last term vanishes by independence and
$\E[\boldsymbol{\xi}^{\perp}]=\mathbf{0}$. The $\boldsymbol{\mu}_t$ term vanishes
as well because
$P_\star(z)
(1-P_\star(z))$ is even, so its product with
$\sign(z)$ is odd. Using $\sign(z)z=|z|$ and again integrating against
the standard normal density gives
\[
\begin{aligned}
\E\bigl[|z|P_\star(z)(1-P_\star(z))\bigr]
&=\E\left[\frac{|z|e^{c|z|}}{(1+e^{c|z|})^2}\right]\\
&=\sqrt{\frac{2}{\pi}}\int_0^\infty
\frac{z e^{-z^2/2}e^{cz}}{(1+e^{cz})^2}\,\dd z.
\end{aligned}
\]
Substituting into \eqref{eq:app-pop-alpha-rft-expanded} yields the RFT
line of \eqref{eq:pop-alpha-task-mean}.
\end{proof}

\subsection{Proof of Theorem~\ref{thm:style}}\label{app:proof-style}

\begin{proof}
\leavevmode
\paragraph{Part (i): RFT style invariance.}
Fix a prompt $\mathbf{x}$ and write the RFT update direction from
\eqref{eq:updates} as
\begin{equation}\label{eq:app-rft-direction}
\mathbf{g}_a
=\pi_{\mathbf{W}}(a\mid\mathbf{x})
\bigl(R(a,\mathbf{x})-\bar R_{\mathbf{W}}(\mathbf{x})\bigr)\mathbf{x}.
\end{equation}
Assume $\boldsymbol{\beta}=\mathbf{0}$. By \eqref{eq:channels}, every row
in a common semantic class is equal to that class's mean. Thus, for any two tokens
$a,a'$ in the same class,
\begin{align}
\mathbf{w}_a &= \mathbf{w}_{a'}, \label{eq:app-same-row}\\
\inner{\mathbf{w}_a}{\mathbf{x}}
&= \inner{\mathbf{w}_{a'}}{\mathbf{x}}, \notag\\
\pi_{\mathbf{W}}(a\mid \mathbf{x})
&= \pi_{\mathbf{W}}(a'\mid \mathbf{x}).
\label{eq:app-same-policy}
\end{align}
The reward also agrees inside the class because, by hypothesis, $R$ depends on
$a$ only through its class $\sigma(a)$, which is constant on a semantic class:
\begin{align}
\sigma(a)=\sigma(a')
&\Longrightarrow
R(a,\mathbf{x})=R(a',\mathbf{x}).
\label{eq:app-same-reward}
\end{align}
Combining \eqref{eq:app-rft-direction}, \eqref{eq:app-same-policy}, and
\eqref{eq:app-same-reward}, the update vector is identical for every row in
the same class. Therefore if $a,a'$ lie in the same class,
\begin{align}
\mathbf{w}_a^+ - \mathbf{w}_{a'}^+
&= \bigl(\mathbf{w}_a-\mathbf{w}_{a'}\bigr)
+ \eta\bigl(\mathbf{g}_a-\mathbf{g}_{a'}\bigr) \notag\\
&= \mathbf{0}+\eta\cdot\mathbf{0}=\mathbf{0},
\label{eq:app-cell-const-preserved}
\end{align}
Hence an update preserves $\boldsymbol{\beta}=\mathbf{0}$.
Induction over the prompt sequence gives
$\boldsymbol{\beta}_{\mathrm{RFT}}(\tau)=\mathbf{0}$ at every step. 
\paragraph{Part (ii): SFT style updates.}
For convenience, define the within-class probability deviation
\begin{equation}\label{eq:app-style-deviation}
\delta_a(\mathbf{W},\mathbf{x})
:=\pi_{\mathbf{W}}(a\mid\mathbf{x})
-\frac{1}{M}P_{\sigma(a)}(\mathbf{W},\mathbf{x}).
\end{equation}
For an arbitrary current matrix $\mathbf{W}$, the SFT row update and its
class average are exactly \eqref{eq:app-sft-row} and
\eqref{eq:app-sft-mean}, respectively. Subtracting
\eqref{eq:app-sft-mean} at $b=\sigma(a)$ from \eqref{eq:app-sft-row} gives
\begin{align}
\boldsymbol{\beta}_a^+-\boldsymbol{\beta}_a
&= \eta\bigl(\indic{a=a^\star}
-\pi_{\mathbf{W}}(a\mid\mathbf{x})\bigr)\mathbf{x}
-\frac{\eta}{M}
\bigl(\indic{\sigma(a)=b^\star}
-P_{\sigma(a)}(\mathbf{W},\mathbf{x})\bigr)\mathbf{x}
\notag\\
&= \eta\left[
\indic{a=a^\star}
-\frac{1}{M}\indic{\sigma(a)=b^\star}
-\left(\pi_{\mathbf{W}}(a\mid\mathbf{x})
-\frac{P_{\sigma(a)}(\mathbf{W},\mathbf{x})}{M}\right)
\right]\mathbf{x}\\
&=\eta\left[
\indic{a=a^\star}
-\frac{1}{M}\indic{\sigma(a)=b^\star}
-\delta_a(\mathbf{W},\mathbf{x})
\right]\mathbf{x},
\label{eq:app-beta-step}
\end{align}
which is exactly \eqref{eq:betastep}.

Now fix a task $t$, a token $a\in\cA_b$, and a current parameter matrix
$\mathbf{W}$ with $\boldsymbol{\beta}=\mathbf{0}$. Since $\mathbf{W}$ has identical rows within each class, all rows in the same
semantic class have the same logit at every prompt. Therefore the policy is
uniform within each semantic class:
\[
\pi_{\mathbf{W}}(a\mid\mathbf{x})
=\frac{1}{M}\sum_{a'\in\cA_b}\pi_{\mathbf{W}}(a'\mid\mathbf{x})
=\frac1M P_b(\mathbf{W},\mathbf{x})
\qquad (a\in\cA_b).
\]
Substituting this identity into the definition of
$\delta_a$ gives
\[
\delta_a(\mathbf{W},\mathbf{x})
=\pi_{\mathbf{W}}(a\mid\mathbf{x})
-\frac1M P_{\sigma(a)}(\mathbf{W},\mathbf{x})
=0.
\]
Thus the policy-centering feedback in the exact style update
\eqref{eq:betastep} vanishes. For this fixed prompt and teacher draw,
\eqref{eq:betastep} becomes
\[
\boldsymbol{\beta}_a^+-\boldsymbol{\beta}_a
=
\eta\left[
\indic{a=a^\star}
-\frac1{M}\indic{b^\star(\mathbf{x})=b}
\right]\mathbf{x}.
\]
Now take expectation only over the teacher demonstration, conditional on the
prompt $\mathbf{x}$. By the teacher model \eqref{eq:teacher}, if
$b^\star(\mathbf{x})\ne b$ then the teacher places no mass on token $a\in\cA_b$,
and both terms inside the brackets have conditional mean zero. If
$b^\star(\mathbf{x})=b$, then
$\Prob(a^\star=a\mid\mathbf{x})=\rho_{t,b}(a)$, so the conditional mean of the
bracket is $\rho_{t,b}(a)-1/M$. Combining the two cases,
\[
\E_{a^\star\mid\mathbf{x}}
\left[
\indic{a=a^\star}
-\frac1{M}\indic{b^\star(\mathbf{x})=b}
\right]
=
\indic{b^\star(\mathbf{x})=b}
\Bigl(\rho_{t,b}(a)-\frac1{M}\Bigr).
\]
Therefore the SFT population update of the style coordinate is
\begin{equation}\label{eq:app-pop-drift-reduce}
\E[\boldsymbol{\beta}_a^+-\boldsymbol{\beta}_a]
=
\eta\Bigl(\rho_{t,b}(a)-\frac1{M}\Bigr)
\E[\indic{b^\star(\mathbf{x})=b}\,\mathbf{x}].
\end{equation}

It remains to compute the last prompt moment. Apply
Lemma~\ref{lem:gaussian-projection} and write
$z=\inner{\mathbf{u}}{\mathbf{x}}$ and
$\boldsymbol{\xi}^{\perp}
=\bigl(\mathbf{I}_p-\mathbf{u}\mathbf{u}^\top\bigr)
(\mathbf{x}-\boldsymbol{\mu}_t)$. Then
$z\indep\boldsymbol{\xi}^{\perp}$ and
\[
\mathbf{x}=z\mathbf{u}+\boldsymbol{\mu}_t+\boldsymbol{\xi}^{\perp},
\]
while $z\sim N(0,1)$. The event $b^\star=b$ depends only on
$z$: it is $\{z>0\}$ for $b=1$ and $\{z<0\}$ for $b=0$. Hence
\[
\E[\indic{b^\star=b}\,\mathbf{x}]
=
\E[\indic{b^\star=b}z]\,\mathbf{u}
+\E[\indic{b^\star=b}(\boldsymbol{\mu}_t+\boldsymbol{\xi}^{\perp})].
\]
For the off-axis term, independence gives
\[
\E[\indic{b^\star=b}(\boldsymbol{\mu}_t+\boldsymbol{\xi}^{\perp})]
=\Prob(b^\star=b)\boldsymbol{\mu}_t
+\Prob(b^\star=b)\E[\boldsymbol{\xi}^{\perp}]
=\frac12\boldsymbol{\mu}_t.
\]
For the on-axis term,
\[
\E[\indic{z>0}z]
=\int_0^\infty z\,\frac{1}{\sqrt{2\pi}}e^{-z^2/2}\,dz
=\frac1{\sqrt{2\pi}},
\qquad
\E[\indic{z<0}z]=-\frac1{\sqrt{2\pi}},
\]
so in either class
\[
\E[\indic{b^\star=b}z]\,\mathbf{u}
=\frac{2b-1}{\sqrt{2\pi}}\mathbf{u}.
\]
Substituting these two moment identities into
\eqref{eq:app-pop-drift-reduce} proves \eqref{eq:pop-drift-full}.

Finally, project \eqref{eq:pop-drift-full} orthogonally to
$\spn(\mathbf{u})$. The term proportional to $\mathbf{u}$ is
removed, while $\boldsymbol{\mu}_t$ is unchanged because
$\boldsymbol{\mu}_t\perp\mathbf{u}$. This gives the off-axis identity in
\eqref{eq:pop-drift-full}.
\end{proof}

\subsection{Proof of Theorem~\ref{thm:separation}}\label{app:proof-separation}

\begin{proof}
\leavevmode
\paragraph{Step 1: perfect initial fit and RFT retention.}
Since $\boldsymbol{\beta}(\mathbf{W}_1)=\mathbf{0}$, the class-score difference is
$D_{\mathbf{W}_1}(\mathbf{x})
=\inner{\Delta\boldsymbol{\alpha}(\mathbf{W}_1)}{\mathbf{x}}$.
Since the old Gaussian law has positive density everywhere,
$\mathcal{L}_1(\mathbf{W}_1)=0$ means that
\[
\Delta\boldsymbol{\alpha}(\mathbf{W}_1)
=\norm{\Delta\boldsymbol{\alpha}(\mathbf{W}_1)}\mathbf{u},
\qquad \norm{\Delta\boldsymbol{\alpha}(\mathbf{W}_1)}>0.
\]
Subtracting the common row average
$\mathbf{c}:=(2M)^{-1}\sum_a\mathbf{w}_{1,a}$ from every row preserves
policy probabilities, predictions, and both update increments.
We therefore use these centered weights throughout the proof, so that
\[
\mathbf{w}_{1,a}=\frac{2\sigma(a)-1}{2}\norm{\Delta\boldsymbol{\alpha}(\mathbf{W}_1)}\mathbf{u},
\qquad \sum_a\mathbf{w}_{1,a}=\mathbf{0}.
\]
Write $z:=\inner{\mathbf{u}}{\mathbf{x}}$.
Both tasks share the semantic rule $b^\star(\mathbf{x})=\indic{z>0}$.
Lemma~\ref{lem:on-axis-classification}, applied with $B=0$, therefore
shows that $\mathbf{W}_1$ classifies both tasks perfectly:
\begin{equation}\label{eq:sep-early-perfect}
a_1(\mathbf{W}_1)=a_2(\mathbf{W}_1)=1.
\end{equation}
Theorem~\ref{thm:style}(i) ensures $\boldsymbol{\beta}=\mathbf{0}$ throughout RFT training, both updates
preserve $\sum_a\mathbf{w}_a=\mathbf{0}$, and
Theorem~\ref{thm:rate}(ii) gives strictly positive on-axis RFT increments.
Thus every RFT iterate $\mathbf{W}$ satisfies
\[
\begin{aligned}
\mathbf{w}_a
&=\frac{2\sigma(a)-1}{2}\norm{\Delta\boldsymbol{\alpha}(\mathbf{W})}\mathbf{u},\\
\norm{\Delta\boldsymbol{\alpha}(\mathbf{W})}
&\ge\norm{\Delta\boldsymbol{\alpha}(\mathbf{W}_1)}>0.
\end{aligned}
\]
Applying Lemma~\ref{lem:on-axis-classification} again with $B=0$ shows
that every RFT iterate classifies both tasks correctly, proving part (i).

\paragraph{Step 2: quantifying the style drift under SFT.}
Let $\hat{\boldsymbol{\mu}}_2:=\boldsymbol{\mu}_2/\norm{\boldsymbol{\mu}_2}$ and
$y:=\inner{\hat{\boldsymbol{\mu}}_2}{\mathbf{x}}$.
By Lemma~\ref{lem:gaussian-projection}, under task $2$ we can write
$\mathbf{x}=z\mathbf{u}+\boldsymbol{\mu}_2+\boldsymbol{\xi}^{\perp}$,
where $z\sim N(0,1)$ is independent of
$\boldsymbol{\xi}^{\perp}\sim N(\mathbf{0},\mathbf{I}_p-\mathbf{u}\mathbf{u}^\top)$.
Since $\hat{\boldsymbol{\mu}}_2\perp\mathbf{u}$ and
$\norm{\hat{\boldsymbol{\mu}}_2}=1$,
\[
y=\inner{\hat{\boldsymbol{\mu}}_2}{\mathbf{x}}
=\norm{\boldsymbol{\mu}_2}+\inner{\hat{\boldsymbol{\mu}}_2}{\boldsymbol{\xi}^{\perp}}
\sim N(\norm{\boldsymbol{\mu}_2},1).
\]
Moreover, $y$ depends only on $\boldsymbol{\xi}^{\perp}$, so it is
independent of $z$.

Define
$V:=\spn\{\mathbf{u},\hat{\boldsymbol{\mu}}_2\}$.
We will show that every parameter row remains in $V$ throughout task-$2$ training.
Define the residual
$\boldsymbol{\xi}_V:=\mathbf{x}-z\mathbf{u}-y\hat{\boldsymbol{\mu}}_2$,
which is the projection of $\mathbf{x}$ onto $V^\perp$.
Since $\boldsymbol{\mu}_2\in V$, the residual has mean zero:
\[
\E_2[\boldsymbol{\xi}_V]
=\boldsymbol{\mu}_2-0\cdot\mathbf{u}-\norm{\boldsymbol{\mu}_2}\hat{\boldsymbol{\mu}}_2
=\mathbf{0}.
\]
Orthogonality of the Gaussian projections also makes
$\boldsymbol{\xi}_V$ independent of $(z,y)$.
Initially, $\mathbf{w}_{1,a}=\frac{2\sigma(a)-1}{2}\norm{\Delta\boldsymbol{\alpha}(\mathbf{W}_1)}\mathbf{u}\in V$.
If all current rows lie in $V$, write
\begin{equation}\label{eq:sep-row-decomposition}
\mathbf{w}_a=\zeta_a\mathbf{u}+\omega_a\hat{\boldsymbol{\mu}}_2.
\end{equation}
Their logits and policy probabilities are then
\[
\inner{\mathbf{w}_a}{\mathbf{x}}=\zeta_a z+\omega_a y,
\qquad
\pi_a=\pi_{\mathbf{W}}(a\mid\mathbf{x})
=\frac{e^{\zeta_a z+\omega_a y}}
{\sum_{a'\in\cA}e^{\zeta_{a'}z+\omega_{a'}y}}.
\]
Thus, for fixed current parameters, $\pi_a$ depends on $\mathbf{x}$
only through $(z,y)$.
Meanwhile, the teacher definition \eqref{eq:teacher} gives
\[
q_2(a\mid\mathbf{x})
=\indic{\sigma(a)=b^\star(\mathbf{x})}\rho_{2,b^\star(\mathbf{x})}(a),
\qquad b^\star(\mathbf{x})=\indic{z>0}.
\]
By the teacher specification in Section~\ref{sec:setting}, the within-class
distribution $\rho_{2,b}$ is fixed for each class and does not depend on $\mathbf{x}$, so
$q_2(a\mid\mathbf{x})$ depends on the input only through the sign of $z$.
Consequently, $q_2(a\mid\mathbf{x})-\pi_a$ is a function of $(z,y)$
alone. By \eqref{eq:pop-updates}, the $V^\perp$ component of the SFT
parameter increment is
\[
\eta_2\E_2[(q_2(a\mid\mathbf{x})-\pi_a)\boldsymbol{\xi}_V]
=\eta_2\E_2[(q_2(a\mid\mathbf{x})-\pi_a)
\E_2[\boldsymbol{\xi}_V\mid z,y]]=\mathbf{0}.
\]
The last equality follows because $\boldsymbol{\xi}_V$ is independent
of $(z,y)$ and has mean zero.
Each updated row therefore remains in $V$. By induction,
\eqref{eq:sep-row-decomposition} holds at every task-$2$ SFT iterate,
so it suffices to track the two scalar
coefficients $\zeta_a$ and $\omega_a$ for each token.

Since $\norm{\mathbf{u}}=1$ and $\mathbf{u}\perp\hat{\boldsymbol{\mu}}_2$,
the decomposition \eqref{eq:sep-row-decomposition} gives
$\zeta_a=\inner{\mathbf{w}_a}{\mathbf{u}}$.
Taking the inner product of the SFT population update
\eqref{eq:pop-updates} with $\mathbf{u}$ and using
$z=\inner{\mathbf{u}}{\mathbf{x}}$ yields
\begin{gather*}
    \zeta_a^+-\zeta_a
=\inner{\mathbf{w}_a^+-\mathbf{w}_a}{\mathbf{u}}
=\eta_2\E_2[(q_2(a\mid\mathbf{x})-\pi_a)z],\\
|\zeta_a^+-\zeta_a|\le\eta_2\E|z|\le\eta_2,  
\end{gather*}

since $|q_2(a\mid\mathbf{x})-\pi_a|\le1$ and $z\sim N(0,1)$.
At initialization, $\zeta_a^{(0)}=\frac{2\sigma(a)-1}{2}\norm{\Delta\boldsymbol{\alpha}(\mathbf{W}_1)}$.
After $j\le k_2$ updates, the triangle inequality gives
\begin{equation}\label{eq:sep-zeta-crude}
\begin{aligned}
|\zeta_a^{(j)}|
&\le|\zeta_a^{(0)}|+\sum_{\ell=0}^{j-1}
|\zeta_a^{(\ell+1)}-\zeta_a^{(\ell)}|\\
&\le\tfrac12\norm{\Delta\boldsymbol{\alpha}(\mathbf{W}_1)}+j\eta_2\E|z|\\
&\le\tfrac12\norm{\Delta\boldsymbol{\alpha}(\mathbf{W}_1)}+\tau_2.
\end{aligned}
\end{equation}

Collect the coefficients in \eqref{eq:sep-row-decomposition} into a
matrix $\boldsymbol{\Theta}_j\in\R^{2M\times2}$, whose row indexed by token $a$ is
$(\zeta_a^{(j)},\omega_a^{(j)})$ after $j$ task-$2$ steps.
For $\mathbf{r}:=(z,y)^\top$, each entry of $\boldsymbol{\Theta}_j\mathbf{r}$ is the
corresponding token's logit:
\[
(\boldsymbol{\Theta}_j\mathbf{r})_a=\zeta_a^{(j)}z+\omega_a^{(j)}y
=\inner{\mathbf{w}_a^{(j)}}{\mathbf{x}}.
\]
Thus $\softmax(\boldsymbol{\Theta}_j\mathbf{r})$ is the model's probability vector.
The SFT population update \eqref{eq:pop-updates}, expressed in these
coordinates, becomes
\begin{equation}\label{eq:sep-update-map}
\boldsymbol{\Theta}_{j+1}=\boldsymbol{\Theta}_j+\eta_2\Delta(\boldsymbol{\Theta}_j),
\qquad
\Delta(\boldsymbol{\Theta}):=\E_2\bigl[(q_2(\cdot\mid\mathbf{x})-\softmax(\boldsymbol{\Theta}\mathbf{r}))
\mathbf{r}^\top\bigr].
\end{equation}
Here $\Delta(\boldsymbol{\Theta})$ denotes the population SFT update direction
at $\boldsymbol{\Theta}$. It has the same shape as $\boldsymbol{\Theta}$:
its first and second columns give the update directions for $\zeta_a$ and $\omega_a$,
respectively. Explicitly, its row indexed by $a$ has entries
\[
\Delta_{\zeta_a}(\boldsymbol{\Theta})=\E_2[(q_2(a\mid\mathbf{x})-\pi_a)z],
\qquad
\Delta_{\omega_a}(\boldsymbol{\Theta})=\E_2[(q_2(a\mid\mathbf{x})-\pi_a)y],
\]
where $\pi_a=[\softmax(\boldsymbol{\Theta}\mathbf{r})]_a$.
At $\mathbf{W}_1$ (represented by $\boldsymbol{\Theta}_0$), recall that
$\boldsymbol{\alpha}^{(b)}=M^{-1}\sum_{a\in\cA_b}\mathbf{w}_a$
is the class-$b$ semantic channel; see \eqref{eq:channels}.
Theorem~\ref{thm:rate}(ii) shows that the population update of
$\boldsymbol{\alpha}^{(1)}-\boldsymbol{\alpha}^{(0)}$ is parallel to
$\mathbf{u}$. From \eqref{eq:app-sft-mean}, a single SFT step on a prompt
$\mathbf{x}$ updates the two components of the semantic channel by
\[
\boldsymbol{\alpha}^{(b)+}-\boldsymbol{\alpha}^{(b)}
=\frac{\eta}{M}\bigl(\indic{b=b^\star(\mathbf{x})}-P_b(\mathbf{W},\mathbf{x})\bigr)\mathbf{x},
\qquad b\in\{0,1\}.
\]
Summing over $b$, we have
\begin{align*}
\boldsymbol{\alpha}^{(0)+}+\boldsymbol{\alpha}^{(1)+}
&=\boldsymbol{\alpha}^{(0)}+\boldsymbol{\alpha}^{(1)}
+\frac{\eta}{M}\sum_{b\in\{0,1\}}
\bigl(\indic{b=b^\star(\mathbf{x})}-P_b(\mathbf{W},\mathbf{x})\bigr)\mathbf{x}\\
&=\boldsymbol{\alpha}^{(0)}+\boldsymbol{\alpha}^{(1)}
+\frac{\eta}{M}\left(\indic{b^\star(\mathbf{x})=0}
+\indic{b^\star(\mathbf{x})=1}-P_0(\mathbf{W},\mathbf{x})-P_1(\mathbf{W},\mathbf{x})\right)\mathbf{x}\\
&=\boldsymbol{\alpha}^{(0)}+\boldsymbol{\alpha}^{(1)},
\end{align*}

where the indicators add to $1$ because exactly one class is
correct, and $P_0+P_1=1$ is by definition \eqref{eq:class-prob}, so $\boldsymbol{\alpha}^{(0)}+\boldsymbol{\alpha}^{(1)}$ is
unchanged by every SFT step. Writing
$\boldsymbol{\alpha}^{(b)}=\tfrac12\bigl[(\boldsymbol{\alpha}^{(0)}+\boldsymbol{\alpha}^{(1)})
\pm(\boldsymbol{\alpha}^{(1)}-\boldsymbol{\alpha}^{(0)})\bigr]$, the sum does
not move and the difference moves only along $\mathbf{u}$, so each semantic
channel's population update at $\mathbf{W}_1$ is parallel to $\mathbf{u}$.
Since $\mathbf{u}\perp\hat{\boldsymbol{\mu}}_2$, its projection onto
$\hat{\boldsymbol{\mu}}_2$ is zero.
By \eqref{eq:channels}, $\boldsymbol{\alpha}^{(b)}$ is the average of
$\mathbf{w}_a$ over $a\in\cA_b$, so its update is the average of the
updates of those token parameters. Thus, for the first task-$2$ step
from $\mathbf{W}_1$,
\[
\begin{aligned}
\inner{\E_2[\boldsymbol{\alpha}^{(b)+}-\boldsymbol{\alpha}^{(b)}]}{\hat{\boldsymbol{\mu}}_2}
&=\frac1M\sum_{a\in\cA_b}
\inner{\E_2[\mathbf{w}_a^+-\mathbf{w}_a]}{\hat{\boldsymbol{\mu}}_2}\\
&=\frac{\eta_2}{M}\sum_{a\in\cA_b}
\Delta_{\omega_a}(\boldsymbol{\Theta}_0).
\end{aligned}
\]
Here each $\Delta_{\omega_a}(\boldsymbol{\Theta}_0)$ describes the update
direction of one token's $\omega_a$ coefficient; summing and dividing
by $M$ gives the corresponding direction for the semantic channel.
The left-hand side is zero, so
\begin{equation}\label{eq:sep-alpha-omega-zero}
\frac1M\sum_{a\in\cA_b}\Delta_{\omega_a}(\boldsymbol{\Theta}_0)=0,
\qquad b\in\{0,1\}.
\end{equation}
Individual rows can still move along $\hat{\boldsymbol{\mu}}_2$ through
their within-class deviations
$\boldsymbol{\beta}_a=\mathbf{w}_a-\boldsymbol{\alpha}^{(\sigma(a))}$, and
Theorem~\ref{thm:style}(ii) quantifies this: since
$\boldsymbol{\beta}(\mathbf{W}_1)=\mathbf{0}$,
\eqref{eq:pop-drift-full} with $t=2$ gives, for $a\in\cA_b$,
\[
\E_2[\boldsymbol{\beta}_a^+-\boldsymbol{\beta}_a]
=\eta_2\Bigl(\rho_{2,b}(a)-\frac1M\Bigr)
\Bigl(\frac{\norm{\boldsymbol{\mu}_2}}{2}\hat{\boldsymbol{\mu}}_2
+\frac{2b-1}{\sqrt{2\pi}}\mathbf{u}\Bigr).
\]
As $\mathbf{w}_a=\boldsymbol{\alpha}^{(\sigma(a))}+\boldsymbol{\beta}_a$ and
the $\boldsymbol{\alpha}$-step has no $\hat{\boldsymbol{\mu}}_2$-component,
the $\hat{\boldsymbol{\mu}}_2$-component of the row step is that of the
$\boldsymbol{\beta}$-step. Using that the initial
rows are parallel to $\mathbf{u}$,
\begin{equation}\label{eq:sep-omega-init}
\omega_a(\boldsymbol{\Theta}_0)=0,
\qquad
\Delta_{\omega_a}(\boldsymbol{\Theta}_0)=\frac{\norm{\boldsymbol{\mu}_2}}{2}
\left(\rho_{2,\sigma(a)}(a)-\frac1M\right),
\qquad a\in\cA.
\end{equation}

Iterating \eqref{eq:sep-update-map} and using $\tau_2=k_2\eta_2$,
\begin{equation}\label{eq:sep-telescoped}
\boldsymbol{\Theta}_{k_2}-\boldsymbol{\Theta}_0-\tau_2\Delta(\boldsymbol{\Theta}_0)
=\eta_2\sum_{j=0}^{k_2-1}\bigl(\Delta(\boldsymbol{\Theta}_j)-\Delta(\boldsymbol{\Theta}_0)\bigr).
\end{equation}
Two bounds on the update map $\Delta$ defined in \eqref{eq:sep-update-map}
control the right-hand side. First,
$|q_2(a\mid\mathbf{x})-\pi_a|\le1$, $\E_2|z|\le1$, and
$\E_2|y|\le\norm{\boldsymbol{\mu}_2}+1$ (as $y\sim N(\norm{\boldsymbol{\mu}_2},1)$)
bound each entry of $\Delta$, so summing squares over the $2M$ rows,
\begin{equation}\label{eq:sep-update-norm-bound}
\sup_{\boldsymbol{\Theta}}\norm{\Delta(\boldsymbol{\Theta})}_F
\le\sqrt{2M\bigl(1+(\norm{\boldsymbol{\mu}_2}+1)^2\bigr)}
\le\sqrt{2M}\,(\norm{\boldsymbol{\mu}_2}+2).
\end{equation}
Second, the teacher term $q_2(\cdot\mid\mathbf{x})$ does not depend on
$\boldsymbol{\Theta}$ and therefore cancels when we subtract the two update maps:
\[
\Delta(\boldsymbol{\Theta})-\Delta(\boldsymbol{\Theta}')
=\E_2\!\left[
\bigl(\softmax(\boldsymbol{\Theta}'\mathbf{r})
-\softmax(\boldsymbol{\Theta}\mathbf{r})\bigr)\mathbf{r}^\top
\right].
\]
Taking the Frobenius norm, with Jensen's inequality we have
\[
\begin{aligned}
\norm{\Delta(\boldsymbol{\Theta})-\Delta(\boldsymbol{\Theta}')}_F
&\le\E_2\!\left[
\norm{\bigl(\softmax(\boldsymbol{\Theta}'\mathbf{r})
-\softmax(\boldsymbol{\Theta}\mathbf{r})\bigr)\mathbf{r}^\top}_F
\right]\\
&=\E_2\!\left[
\norm{\softmax(\boldsymbol{\Theta}'\mathbf{r})
-\softmax(\boldsymbol{\Theta}\mathbf{r})}\,\norm{\mathbf{r}}
\right]\\
&\le\E_2\bigl[\norm{(\boldsymbol{\Theta}-\boldsymbol{\Theta}')\mathbf{r}}\,\norm{\mathbf{r}}\bigr]\\
&\le\norm{\boldsymbol{\Theta}-\boldsymbol{\Theta}'}_F\,\E_2\norm{\mathbf{r}}^2,
\end{aligned}
\]
where the third inequality uses the $1$-Lipschitz property of softmax
(Lemma~\ref{lem:softmax-lipschitz}).
Finally, since $\mathbf{r}=(z,y)^\top$, with $z\sim N(0,1)$ and
$y\sim N(\norm{\boldsymbol{\mu}_2},1)$ under task $2$,
\[
\E_2\norm{\mathbf{r}}^2
=\E_2z^2+\E_2y^2
=1+\bigl(1+\norm{\boldsymbol{\mu}_2}^2\bigr)
=\norm{\boldsymbol{\mu}_2}^2+2,
\]
where we used $\E X^2=\operatorname{Var}(X)+(\E X)^2$ for each coordinate.
Thus
\begin{equation}\label{eq:sep-lipschitz-bound}
\norm{\Delta(\boldsymbol{\Theta})-\Delta(\boldsymbol{\Theta}')}_F
\le (\norm{\boldsymbol{\mu}_2}^2+2)\norm{\boldsymbol{\Theta}-\boldsymbol{\Theta}'}_F.
\end{equation}
Summing the update recursion in \eqref{eq:sep-update-map} over the first $j$ steps gives
\[
\boldsymbol{\Theta}_j-\boldsymbol{\Theta}_0
=\eta_2\sum_{\ell=0}^{j-1}\Delta(\boldsymbol{\Theta}_\ell).
\]
Using the triangle inequality and \eqref{eq:sep-update-norm-bound}, we obtain
\begin{equation}\label{eq:sep-displacement-bound}
\norm{\boldsymbol{\Theta}_j-\boldsymbol{\Theta}_0}_F
\le\eta_2\sum_{\ell=0}^{j-1}\norm{\Delta(\boldsymbol{\Theta}_\ell)}_F
\le j\eta_2\sqrt{2M}\,(\norm{\boldsymbol{\mu}_2}+2).
\end{equation}
Substituting \eqref{eq:sep-lipschitz-bound} and \eqref{eq:sep-displacement-bound} into
\eqref{eq:sep-telescoped} gives
\begin{align*}
&\norm{\boldsymbol{\Theta}_{k_2}-\boldsymbol{\Theta}_0-\tau_2\Delta(\boldsymbol{\Theta}_0)}_F\\
&\quad\le\eta_2\sum_{j=0}^{k_2-1}(\norm{\boldsymbol{\mu}_2}^2+2)\,j\eta_2\sqrt{2M}\,(\norm{\boldsymbol{\mu}_2}+2)\\
&\quad\le\tfrac12(\norm{\boldsymbol{\mu}_2}^2+2)\sqrt{2M}\,(\norm{\boldsymbol{\mu}_2}+2)\tau_2^2
:=C_{\mathrm{pop}}\tau_2^2,
\end{align*}
where $C_{\mathrm{pop}}:=\tfrac12(\norm{\boldsymbol{\mu}_2}^2+2)\sqrt{2M}\,(\norm{\boldsymbol{\mu}_2}+2)$.
For any matrix $\mathbf{A}$, each entry satisfies
$|A_{a,2}|\le\bigl(\sum_{b,c}|A_{b,c}|^2\bigr)^{1/2}=\norm{\mathbf{A}}_F$.
Applying this inequality to the $(a,2)$ entry of
$\boldsymbol{\Theta}_{k_2}-\boldsymbol{\Theta}_0-\tau_2\Delta(\boldsymbol{\Theta}_0)$ gives
\[
\left|\omega_a(\mathbf{W}_2)-\omega_a(\boldsymbol{\Theta}_0)
-\tau_2\Delta_{\omega_a}(\boldsymbol{\Theta}_0)\right|
\le C_{\mathrm{pop}}\tau_2^2.
\]
Substituting \eqref{eq:sep-omega-init} yields
\begin{equation}\label{eq:sep-phase-expansion}
\left|\omega_a(\mathbf{W}_2)
-\frac{\tau_2\norm{\boldsymbol{\mu}_2}}{2}
\left(\rho_{2,\sigma(a)}(a)-\frac1M\right)\right|
\le C_{\mathrm{pop}}\tau_2^2,
\qquad a\in\cA.
\end{equation}

Choose $a^\circ\in\argmax_{a\in\cA_1}\rho_{2,1}(a)$. For $a'\in\cA_0$,
assumption (b) gives $\rho_{2,1}(a^\circ)-\rho_{2,0}(a')\ge\Delta_\rho$.
Since $\sigma(a^\circ)=1$ and $\sigma(a')=0$, \eqref{eq:sep-phase-expansion}
gives a lower bound for $\omega_{a^\circ}(\mathbf{W}_2)$ and an upper bound for
$\omega_{a'}(\mathbf{W}_2)$:
\begin{equation}\label{eq:sep-token-style-bounds}
\begin{aligned}
\omega_{a^\circ}(\mathbf{W}_2)
&\ge\frac{\tau_2\norm{\boldsymbol{\mu}_2}}{2}
\left(\rho_{2,1}(a^\circ)-\frac1M\right)-C_{\mathrm{pop}}\tau_2^2,\\
\omega_{a'}(\mathbf{W}_2)
&\le\frac{\tau_2\norm{\boldsymbol{\mu}_2}}{2}
\left(\rho_{2,0}(a')-\frac1M\right)+C_{\mathrm{pop}}\tau_2^2.
\end{aligned}
\end{equation}
Subtracting the upper bound from the lower bound in \eqref{eq:sep-token-style-bounds} gives
\[
\begin{aligned}
&\omega_{a^\circ}(\mathbf{W}_2)-\omega_{a'}(\mathbf{W}_2)\\
&\quad\ge\frac{\tau_2\norm{\boldsymbol{\mu}_2}}{2}
\left[\left(\rho_{2,1}(a^\circ)-\frac1M\right)
-\left(\rho_{2,0}(a')-\frac1M\right)\right]-2C_{\mathrm{pop}}\tau_2^2\\
&\quad=\frac{\tau_2\norm{\boldsymbol{\mu}_2}}{2}
\bigl(\rho_{2,1}(a^\circ)-\rho_{2,0}(a')\bigr)-2C_{\mathrm{pop}}\tau_2^2\\
&\quad\ge\frac{\tau_2\norm{\boldsymbol{\mu}_2}\Delta_\rho}{2}-2C_{\mathrm{pop}}\tau_2^2.
\end{aligned}
\]
Set
\begin{equation}\label{eq:tau0}
\tau_0:=\frac{\norm{\boldsymbol{\mu}_2}\Delta_\rho}{8C_{\mathrm{pop}}}
=\frac{\norm{\boldsymbol{\mu}_2}\Delta_\rho}{4(\norm{\boldsymbol{\mu}_2}^2+2)\sqrt{2M}\,(\norm{\boldsymbol{\mu}_2}+2)}.
\end{equation}
For $\tau_2\le\tau_0$,
$2C_{\mathrm{pop}}\tau_2^2\le2C_{\mathrm{pop}}\tau_0\tau_2
=\tau_2\norm{\boldsymbol{\mu}_2}\Delta_\rho/4$, so
for every $a'\in\cA_0$,
\begin{equation}\label{eq:sep-transverse-gap}
\omega_{a^\circ}(\mathbf{W}_2)-\omega_{a'}(\mathbf{W}_2)
\ge\frac{\tau_2\norm{\boldsymbol{\mu}_2}\Delta_\rho}{4}>0.
\end{equation}

\paragraph{Step 3: an error region for both tasks.}
We construct a single error region and evaluate its probability under
both task distributions. Let
\begin{equation}\label{eq:sep-error-region}
\kappa:=\frac{4(\norm{\Delta\boldsymbol{\alpha}(\mathbf{W}_1)}+2\tau_2)}
{\norm{\boldsymbol{\mu}_2}\Delta_\rho\tau_2},
\qquad
E:=\{z<0,\;y>\kappa|z|\}.
\end{equation}
Fix $\mathbf{x}\in E$ and $a'\in\cA_0$, and evaluate all coefficients at
$\mathbf{W}_2$. By the triangle inequality and \eqref{eq:sep-zeta-crude}
with $j=k_2$,
\begin{equation}\label{eq:sep-zeta-difference}
\begin{aligned}
|\zeta_{a^\circ}-\zeta_{a'}|
&\le|\zeta_{a^\circ}|+|\zeta_{a'}|\\
&\le2\left(\tfrac12\norm{\Delta\boldsymbol{\alpha}(\mathbf{W}_1)}+\tau_2\right)
=\norm{\Delta\boldsymbol{\alpha}(\mathbf{W}_1)}+2\tau_2.
\end{aligned}
\end{equation}
Since $\kappa>0$ and $\mathbf{x}\in E$, we have $y>\kappa|z|>0$.
Using \eqref{eq:sep-row-decomposition}, \eqref{eq:sep-transverse-gap},
\eqref{eq:sep-zeta-difference}, and \eqref{eq:sep-error-region},
together with the definitions of $z$ and $y$,
\[
\begin{aligned}
\inner{\mathbf{w}_{a^\circ}-\mathbf{w}_{a'}}{\mathbf{x}}
&=(\zeta_{a^\circ}-\zeta_{a'})z+(\omega_{a^\circ}-\omega_{a'})y\\
&\ge-|\zeta_{a^\circ}-\zeta_{a'}|\,|z|+(\omega_{a^\circ}-\omega_{a'})y\\
&\ge-|\zeta_{a^\circ}-\zeta_{a'}|\,|z|
+\frac{\tau_2\norm{\boldsymbol{\mu}_2}\Delta_\rho}{4}\,y\\
&\ge-(\norm{\Delta\boldsymbol{\alpha}(\mathbf{W}_1)}+2\tau_2)|z|
+\frac{\tau_2\norm{\boldsymbol{\mu}_2}\Delta_\rho}{4}\,y\\
&=\frac{\tau_2\norm{\boldsymbol{\mu}_2}\Delta_\rho}{4}\bigl(y-\kappa|z|\bigr)>0.
\end{aligned}
\]
So on $E$ the highest-scoring token lies in $\cA_1$ and
$\hat b_{\mathbf{W}_2}(\mathbf{x})=1$, while $z<0$ means
$b^\star(\mathbf{x})=0$. This pointwise conclusion holds under either
task distribution, so $\mathcal{L}_r(\mathbf{W}_2)\ge\Prob_r(E)$ for
$r\in\{1,2\}$.
Under task $r$, Lemma~\ref{lem:gaussian-projection} gives independent
coordinates $z\sim N(0,1)$ and
$y\sim N(\inner{\boldsymbol{\mu}_r}{\hat{\boldsymbol{\mu}}_2},1)$.
Integrating over $z<0$ with the substitution $z=-t$, $t>0$, yields
\begin{equation}\label{eq:sep-task-error}
\begin{aligned}
\mathcal{L}_r(\mathbf{W}_2)\ge\Prob_r(E)
&=\int_0^\infty\phi(t)\,\Prob_r\bigl(y>\kappa t\bigr)\,\dd t\\
&=\int_0^\infty\phi(t)\Phi\!\left(
\inner{\boldsymbol{\mu}_r}{\hat{\boldsymbol{\mu}}_2}-\kappa t\right)\,\dd t>0,
\qquad r\in\{1,2\},
\end{aligned}
\end{equation}
where strict positivity follows because the integrand is positive for
every $t>0$.

Substituting the definitions of $\kappa$ and
$\hat{\boldsymbol{\mu}}_2=\boldsymbol{\mu}_2/\norm{\boldsymbol{\mu}_2}$
into \eqref{eq:sep-task-error} proves part (ii) for both tasks.
\end{proof}

\newpage
\subsection{Proof of Theorem~\ref{thm:offaxis-energy}}\label{app:proof-offaxis-energy}

\begin{proof}[Proof of Theorem~\ref{thm:offaxis-energy}]
\leavevmode
\paragraph{Step 1: conditional sampling and integrability.}
For either method, let $\mathcal F_s$ denote its training history
strictly before step $s$, including the previous teacher draws for SFT.
For the trajectory under consideration, write $\mathbf W_s:=\mathbf W(s)$ and
$\mathbf m_s:=\boldsymbol\mu_{t(s)}$. Set
$\mathbf P_\perp:=\mathbf I_p-\mathbf u\mathbf u^\top$ and define
\begin{equation}\label{eq:app-energy-projections}
\mathbf R_s:=\mathbf W_s^\perp=\mathbf W_s\mathbf P_\perp,
\qquad \mathbf y_s:=\mathbf P_\perp\mathbf x_s,
\qquad z_s:=\inner{\mathbf u}{\mathbf x_s},
\qquad
\mathbf v_s:=(\Delta\boldsymbol\alpha(s))^\perp.
\end{equation}
Thus $\norm{\mathbf R_s}_F^2=\norm{\mathbf W_s^\perp}_F^2$.
The matrix $\mathbf W_s$ is $\mathcal F_s$-measurable, whereas the
current prompt is fresh. By Lemma~\ref{lem:gaussian-projection} and
$\mathbf m_s\perp\mathbf u$, conditionally on $\mathcal F_s$,
\begin{equation}\label{eq:app-energy-sampling}
\begin{aligned}
&z_s\sim N(0,1),\qquad
\mathbf y_s\sim N(\mathbf m_s,\mathbf P_\perp),
\qquad z_s\indep\mathbf y_s,\\
&\E[\mathbf y_s\mid\mathcal F_s]=\mathbf m_s,
\qquad
\E[\norm{\mathbf y_s}^2\mid\mathcal F_s]
=\operatorname{tr}(\mathbf P_\perp)+\norm{\mathbf m_s}^2
=p-1+\norm{\mathbf m_s}^2.
\end{aligned}
\end{equation}
All expectations used below are finite. Indeed, for SFT the coefficient
vector in \eqref{eq:updates} has norm at most $\sqrt2$, since it is
the difference of two probability vectors. For RFT, each coefficient
has absolute value at most $\pi_{\mathbf W_s}(a\mid\mathbf x_s)$,
because \eqref{eq:reward} gives $0\le R,\bar R\le1$; hence the
coefficient vector has norm at most $1$. In both cases,
\[
\norm{\mathbf W_s}_F
\le\sqrt2\eta\sum_{j=0}^{s-1}\norm{\mathbf x_j}.
\]
The right-hand side has finite moments of every order for finite $s$,
as the prompts are Gaussian and the task sequence is fixed.

\paragraph{Step 2: the exact semantic--style energy decomposition.}
For either update in \eqref{eq:updates}, the sum of all token parameters
is preserved. In SFT,
\[
\sum_{a\in\cA}(\mathbf w_a^+-\mathbf w_a)
=\eta\left(\sum_a\indic{a=a^\star}-\sum_a\pi_{\mathbf W}(a\mid\mathbf x)\right)
\mathbf x=\mathbf0.
\]
In RFT, the definition of $\bar R_{\mathbf W}$ following
\eqref{eq:updates} gives
\[
\begin{aligned}
\sum_{a\in\cA}(\mathbf w_a^+-\mathbf w_a)
&=\eta\left(\sum_a\pi_{\mathbf W}(a\mid\mathbf x)R(a,\mathbf x)
-\bar R_{\mathbf W}(\mathbf x)\sum_a\pi_{\mathbf W}(a\mid\mathbf x)\right)
\mathbf x\\
&=\eta(\bar R_{\mathbf W}(\mathbf x)-\bar R_{\mathbf W}(\mathbf x))\mathbf x
=\mathbf0.
\end{aligned}
\]
Since $\mathbf W_0=\mathbf0$, the definition of class means in
\eqref{eq:channels} therefore implies
\begin{equation}\label{eq:app-energy-centered-means}
M\bigl(\boldsymbol\alpha^{(0)}(s)+\boldsymbol\alpha^{(1)}(s)\bigr)
=\sum_a\mathbf w_a(s)=\mathbf0,
\qquad
\boldsymbol\alpha^{(1)\perp}(s)=\frac{\mathbf v_s}{2},
\quad
\boldsymbol\alpha^{(0)\perp}(s)=-\frac{\mathbf v_s}{2}.
\end{equation}
Furthermore, \eqref{eq:channels} gives
$\mathbf w_a^\perp=\boldsymbol\alpha^{(b)\perp}+\boldsymbol\beta_a^\perp$
for $a\in\cA_b$ and
$\sum_{a\in\cA_b}\boldsymbol\beta_a^\perp=\mathbf0$.
Consequently, for each class,
\[
\begin{aligned}
\sum_{a\in\cA_b}\norm{\mathbf w_a^\perp}^2
&=M\norm{\boldsymbol\alpha^{(b)\perp}}^2
+2\inner{\boldsymbol\alpha^{(b)\perp}}
{\sum_{a\in\cA_b}\boldsymbol\beta_a^\perp}
+\sum_{a\in\cA_b}\norm{\boldsymbol\beta_a^\perp}^2\\
&=M\norm{\boldsymbol\alpha^{(b)\perp}}^2
+\sum_{a\in\cA_b}\norm{\boldsymbol\beta_a^\perp}^2.
\end{aligned}
\]
Summing over the two classes and using
\eqref{eq:app-energy-centered-means} yields the pathwise identity
\begin{equation}\label{eq:app-transverse-channel-energy}
\begin{aligned}
\norm{\mathbf R_s}_F^2
&=M\sum_{b=0}^1\norm{\boldsymbol\alpha^{(b)\perp}(s)}^2
+\sum_{a\in\cA}\norm{\boldsymbol\beta_a^\perp(s)}^2\\
&=\frac M2\norm{\mathbf v_s}^2
+\norm{\boldsymbol\beta^\perp(s)}_F^2.
\end{aligned}
\end{equation}
This proves \eqref{eq:offaxis-energy-decomposition} for both methods.

\paragraph{Step 3: the SFT update and the teacher's conditional mean.}
We first prove part (ii). Write
$\boldsymbol\pi_s=(\pi_{\mathbf W_s}(a\mid\mathbf x_s))_{a\in\cA}$
as a column vector, and let $\mathbf e_{a_s^\star}\in\R^{2M}$ be the
one-hot vector of the teacher token. Right-multiplying the SFT update
in \eqref{eq:updates} by $\mathbf P_\perp$ gives
\begin{equation}\label{eq:app-full-transverse-update}
\mathbf R_{s+1}
=\mathbf R_s+\eta(\mathbf e_{a_s^\star}-\boldsymbol\pi_s)\mathbf y_s^\top.
\end{equation}
Under \eqref{eq:teacher}, the vector
$\mathbf q_{t(s)}(z_s)=(q_{t(s)}(a\mid\mathbf x_s))_{a\in\cA}$ depends
on the prompt only through the sign of $z_s$. The fresh teacher draw
therefore satisfies
\[
\E[\mathbf e_{a_s^\star}\mid\mathcal F_s,\mathbf x_s]
=\mathbf q_{t(s)}(z_s).
\]
Using iterated conditional expectation and then
\eqref{eq:app-energy-sampling}, we obtain
\begin{equation}\label{eq:app-transverse-teacher-mean}
\begin{aligned}
\E[\mathbf e_{a_s^\star}\mathbf y_s^\top\mid\mathcal F_s]
&=\E[\mathbf q_{t(s)}(z_s)\mathbf y_s^\top\mid\mathcal F_s]
=\bar{\mathbf q}_s\mathbf m_s^\top,\\
\bar{\mathbf q}_s&:=\E[\mathbf q_{t(s)}(z_s)],
\qquad (\bar{\mathbf q}_s)_a=\frac12\rho_{t(s),\sigma(a)}(a).
\end{aligned}
\end{equation}
The factor $1/2$ follows from $\Prob(z_s>0)=\Prob(z_s<0)=1/2$.

\paragraph{Step 4: Gaussian integration by parts for the SFT policy term.}
For a fixed $\mathbf W$ and coordinate $j$, differentiating the softmax gives
\[
\frac{\partial\pi_{\mathbf W}(a\mid\mathbf x)}{\partial x_j}
=\pi_{\mathbf W}(a\mid\mathbf x)
\left(w_{a,j}-\sum_{a'}\pi_{\mathbf W}(a'\mid\mathbf x)w_{a',j}\right).
\]
Thus its Jacobian, with one row per token, is
$(\operatorname{diag}(\boldsymbol\pi)-\boldsymbol\pi\boldsymbol\pi^\top)\mathbf W$.
For $\mathbf x\sim N(\mathbf m_s,\mathbf I_p)$, integration by parts
coordinatewise yields $\E[(x_j-m_{s,j})\pi_{\mathbf W}(a\mid\mathbf x)]
=\E[\partial\pi_{\mathbf W}(a\mid\mathbf x)/\partial x_j]$;
the boundary term vanishes because the softmax is bounded, and its
derivatives are bounded for fixed $\mathbf W$.
Define $\bar{\boldsymbol\pi}_s:=\E[\boldsymbol\pi_s\mid\mathcal F_s]$.
Conditional on $\mathcal F_s$, this identity and right multiplication
by $\mathbf P_\perp$ give, since $\mathbf P_\perp\mathbf m_s=\mathbf m_s$,
\begin{equation}\label{eq:app-transverse-policy-stein}
\begin{aligned}
\E[\boldsymbol\pi_s\mathbf y_s^\top\mid\mathcal F_s]
&=\bar{\boldsymbol\pi}_s\mathbf m_s^\top+\mathbf H_s\mathbf R_s,\\
\mathbf H_s&:=\E\left[
\operatorname{diag}(\boldsymbol\pi_s)-\boldsymbol\pi_s\boldsymbol\pi_s^\top
\,\middle|\,\mathcal F_s\right]\succeq0.
\end{aligned}
\end{equation}
Indeed, for any $\mathbf v\in\R^{2M}$ and probability vector
$\boldsymbol\pi$,
\[
\mathbf v^\top
(\operatorname{diag}(\boldsymbol\pi)-\boldsymbol\pi\boldsymbol\pi^\top)
\mathbf v
=\sum_a\pi_a v_a^2-\left(\sum_a\pi_a v_a\right)^2\ge0,
\]
so taking the conditional expectation preserves positive semidefiniteness.

\paragraph{Step 5: the SFT energy recursion.}
The Frobenius inner product is
$\langle\mathbf U,\mathbf V\rangle_F=\operatorname{tr}(\mathbf U^\top\mathbf V)$,
and $\norm{\mathbf v\mathbf y^\top}_F^2=\norm{\mathbf v}^2\norm{\mathbf y}^2$.
Expanding \eqref{eq:app-full-transverse-update} therefore gives
\[
\begin{aligned}
\norm{\mathbf R_{s+1}}_F^2
={}&\norm{\mathbf R_s}_F^2
+2\eta\langle\mathbf R_s,
(\mathbf e_{a_s^\star}-\boldsymbol\pi_s)\mathbf y_s^\top\rangle_F\\
&+\eta^2\norm{\mathbf e_{a_s^\star}-\boldsymbol\pi_s}^2
\norm{\mathbf y_s}^2.
\end{aligned}
\]
Equations~\eqref{eq:app-transverse-teacher-mean}
and~\eqref{eq:app-transverse-policy-stein}, and the
$\mathcal F_s$-measurability of $\mathbf R_s$, yield the exact identity
\begin{equation}\label{eq:app-full-transverse-energy}
\begin{aligned}
\E[\norm{\mathbf R_{s+1}}_F^2\mid\mathcal F_s]
={}&\norm{\mathbf R_s}_F^2
-2\eta\operatorname{tr}(\mathbf R_s^\top\mathbf H_s\mathbf R_s)\\
&+2\eta\langle\mathbf R_s,
(\bar{\mathbf q}_s-\bar{\boldsymbol\pi}_s)\mathbf m_s^\top\rangle_F\\
&+\eta^2\E\left[
\norm{\mathbf e_{a_s^\star}-\boldsymbol\pi_s}^2\norm{\mathbf y_s}^2
\,\middle|\,\mathcal F_s\right].
\end{aligned}
\end{equation}
Since $\mathbf H_s\succeq0$, each column $\mathbf r$ of $\mathbf R_s$
satisfies $\mathbf r^\top\mathbf H_s\mathbf r\ge0$; summing over columns
shows that the trace in \eqref{eq:app-full-transverse-energy} is nonnegative. Both
$\bar{\mathbf q}_s$ and $\bar{\boldsymbol\pi}_s$ are probability vectors,
so their squared norms are at most $1$ and their inner product is
nonnegative. Thus
$\norm{\bar{\mathbf q}_s-\bar{\boldsymbol\pi}_s}\le\sqrt2$, and
Cauchy--Schwarz gives
\[
\langle\mathbf R_s,
(\bar{\mathbf q}_s-\bar{\boldsymbol\pi}_s)\mathbf m_s^\top\rangle_F
\le\sqrt2\,\norm{\mathbf R_s}_F\norm{\mathbf m_s}.
\]
Moreover, for
every demonstrated token $a$ and probability vector $\boldsymbol\pi$,
\[
\norm{\mathbf e_a-\boldsymbol\pi}^2
=(1-\pi_a)^2+\sum_{a'\ne a}\pi_{a'}^2
\le(1-\pi_a)^2+\left(\sum_{a'\ne a}\pi_{a'}\right)^2
=2(1-\pi_a)^2\le2.
\]
Substituting these estimates and \eqref{eq:app-energy-sampling} into
\eqref{eq:app-full-transverse-energy} gives
\begin{equation}\label{eq:app-full-transverse-bound-step}
\begin{aligned}
\E[\norm{\mathbf R_{s+1}}_F^2\mid\mathcal F_s]
\le{}&\norm{\mathbf R_s}_F^2
+2\sqrt2\eta\norm{\mathbf m_s}\norm{\mathbf R_s}_F\\
&+2\eta^2\bigl(p-1+\norm{\mathbf m_s}^2\bigr).
\end{aligned}
\end{equation}
\paragraph{Step 6: iterate the SFT recursion.}
Set $e_s:=\E\norm{\mathbf R_s}_F^2$. The means $\mathbf m_s$ are
deterministic. Taking expectations in
\eqref{eq:app-full-transverse-bound-step} and using
$\E\norm{\mathbf R_s}_F\le\sqrt{e_s}$ yields
\begin{equation}\label{eq:app-sft-energy-scalar-recursion}
\begin{aligned}
e_{s+1}
&\le e_s+2\sqrt2\eta\norm{\mathbf m_s}\sqrt{e_s}
+2\eta^2\norm{\mathbf m_s}^2+2\eta^2(p-1)\\
&=\left(\sqrt{e_s}+\sqrt2\eta\norm{\mathbf m_s}\right)^2
+2\eta^2(p-1).
\end{aligned}
\end{equation}
To iterate this bound, let
\[
A_s:=\sqrt2\eta\sum_{j=0}^{s-1}\norm{\mathbf m_j},
\qquad V_s:=2\eta^2s(p-1).
\]
We claim $\sqrt{e_s}\le A_s+\sqrt{V_s}$. The claim holds at $s=0$.
For the induction step, use the elementary inequality
\[
\sqrt{(A+\sqrt V+c)^2+d}
\le A+c+\sqrt{V+d},\qquad A,V,c,d\ge0.
\]
Indeed, the square of the right-hand side minus the square of the
left-hand side is
$2(A+c)(\sqrt{V+d}-\sqrt V)\ge0$, and both sides are nonnegative.
Apply it with $A=A_s$, $V=V_s$,
$c=\sqrt2\eta\norm{\mathbf m_s}$, and $d=2\eta^2(p-1)$.
By \eqref{eq:app-sft-energy-scalar-recursion} and the induction hypothesis,
\[
\begin{aligned}
\sqrt{e_{s+1}}
&\le\sqrt{(A_s+\sqrt{V_s}+c)^2+d}\\
&\le A_s+c+\sqrt{V_s+d}
=A_{s+1}+\sqrt{V_{s+1}}.
\end{aligned}
\]
Evaluating the resulting bound at $s=N$ gives
\begin{equation}\label{eq:app-full-transverse-bound}
\E\norm{\mathbf R_N}_F^2
\le2\eta^2\left(\sqrt{N(p-1)}
+\sum_{s=0}^{N-1}\norm{\boldsymbol\mu_{t(s)}}\right)^2.
\end{equation}
By \eqref{eq:app-energy-projections}, this is precisely
\eqref{eq:offaxis-energy}, proving part (ii).

\paragraph{Step 7: reduce RFT to its semantic channel.}
We now consider the RFT trajectory and prove part (i). Since
$\mathbf W_0=\mathbf0$, Theorem~\ref{thm:style}(i) gives
$\boldsymbol\beta(s)=\mathbf0$ for every step, for each realization
of the prompts. By \eqref{eq:policy} and \eqref{eq:class-prob}, equal
class sizes then give
\[
P_1(\mathbf W_s,\mathbf x)
=\frac{M e^{\inner{\boldsymbol\alpha^{(1)}(s)}{\mathbf x}}}
{M e^{\inner{\boldsymbol\alpha^{(0)}(s)}{\mathbf x}}
+M e^{\inner{\boldsymbol\alpha^{(1)}(s)}{\mathbf x}}}
=\frac{1}{1+e^{-\inner{\Delta\boldsymbol\alpha(s)}{\mathbf x}}}.
\]
Define
\begin{equation}\label{eq:app-rft-energy-scalar-defs}
c_s:=\inner{\Delta\boldsymbol\alpha(s)}{\mathbf u},
\qquad \lambda:=\frac{2\eta}{M},
\qquad
f(r):=\frac{e^r}{(1+e^r)^2}
=\frac{1}{4\cosh^2(r/2)}.
\end{equation}
In particular, $f$ is even and $0<f(r)\le1/4$ for every $r$.
Since $P_0=1-P_1$, either value of the correct class satisfies
\begin{equation}\label{eq:app-rft-energy-prob-product}
P_{b^\star}(\mathbf W_s,\mathbf x_s)
\bigl(1-P_{b^\star}(\mathbf W_s,\mathbf x_s)\bigr)
=f\!\left(c_sz_s+\inner{\mathbf v_s}{\mathbf y_s}\right).
\end{equation}
Here we used \eqref{eq:app-energy-projections} to write
$\Delta\boldsymbol\alpha(s)=c_s\mathbf u+\mathbf v_s$ and
$\mathbf x_s=z_s\mathbf u+\mathbf y_s$.
Let $\varepsilon_s:=2b^\star(\mathbf x_s)-1$.
By \eqref{eq:reward}, $\varepsilon_s=\sign(z_s)$ almost surely and
$\varepsilon_s z_s=|z_s|$; the event $z_s=0$ has probability zero.
Substituting \eqref{eq:app-rft-energy-prob-product} into the exact
semantic update \eqref{eq:semantic-steps}, and projecting onto
$\mathbf u$ and $\mathbf u^\perp$, gives
\begin{equation}\label{eq:app-rft-energy-updates}
\begin{aligned}
c_{s+1}
&=c_s+\lambda|z_s|
f\!\left(c_sz_s+\inner{\mathbf v_s}{\mathbf y_s}\right),\\
\mathbf v_{s+1}
&=\mathbf v_s+\lambda\varepsilon_s
f\!\left(c_sz_s+\inner{\mathbf v_s}{\mathbf y_s}\right)\mathbf y_s.
\end{aligned}
\end{equation}
The first line and $c_0=0$ imply $c_{s+1}\ge c_s\ge0$ for all $s$,
without any upper restriction on $\eta$.

\paragraph{Step 8: the conditional RFT drift has a nonpositive radial component.}
Condition on $\mathcal F_s$, so that $c_s\ge0$ and $\mathbf v_s$
are fixed. For $c\ge0$ and $t\in\R$, define
\begin{equation}\label{eq:app-rft-energy-paired-drift}
\begin{aligned}
h_c(t)
&:=\E_{Z\sim N(0,1)}[\sign(Z)f(cZ+t)]\\
&=\int_0^\infty\phi(z)\bigl[f(cz+t)-f(cz-t)\bigr]\,\dd z,
\end{aligned}
\end{equation}
where $\phi$ is the standard normal density. The second line follows
by splitting the integral at zero, substituting $z\mapsto-z$ in
the negative half, and using the evenness of both $\phi$ and $f$.
Differentiating \eqref{eq:app-rft-energy-scalar-defs} gives
$f'(r)=-f(r)\tanh(r/2)$, so $f$ is nonincreasing on $[0,\infty)$.
For $c,z,t\ge0$, the inequality $|cz+t|\ge|cz-t|$ therefore implies
$f(cz+t)\le f(cz-t)$. Equation~\eqref{eq:app-rft-energy-paired-drift}
then yields $h_c(t)\le0$ for $t\ge0$.
The same equation gives $h_c(-t)=-h_c(t)$, hence
\begin{equation}\label{eq:app-rft-energy-drift-sign}
t\,h_c(t)\le0
\qquad\text{for every }c\ge0\text{ and }t\in\R.
\end{equation}
Write the conditional mean of the unscaled off-axis increment as
\[
\mathbf g_s:=\E\!\left[
\varepsilon_s f\!\left(c_sz_s+\inner{\mathbf v_s}{\mathbf y_s}\right)
\mathbf y_s\,\middle|\,\mathcal F_s\right].
\]
By the conditional independence in \eqref{eq:app-energy-sampling},
we may first average over $z_s$ while holding $\mathbf y_s$ fixed.
Using \eqref{eq:app-rft-energy-paired-drift} then gives
\begin{equation}\label{eq:app-rft-energy-radial-drift}
\begin{aligned}
\inner{\mathbf v_s}{\mathbf g_s}
&=\E\!\left[
\inner{\mathbf v_s}{\mathbf y_s}\,
\varepsilon_s f\!\left(c_sz_s+\inner{\mathbf v_s}{\mathbf y_s}\right)
\,\middle|\,\mathcal F_s\right]\\
&=\E\!\left[
\inner{\mathbf v_s}{\mathbf y_s}\,
h_{c_s}\!\left(\inner{\mathbf v_s}{\mathbf y_s}\right)
\,\middle|\,\mathcal F_s\right]\le0,
\end{aligned}
\end{equation}
where the last inequality applies \eqref{eq:app-rft-energy-drift-sign}
pointwise in $\mathbf y_s$. This argument allows any
$\mathbf m_s\perp\mathbf u$; it does not require zero task means.

\paragraph{Step 9: the RFT energy recursion and final bound.}
Expanding the second line of \eqref{eq:app-rft-energy-updates},
taking conditional expectations, and using $\varepsilon_s^2=1$ gives
\begin{equation}\label{eq:app-rft-energy-recursion}
\begin{aligned}
\E[\norm{\mathbf v_{s+1}}^2\mid\mathcal F_s]
={}&\norm{\mathbf v_s}^2+2\lambda\inner{\mathbf v_s}{\mathbf g_s}\\
&+\lambda^2\E\!\left[
f\!\left(c_sz_s+\inner{\mathbf v_s}{\mathbf y_s}\right)^2
\norm{\mathbf y_s}^2\,\middle|\,\mathcal F_s\right]\\
\le{}&\norm{\mathbf v_s}^2
+\frac{\lambda^2}{16}(p-1+\norm{\mathbf m_s}^2)\\
={}&\norm{\mathbf v_s}^2
+\frac{\eta^2}{4M^2}(p-1+\norm{\mathbf m_s}^2).
\end{aligned}
\end{equation}
The inequality uses \eqref{eq:app-rft-energy-radial-drift},
$f^2\le1/16$ from \eqref{eq:app-rft-energy-scalar-defs}, and the
second moment in \eqref{eq:app-energy-sampling}. Taking expectations
and summing \eqref{eq:app-rft-energy-recursion} from $s=0$ to $N-1$,
with $\mathbf v_0=\mathbf0$, yields
\begin{equation}\label{eq:app-rft-semantic-energy-bound}
\E\norm{\mathbf v_N}^2
\le\frac{\eta^2}{4M^2}
\sum_{s=0}^{N-1}(p-1+\norm{\mathbf m_s}^2).
\end{equation}
Finally, $\boldsymbol\beta(N)=\mathbf0$ and
\eqref{eq:app-transverse-channel-energy} give
\[
\begin{aligned}
\E\norm{\mathbf W_{\mathrm{RFT}}^\perp(N)}_F^2
&=\frac M2\E\norm{\mathbf v_N}^2\\
&\le\frac{\eta^2}{8M}
\left(N(p-1)+\sum_{s=0}^{N-1}\norm{\boldsymbol\mu_{t(s)}}^2\right),
\end{aligned}
\]
which is \eqref{eq:rft-offaxis-energy} and proves part (i).
Both parts include $N=0$, with empty sums and zero initial energy.
Fresh sampling is essential to the conditional calculations in
\eqref{eq:app-energy-sampling} and the subsequent expectations;
the proof does not directly apply to reuse of a fixed prompt sample.
For RFT, exact per-prompt gradients are also essential to the pathwise
style invariance used in Step~7; sampled action gradients are discussed
separately in Appendix~\ref{app:sampled}.
\end{proof}

\section{Additional results}\label{app:additional}

\subsection{Exact and sampled policy gradients}\label{app:sampled}

The invariance in Theorem~\ref{thm:style}(i) concerns the exact
per-prompt gradient \eqref{eq:updates}. Here ``exact'' means that the
expectation over actions is evaluated by summing over all tokens at the
current policy and prompt. When this summation is computationally
impractical, REINFORCE provides an unbiased gradient estimate using
actions sampled from the current policy. This subsection explains how action sampling may change the
style dynamics.

\paragraph{An unbiased gradient estimate.}
Fix $\mathbf{W}$ and $\mathbf{x}$, and draw
$A\sim\pi_{\mathbf{W}}(\cdot\mid\mathbf{x})$. The REINFORCE estimator
without a baseline \citep{williams1992simple} gives the update
\begin{equation}\label{eq:app-sampled-update}
\mathbf{w}_a^+=\mathbf{w}_a+\eta\,\widehat{\mathbf{g}}_a(A),
\qquad
\widehat{\mathbf{g}}_a(A)
=R(A,\mathbf{x})
\bigl(\indic{a=A}-\pi_{\mathbf{W}}(a\mid\mathbf{x})\bigr)\mathbf{x}.
\end{equation}
All expectations below are conditional on the current
$\mathbf{W},\mathbf{x}$ and average only over the sampled action. Since
$\E_A[R(A,\mathbf{x})\indic{a=A}]
=\pi_{\mathbf{W}}(a\mid\mathbf{x})R(a,\mathbf{x})$, we have
\begin{equation}\label{eq:app-sampled-unbiased}
\E_A[\widehat{\mathbf{g}}_a(A)]
=\pi_{\mathbf{W}}(a\mid\mathbf{x})
\bigl(R(a,\mathbf{x})-\bar R_{\mathbf{W}}(\mathbf{x})\bigr)\mathbf{x}.
\end{equation}
Thus the sampled update agrees with the RFT update in \eqref{eq:updates}
in conditional mean.

\paragraph{Style noise from a single sampled action.}
Subtracting the mean update within $a$'s class gives
\begin{equation}\label{eq:app-sampled-style}
\begin{aligned}
\Delta\boldsymbol{\beta}_a
&:=\boldsymbol{\beta}_a^+-\boldsymbol{\beta}_a\\
&=\eta R(A,\mathbf{x})\Bigl[
\indic{a=A}-\frac1M\indic{\sigma(A)=\sigma(a)}
-\delta_a(\mathbf{W},\mathbf{x})\Bigr]\mathbf{x},
\end{aligned}
\end{equation}
where $\delta_a$ is defined in \eqref{eq:app-style-deviation}. When $\boldsymbol{\beta}=\mathbf{0}$,
$\delta_a=0$ and \eqref{eq:app-sampled-unbiased} implies
$\E_A[\Delta\boldsymbol{\beta}_a]=\mathbf{0}$. Nevertheless, if $M>1$
and the sampled token is rewarded, its style increment is
$\eta(1-1/M)\mathbf{x}$, while every other token in the same class receives
$-\eta\mathbf{x}/M$. Tokens in the other class receive zero style
increment. For $\eta>0$ and $\mathbf{x}\ne\mathbf{0}$, a rewarded step
produces $\boldsymbol{\beta}^+\ne\mathbf{0}$.
An unrewarded sample produces no update for this estimator.

Let $\Delta\boldsymbol{\beta}$ collect the style increments over all
tokens. When $\boldsymbol{\beta}=\mathbf{0}$, the conditional squared style increment is
\begin{equation}\label{eq:app-sampled-style-variance}
\E_A\bigl[\norm{\Delta\boldsymbol{\beta}}_F^2\bigr]
=\eta^2 P_{b^\star(\mathbf{x})}(\mathbf{W},\mathbf{x})
\left(1-\frac1M\right)\norm{\mathbf{x}}^2.
\end{equation}
Indeed, a rewarded sample occurs with probability
$P_{b^\star(\mathbf{x})}(\mathbf{W},\mathbf{x})$, and its
squared coefficients sum to
$(1-1/M)^2+(M-1)/M^2=1-1/M$. Since the conditional mean increment is
zero when $\boldsymbol{\beta}=\mathbf{0}$, \eqref{eq:app-sampled-style-variance} is also the total
conditional variance. The corresponding off-axis expression replaces
$\mathbf{x}$ by $\mathbf{x}^{\perp}$. Hence a zero mean style increment
can coexist with positive expected style energy after one step.

\paragraph{Averaging actions and following the trajectory.}
If each update averages $K$ independent action-gradient estimates
computed at the same $\mathbf{W},\mathbf{x}$, its conditional mean is the
RFT update in \eqref{eq:updates}, while the variance in
\eqref{eq:app-sampled-style-variance} is divided by $K$. Finite $K$
therefore reduces style noise but does not guarantee that every update
preserves $\boldsymbol{\beta}=\mathbf{0}$.
Once a sampled trajectory develops $\boldsymbol{\beta}\ne\mathbf{0}$, the zero mean
style-increment statement above no longer applies: subsequent
conditional means follow the exact gradient at the new, generally
nonuniform policy. Theorem~\ref{thm:style}(i) consequently does not imply
zero style drift along sampled trajectories.

We compare REINFORCE updates with different numbers of sampled actions per
prompt against exact RFT in Appendix~\ref{app:robustness-ablations}
(Figure~\ref{fig:ablation-forgetting}(b,d)).

\subsection{Limiting behavior under continued task-2 training}
\label{app:limiting-behavior}

Theorem~\ref{thm:separation} establishes positive SFT semantic error on
both tasks over a finite training interval. We now consider continued training on task $2$.
In the unregularized population setting, sufficiently small fixed steps
eventually drive the semantic errors on both tasks to zero, even though
the teacher's style preferences can cause errors at intermediate times.

\begin{proposition}[Vanishing semantic error under continued population SFT]
\label{prop:limiting-error}
Under the setup of Section~\ref{sec:setting}, let
$\mathbf{W}^{(0)}=\mathbf{W}_1$ be any finite checkpoint, and run population
SFT on task $2$ with a fixed step size satisfying
\begin{equation}\label{eq:limiting-step-size}
0<\eta_2\le\frac{2}{1+\norm{\boldsymbol{\mu}_2}^2}.
\end{equation}
Then, as $k\to\infty$ ,
\begin{equation}\label{eq:limiting-errors}
\mathcal{L}_1(\mathbf{W}^{(k)})\longrightarrow0,
\qquad
\mathcal{L}_2(\mathbf{W}^{(k)})\longrightarrow0.
\end{equation}
In particular, from the checkpoint in Theorem~\ref{thm:separation},
\[
\cF^{(2)}_{\mathrm{SFT}}\longrightarrow0,
\qquad
\cG^{(2)}_{\mathrm{SFT}}\longrightarrow0,
\]
whereas RFT keeps both quantities identically zero throughout training.
\end{proposition}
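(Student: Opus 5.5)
The plan is to view population SFT on task $2$ as plain gradient descent on the convex function
\[
F(\mathbf W):=\E_2\bigl[-\textstyle\sum_a q_2(a\mid\mathbf x)\log\pi_{\mathbf W}(a\mid\mathbf x)\bigr],
\]
whose gradient is exactly minus the increment in \eqref{eq:pop-updates}. Next, show that the iterates are a minimizing sequence of $F$. Finally, show that any minimizing sequence must have vanishing semantic error, first on task $2$ and then, by absolute continuity, on task $1$.

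\emph{Step 1 (smoothness and descent).} $F$ is finite for every $\mathbf W$, because $-\log\pi_{\mathbf W}(a\mid\mathbf x)\le 2\norm{\mathbf W}_F\norm{\mathbf x}+\log(2M)$ and Gaussians have all moments. $F$ is convex as an expectation of log-sum-exp minus linear terms. Its Hessian is $\E_2[(\operatorname{diag}\boldsymbol\pi-\boldsymbol\pi\boldsymbol\pi^\top)\otimes\mathbf x\mathbf x^\top]$. The softmax factor has operator norm at most $1/2$ (the argument of Lemma~\ref{lem:softmax-lipschitz} sharpened), and $\norm{\E_2\mathbf x\mathbf x^\top}_{\mathrm{op}}=\norm{\mathbf I_p+\boldsymbol\mu_2\boldsymbol\mu_2^\top}_{\mathrm{op}}=1+\norm{\boldsymbol\mu_2}^2$. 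So $F$ is $L$-smooth with $L\le(1+\norm{\boldsymbol\mu_2}^2)/2$, and \eqref{eq:limiting-step-size} gives $\eta_2\le 1/L$.

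The standard convex gradient-descent estimate then holds against an \emph{arbitrary} comparator $\mathbf W'$:
\[
F(\mathbf W^{(k)})-F(\mathbf W')\le\frac{\norm{\mathbf W^{(0)}-\mathbf W'}_F^2}{2\eta_2 k}.
\]
Taking $\mathbf W'$ with $F(\mathbf W')\le\inf F+\varepsilon$ and letting $k\to\infty$ yields $F(\mathbf W^{(k)})\to\inf F$. This needs no minimizer to exist, which matters because the optimum is approached only as the on-axis margin diverges. This is the step I expect to be the main obstacle: one must use the comparator form of the rate rather than convergence of the iterates, and check the smoothness constant carefully.

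\emph{Step 2 (the infimum forces class-level fit).} For the correct class $b^\star=b^\star(\mathbf x)$, split
\[
-\log\pi_{\mathbf W}(a\mid\mathbf x)=-\log P_{b^\star}(\mathbf W,\mathbf x)-\log\frac{\pi_{\mathbf W}(a\mid\mathbf x)}{P_{b^\star}(\mathbf W,\mathbf x)}.
\]
This gives $F(\mathbf W)=C(\mathbf W)+G(\boldsymbol\beta(\mathbf W))$, where $C(\mathbf W)=\E_2[-\log P_{b^\star}]\ge0$. The within-class conditional term $G$ depends only on the style channel, since the class means cancel in $\pi_{\mathbf W}(a\mid\mathbf x)/P_b$. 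Hence $F\ge\inf G$.

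Conversely, fix $\boldsymbol\beta$ and add $c\mathbf u$ to $\boldsymbol\alpha^{(1)}$, letting $c\to\infty$. This leaves $G$ unchanged. Since $z=\inner{\mathbf u}{\mathbf x}\neq0$ almost surely, $P_{b^\star}\to1$ pointwise, and $C\to0$ by dominated convergence (the integrand is dominated by a linear function of $\norm{\mathbf x}$ once $c\ge0$). Thus $\inf F=\inf G$, and Step 1 gives $C(\mathbf W^{(k)})\to0$.

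\emph{Step 3 (from class cross-entropy to zero error).} If $\hat b_{\mathbf W}(\mathbf x)\neq b^\star(\mathbf x)$, the top token $a_{\max}$ lies in the wrong class. Then
\[
P_{1-b^\star}\ge\pi_{a_{\max}}\ge\max_{a\in\cA_{b^\star}}\pi_{\mathbf W}(a\mid\mathbf x)\ge P_{b^\star}/M,
\]
so $P_{b^\star}\le M/(M+1)$ and $-\log P_{b^\star}\ge\log(1+1/M)$. Markov's inequality gives
\[
\mathcal L_2(\mathbf W^{(k)})\le C(\mathbf W^{(k)})/\log(1+1/M)\to0.
\]
For task $1$, write $E_k$ for the error set and $\varrho=\dd N(\boldsymbol\mu_1,\mathbf I_p)/\dd N(\boldsymbol\mu_2,\mathbf I_p)$, which is integrable under task $2$. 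Then $\mathcal L_1(\mathbf W^{(k)})=\E_2[\varrho\,\indic{\mathbf x\in E_k}]\to0$ by uniform integrability, since $\Prob_2(E_k)\to0$.

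\emph{Step 4 (metrics).} From the checkpoint of Theorem~\ref{thm:separation}, $a_1(\mathbf W_1)=1$, so $\cF^{(2)}=1-a_1(\mathbf W^{(k)})=\mathcal L_1(\mathbf W^{(k)})\to0$ and $\cG^{(2)}=\tfrac12(\mathcal L_1+\mathcal L_2)\to0$. For RFT, both metrics are identically zero by Theorem~\ref{thm:separation}(i).
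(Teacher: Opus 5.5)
Your proposal is correct and follows essentially the same route as the paper. Both proofs run gradient descent with the comparator-form rate on the convex, $(1+\norm{\boldsymbol{\mu}_2}^2)/2$-smooth population cross-entropy, use the class/style split to show that the infimum forces the class loss to zero, apply the $\log(1+1/M)$ Markov bound to the task-2 error, and transfer to task 1 by a change of measure. The only difference is minor: for task 1 the paper uses Cauchy--Schwarz with $\E_2[r^2]=e^{\norm{\boldsymbol{\mu}_1-\boldsymbol{\mu}_2}^2}$ to get the explicit bound $\mathcal{L}_1\le e^{\norm{\boldsymbol{\mu}_1-\boldsymbol{\mu}_2}^2/2}\sqrt{\mathcal{L}_2}$, uniform in $\mathbf{W}$, whereas your absolute-continuity argument is qualitative but equally valid.
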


\begin{proof}
Let $f_2$ denote the task-$2$ population token cross-entropy, and let
$f_{2,*}$ be its infimum over finite parameter matrices:
\[
f_2(\mathbf{W})
:=\E_2\!\left[-\sum_{a\in\cA}q_2(a\mid\mathbf{x})
\log\pi_{\mathbf{W}}(a\mid\mathbf{x})\right],
\qquad
f_{2,*}:=\inf_{\mathbf{V}} f_2(\mathbf{V}).
\]
We first show that approaching this infimum forces semantic error to
vanish, and then verify that population optimization approaches it.

\paragraph{Separating class and style losses.}
For $a\in\cA_b$, write
$\pi_{\mathbf{W}}(a\mid\mathbf{x},b)
:=\pi_{\mathbf{W}}(a\mid\mathbf{x})/P_b(\mathbf{W},\mathbf{x})$.
This conditional policy depends only on the style channel
$\boldsymbol{\beta}$. Since the teacher is always semantically correct,
\begin{equation}\label{eq:limiting-loss-decomposition}
\begin{aligned}
f_2(\mathbf{W})&=C_2(\mathbf{W})+S_2(\boldsymbol{\beta}),\\
C_2(\mathbf{W})&:=\E_2[-\log P_{b^\star(\mathbf{x})}(\mathbf{W},\mathbf{x})],\\
S_2(\boldsymbol{\beta})&:=\E_2\!\left[
-\sum_{a\in\cA_{b^\star(\mathbf{x})}}\rho_{2,b^\star(\mathbf{x})}(a)
\log\pi_{\mathbf{W}}(a\mid\mathbf{x},b^\star(\mathbf{x}))\right].
\end{aligned}
\end{equation}
Let $S_{2,*}$ be the infimum of $S_2$ over finite style channels whose
rows sum to zero within each class. We claim that $f_{2,*}=S_{2,*}$.
The inequality $f_{2,*}\ge S_{2,*}$ follows from $C_2\ge0$.
Conversely, fix such a style channel and set
$\mathbf{w}_a^{[s]}=\boldsymbol{\beta}_a
+(2\sigma(a)-1)s\mathbf{u}/2$ for $s\ge0$.
This leaves $S_2$ unchanged. With $z=\inner{\mathbf{u}}{\mathbf{x}}$
and $b=b^\star(\mathbf{x})$, the class loss is
\[
-\log P_b(\mathbf{W}^{[s]},\mathbf{x})
=\log\!\left(1+e^{-s|z|}
\frac{\sum_{a\in\cA_{1-b}}e^{\inner{\boldsymbol{\beta}_a}{\mathbf{x}}}}
{\sum_{a\in\cA_b}e^{\inner{\boldsymbol{\beta}_a}{\mathbf{x}}}}\right).
\]
It tends to zero almost surely and is bounded by its value at $s=0$,
which is integrable because the fixed linear logits have finite Gaussian
first moments. Dominated convergence yields
$C_2(\mathbf{W}^{[s]})\to0$, hence
$f_{2,*}\le S_2(\boldsymbol{\beta})$ for every finite style channel.
Taking the infimum proves the claim and gives
\begin{equation}\label{eq:limiting-class-gap}
0\le C_2(\mathbf{W})\le f_2(\mathbf{W})-f_{2,*}.
\end{equation}

\paragraph{From objective values to semantic errors.}
If greedy decoding selects a wrong-class token, every correct-class
token has probability at most that selected token's probability, and
thus at most the total wrong-class mass. Consequently,
$P_b(\mathbf{W},\mathbf{x})\le M(1-P_b(\mathbf{W},\mathbf{x}))$
on the error event, including ties resolved to the wrong class.
The class loss there is at least $\log(1+1/M)$, so
\begin{equation}\label{eq:limiting-semantic-bound}
\mathcal{L}_2(\mathbf{W})
\le\frac{C_2(\mathbf{W})}{\log(1+1/M)}
\le\frac{f_2(\mathbf{W})-f_{2,*}}{\log(1+1/M)}.
\end{equation}
Both tasks share the same semantic rule and hence the same error event
$E_{\mathbf{W}}=\{\mathbf{x}:\hat b_{\mathbf{W}}(\mathbf{x})
\ne b^\star(\mathbf{x})\}$.
Writing $\boldsymbol{\delta}=\boldsymbol{\mu}_1-\boldsymbol{\mu}_2$,
the density ratio of their prompt distributions is
\[
r(\mathbf{x})
=\exp\!\left(\inner{\boldsymbol{\delta}}{\mathbf{x}-\boldsymbol{\mu}_2}
-\tfrac12\norm{\boldsymbol{\delta}}^2\right),
\qquad
\E_2[r(\mathbf{x})^2]=e^{\norm{\boldsymbol{\delta}}^2}.
\]
Cauchy--Schwarz therefore gives, uniformly in $\mathbf{W}$,
\begin{equation}\label{eq:limiting-task-transfer}
\mathcal{L}_1(\mathbf{W})
=\E_2[r(\mathbf{x})\indic{\mathbf{x}\in E_{\mathbf{W}}}]
\le e^{\norm{\boldsymbol{\mu}_1-\boldsymbol{\mu}_2}^2/2}
\sqrt{\mathcal{L}_2(\mathbf{W})}.
\end{equation}
Thus any sequence with $f_2(\mathbf{W})\to f_{2,*}$ has vanishing
semantic error on both tasks.

\paragraph{Convergence of the population objective.}
The function $f_2$ is convex. For any perturbation matrix $\mathbf{A}$,
its Hessian satisfies
\[
\begin{aligned}
D^2f_2(\mathbf{W})[\mathbf{A},\mathbf{A}]
&=\E_2\!\left[(\mathbf{A}\mathbf{x})^\top
(\operatorname{diag}\boldsymbol{\pi}
-\boldsymbol{\pi}\boldsymbol{\pi}^\top)(\mathbf{A}\mathbf{x})\right]\\
&\le\tfrac12\E_2\norm{\mathbf{A}\mathbf{x}}^2
\le\tfrac12(1+\norm{\boldsymbol{\mu}_2}^2)\norm{\mathbf{A}}_F^2,
\end{aligned}
\]
where $\boldsymbol{\pi}=\pi_{\mathbf{W}}(\cdot\mid\mathbf{x})$.
Here the categorical covariance has operator norm at most $1/2$, since
its absolute row sums are $2\pi_a(1-\pi_a)\le1/2$, and
$\E_2[\mathbf{x}\mathbf{x}^\top]
=\mathbf{I}_p+\boldsymbol{\mu}_2\boldsymbol{\mu}_2^\top$.
Hence $f_2$ is $L$-smooth for
$L=(1+\norm{\boldsymbol{\mu}_2}^2)/2$.
Under \eqref{eq:limiting-step-size}, convexity and the descent inequality
give, for every finite comparator $\mathbf{V}$,
\[
f_2(\mathbf{W}^{(j+1)})-f_2(\mathbf{V})
\le\frac{\norm{\mathbf{W}^{(j)}-\mathbf{V}}_F^2
-\norm{\mathbf{W}^{(j+1)}-\mathbf{V}}_F^2}{2\eta_2}.
\]
Summing and using monotonicity of the objective yields
\begin{equation}\label{eq:limiting-comparator}
f_2(\mathbf{W}^{(k)})-f_2(\mathbf{V})
\le\frac{\norm{\mathbf{W}_1-\mathbf{V}}_F^2}{2k\eta_2}.
\end{equation}
First letting $k\to\infty$ and then taking the infimum over $\mathbf{V}$
proves $f_2(\mathbf{W}^{(k)})\to f_{2,*}$ without assuming that a finite
minimizer exists.
Equations~\eqref{eq:limiting-semantic-bound} and
\eqref{eq:limiting-task-transfer} now imply \eqref{eq:limiting-errors}.
When $\mathcal{L}_1(\mathbf{W}_1)=0$, forgetting equals the final
task-$1$ error, and generalization error is the average of the two task
errors, proving the asserted metric limits.
\end{proof}

\paragraph{What converges.}
The objective approaches its infimum and semantic errors vanish, but
the parameters do not converge to a finite minimizer. Indeed, if
$\mathbf{U}$ has rows $(2\sigma(a)-1)\mathbf{u}/2$, then at every finite
$\mathbf{W}$,
\[
\left.\frac{\dd}{\dd s}f_2(\mathbf{W}+s\mathbf{U})\right|_{s=0}
=-\E_2\!\left[|z|\bigl(1-P_{b^\star(\mathbf{x})}
(\mathbf{W},\mathbf{x})\bigr)\right]<0.
\]
Thus no finite matrix attains $f_{2,*}$, and any sequence approaching
this infimum must have $\norm{\mathbf{W}}_F\to\infty$; otherwise a
bounded subsequence would yield a finite minimizer by continuity.
The token cross-entropy need not tend to zero, and exact matching of the
teacher's style preferences is unnecessary for vanishing semantic error.
The finite-time errors in Theorem~\ref{thm:separation} therefore do not
persist at a positive level in this population limit. This does not
assert monotone semantic improvement or zero error after finitely many
steps. 

\subsection{Numerical illustration of finite-time forgetting}\label{app:two-task-dynamics}

We illustrate the finite-time separation in
Theorem~\ref{thm:separation} and the subsequent recovery described in
Appendix~\ref{app:limiting-behavior} by varying training time at a fixed
step size.

\paragraph{Setup.}
We take $p=2$, $M=2$, $\mathbf{u}=(1,0)^\top$, and Gaussian task means
$\boldsymbol{\mu}_1=(0,2)^\top$ and
$\boldsymbol{\mu}_2=(0,3)^\top$. The task-$2$ teachers are
$\rho_{2,0}=(0.5,0.5)$ and $\rho_{2,1}=(0.99,0.01)$, giving
$\Delta_\rho=0.49$.
Both methods start from the constructed checkpoint
$\mathbf{w}_a=(2\sigma(a)-1)\mathbf{u}/2$, so that
$\Delta\boldsymbol{\alpha}=\mathbf{u}$ and
$\boldsymbol{\beta}=\mathbf{0}$.
This checkpoint already classifies both tasks perfectly because they
share the same semantic rule. We then train only on task $2$, using
population updates with fixed step size $\eta=10^{-3}$ for $10^6$ steps,
up to $\tau=k\eta=1000$.
Here $\mathcal{L}_1$ equals forgetting, and
$\cG^{(2)}=(\mathcal{L}_1+\mathcal{L}_2)/2$.
The guaranteed interval in \eqref{eq:tau0} ends at
$\tau_0\approx0.00334$; the chosen step size also satisfies the
sufficient condition \eqref{eq:limiting-step-size} for eventual recovery.

\paragraph{Numerical evaluation.}
We evaluate the SFT teacher term analytically and its policy term by
deterministic Gaussian quadrature, with $12$ Gauss--Legendre nodes on
each of six intervals in positive $z=\inner{\mathbf{u}}{\mathbf{x}}$
and their negative reflections, and $40$ Gauss--Hermite nodes in the
orthogonal coordinate ($5760$ nodes in total; $|z|\le11$).
RFT follows its scalar on-axis semantic recursion from
Theorem~\ref{thm:rate}(ii), preserving zero style and zero semantic error.
SFT errors are computed by conditioning on $z$, evaluating the Gaussian
probability of the greedy decision region in the other coordinate,
and integrating over $z$.

\begin{figure}[htbp]
\centering
\includegraphics[width=0.9\linewidth]{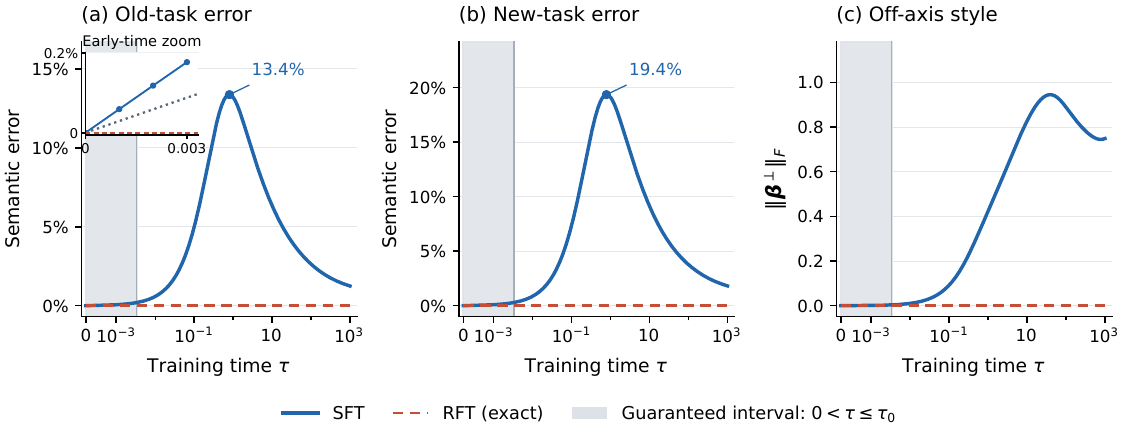}
\caption{\textbf{Two-task population dynamics at a fixed step size.}
\emph{(a)} Old-task error, equal to forgetting from the common perfectly
fitted checkpoint. \emph{(b)} New-task error.
\emph{(c)} Off-axis style norm $\norm{\boldsymbol{\beta}^{\perp}}_F$.
The horizontal axis is logarithmic outside a small linear neighborhood
of zero. Shading marks $0<\tau\le\tau_0$; the inset compares the first
three SFT steps with the old-task error lower bound in
Theorem~\ref{thm:separation} (dotted).
Later times illustrate behavior beyond that guaranteed interval.}
\label{fig:theorem3-timecourse}
\end{figure}

\paragraph{Results.}
In Figure~\ref{fig:theorem3-timecourse}, SFT's old- and new-task errors
rise from zero to approximately $13.4\%$ and $19.4\%$, respectively,
near $\tau=0.8$, then decline to $1.25\%$ and $1.81\%$ at $\tau=1000$.
The shaded steps lie inside the guaranteed interval and have
positive old-task error above the theorem's lower bound.
RFT retains zero error on both tasks throughout.
SFT's error reduction occurs while its off-axis style norm remains
nonzero (as the semantic margin grows).
Thus this example exhibits substantial recovery without eliminating
style drift.

\section{Control and ablation experiments}\label{app:controls}
\renewcommand{\topfraction}{0.9}
\renewcommand{\bottomfraction}{0.5}
\renewcommand{\textfraction}{0.07}
\setcounter{topnumber}{3}

\subsection{Control experiments}
\label{app:control-experiments}

We isolate the roles of task means and teacher preferences in
population style drift.
Unless specified below, all settings follow Section~\ref{sec:experiments}.

\paragraph{Zero task means.}
Setting $\boldsymbol{\mu}_t=\mathbf{0}$ removes population off-axis
style drift. Both population methods then achieve negligible
generalization error and forgetting (Figure~\ref{fig:pareto-pop-zero});
their small residual off-axis energies reflect Monte Carlo prompt
sampling. Finite-sample SFT still develops greater off-axis energy and
a worse generalization--forgetting tradeoff than RFT, illustrating
sampling effects even when population drift is absent.

\paragraph{Uniform teachers.}
Keeping the nonzero task means of the main experiment, we set
$\mathbf{v}_{t,b}=\mathbf{0}$ in \eqref{eq:rho}, giving
$\rho_{t,b}(a)=1/M$. Removing teacher preferences also eliminates the
population performance gap (Figure~\ref{fig:pareto-pop-uniform}).
Population SFT now preserves $\boldsymbol\beta=\mathbf0$ because its
updates are constant within each semantic class, including under
Monte Carlo prompt averaging. Its residual off-axis energy therefore
comes from the semantic component. Finite-sample SFT still introduces
style fluctuations by sampling individual teacher tokens. RFT is
unchanged from Figure~\ref{fig:pareto-pop}, since it does not use the
teacher and the prompts and seeds are shared.

\begin{figure}[htbp]
\centering
\includegraphics[width=\linewidth]{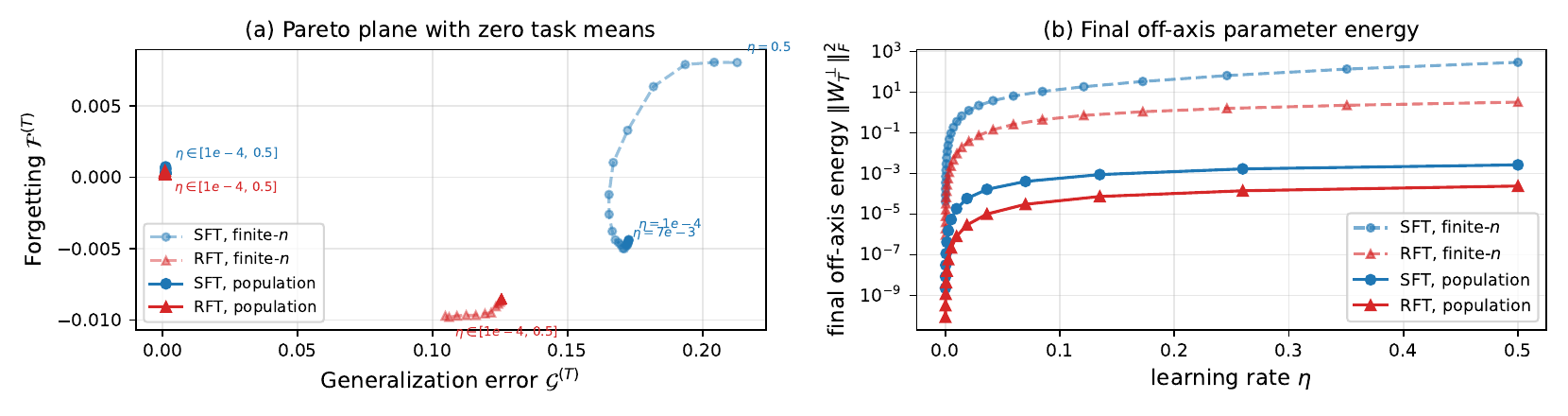}
\caption{\textbf{Zero task means.} We set
$\boldsymbol{\mu}_t=\mathbf0$ for every task. As in
Figure~\ref{fig:pareto-pop}, panel~(a) shows generalization error versus
forgetting, and panel~(b) shows final off-axis energy versus learning
rate. Solid and dashed curves denote population and finite-sample
updates, respectively.}
\label{fig:pareto-pop-zero}
\end{figure}

\begin{figure}[htbp]
\centering
\includegraphics[width=\linewidth]{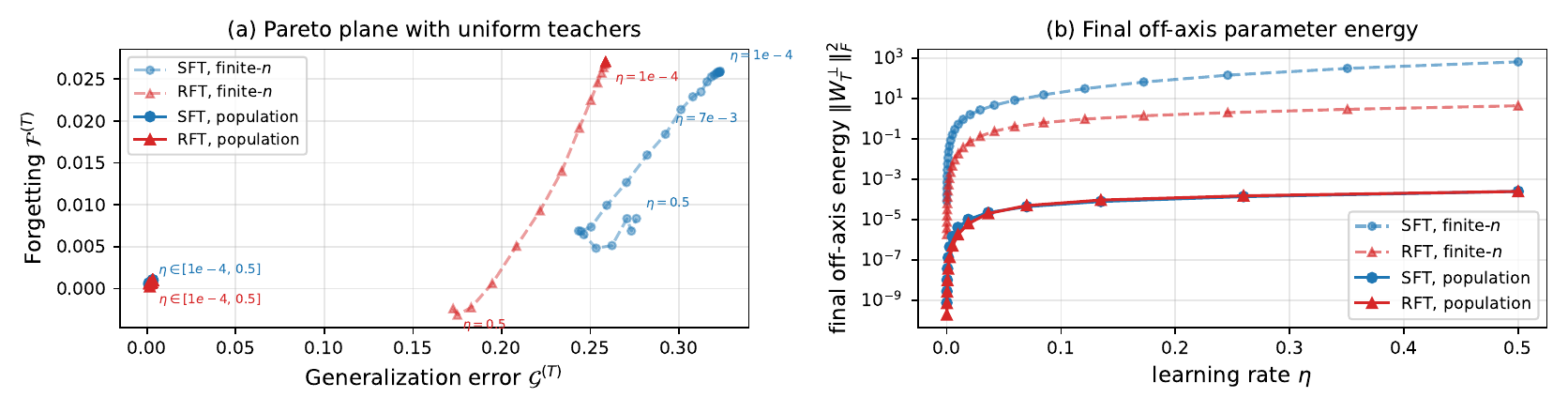}
\caption{\textbf{Uniform teachers.} We retain the nonzero Gaussian task
means of Figure~\ref{fig:pareto-pop} and set $\rho_{t,b}(a)=1/M$ for
every task and class. The layout and line styles match
Figure~\ref{fig:pareto-pop-zero}.}
\label{fig:pareto-pop-uniform}
\end{figure}

\subsection{Ablation experiments}
\label{app:robustness-ablations}

We examine sensitivity to three assumptions: Gaussian prompts, correct
semantic rewards, and exact action expectations. Unless specified otherwise, we follow the setup of
Section~\ref{sec:experiments}.

\paragraph{Non-Gaussian prompts.}
We keep the task means of Figure~\ref{fig:pareto-pop} and replace the
Gaussian prompt noise by independent
$\operatorname{Laplace}(0,1/\sqrt2)$ coordinates,
$\mathbf x=\boldsymbol{\mu}_t+\boldsymbol{\varepsilon}$, which preserves
the means and the identity covariance. The results are nearly unchanged
from Figure~\ref{fig:pareto-pop} (Figure~\ref{fig:pareto-pop-laplace}):
population SFT develops essentially the same off-axis energy and
forgetting, and the finite-sample curves are close. Population RFT keeps
forgetting below $0.3\%$ but acquires a small off-axis energy (about
$3\times10^{-2}$ at $\eta=0.5$, versus $2.5\times10^{-4}$ with Gaussian
prompts) and error up to $1.5\%$: its algebraic style invariance holds
for any prompt law, whereas its exactly on-axis population direction
relies on Gaussian prompts.

\begin{figure}[htbp]
\centering
\includegraphics[width=\linewidth]{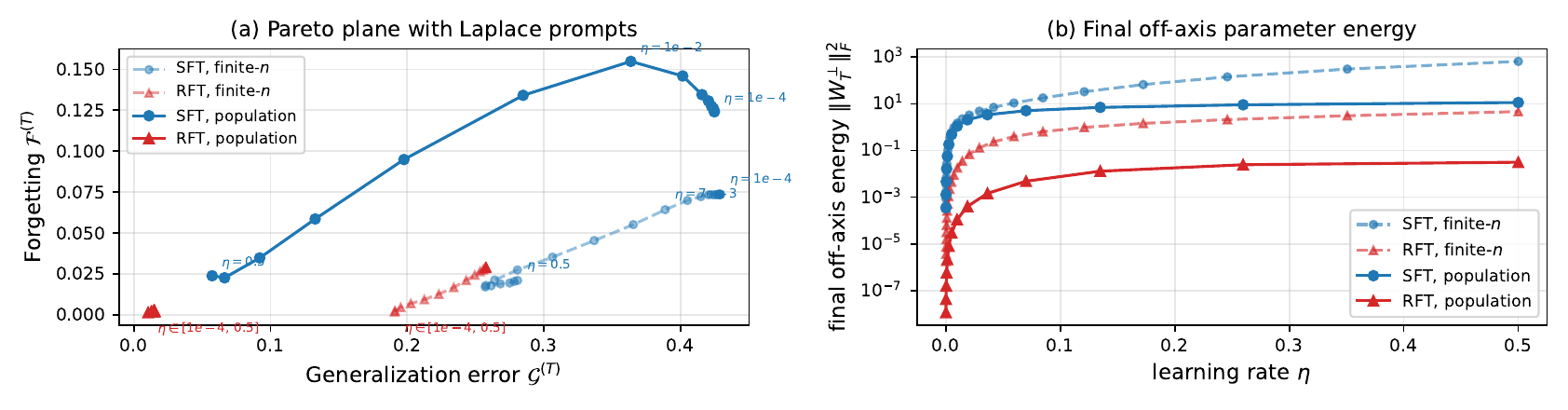}
\caption{\textbf{Non-Gaussian prompts.} Prompt noise coordinates are
drawn independently from $\operatorname{Laplace}(0,1/\sqrt2)$ around the
task means of Figure~\ref{fig:pareto-pop}, giving identity covariance.
The layout and line styles match Figure~\ref{fig:pareto-pop-zero}.}
\label{fig:pareto-pop-laplace}
\end{figure}

\paragraph{Imperfect rewards.}
For $q\in\{0,0.05,0.1,0.2\}$, we apply either a
\emph{class flip}, replacing the entire reward vector by
$1-R(\cdot,\mathbf x)$ with probability $q$, or \emph{token flips},
independently reversing each token's reward with probability $q$.
Corruptions are sampled independently for each prompt, including within
Monte Carlo (MC) batches. RFT sums exactly over actions using the
corrupted rewards; SFT continues to receive correct demonstrations.
Both methods are evaluated using clean semantic labels.

Class flips preserve within-class reward symmetry and hence RFT's
zero-style invariance, whereas token flips introduce style fluctuations.
Across the tested noise levels, MC RFT exhibits little
forgetting, while token flips induce more forgetting than class flips
in finite-sample runs (Figure~\ref{fig:ablation-forgetting}(a,c)).

\begin{figure}[htbp]
\centering
\includegraphics[width=0.5\linewidth]{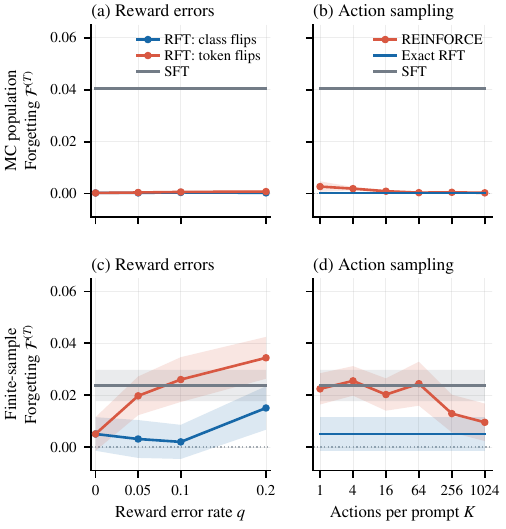}
\caption{\textbf{Forgetting under two assumption ablations.}
Top: MC population updates; bottom: finite-sample updates.
Columns vary reward corruption and the number of REINFORCE actions per
prompt, respectively.
All panels use $\eta\approx0.121$ and the same vertical scale.
Training prompts are paired across methods.
Shading gives pointwise $95\%$ Student-$t$
intervals over $10$ MC seeds and mean $\pm1.96$ standard errors over
$50$ finite-sample seeds, conditional on the fixed world and test sets.
Negative forgetting indicates backward improvement.}
\label{fig:ablation-forgetting}
\end{figure}

\paragraph{Finite action sampling.}
With clean rewards, we replace the exact sum over $2M=12$ actions by
the no-baseline REINFORCE estimator of Appendix~\ref{app:sampled},
averaging $K\in\{1,4,16,64,256,1024\}$ independent actions per prompt at the
current policy. The estimator is unbiased for every $K$ by
\eqref{eq:app-sampled-unbiased}. MC updates additionally average over
$10{,}000$ prompts, so even $K=1$ uses $10{,}000$ action samples per
update. Prompt and update counts are matched within each protocol,
but action-query and computation budgets differ across $K$ and the
exact baseline.

At the reference learning rate, REINFORCE forgetting moves closer to the exact
RFT baseline as $K$ increases in both settings
(Figure~\ref{fig:ablation-forgetting}(b,d)).

\end{document}

%% file: Figures/fig_reward_cells.tex
\begin{tikzpicture}[
    token/.style={draw,rounded corners=2pt,thick,fill=white,inner xsep=4pt,inner ysep=3pt,font=\small},
    cellbox/.style={draw,rounded corners=5pt,thick,inner sep=7pt},
    stylept/.style={circle,draw,thick,inner sep=1.7pt},
    note/.style={align=center,font=\small},
    arrow/.style={->,thick},
    node distance=4pt
]
\node[note,font=\small] (promptleft) at (1.45,3.35) {$\mathbf{x}$: \texttt{"2+2=?"}};
\node[note,font=\small] (promptright) at (5.45,3.35) {$\mathbf{x}$: \texttt{"1+4=?"}};

\node[token,font=\scriptsize] (lc1) at (1.45,2.45) {\texttt{"4"}};
\node[token,font=\scriptsize,below=of lc1] (lc2) {\texttt{"four"}};
\node[token,font=\scriptsize,below=of lc2] (lc3) {\texttt{"The answer is 4"}};
\node[cellbox,draw=linkblue,fit=(lc1)(lc2)(lc3),
      label={[font=\scriptsize]left:{$\cA_0:\ R=1$}}] (lcellzero) {};

\node[token,font=\scriptsize] (l1a) at (1.45,0.18) {\texttt{"5"}};
\node[token,font=\scriptsize,below=of l1a] (l1b) {\texttt{"five"}};
\node[token,font=\scriptsize,below=of l1b] (l1c) {\texttt{"It equals 5"}};
\node[cellbox,draw=red!70!black,fit=(l1a)(l1b)(l1c),
      label={[font=\scriptsize]left:{$\cA_1:\ R=0$}}] (lcellone) {};

\node[token,font=\scriptsize] (rc1) at (5.45,2.45) {\texttt{"4"}};
\node[token,font=\scriptsize,below=of rc1] (rc2) {\texttt{"four"}};
\node[token,font=\scriptsize,below=of rc2] (rc3) {\texttt{"The answer is 4"}};
\node[cellbox,draw=linkblue,fit=(rc1)(rc2)(rc3),
      label={[font=\scriptsize]right:{$\cA_0:\ R=0$}}] (rcellzero) {};

\node[token,font=\scriptsize] (r1a) at (5.45,0.18) {\texttt{"5"}};
\node[token,font=\scriptsize,below=of r1a] (r1b) {\texttt{"five"}};
\node[token,font=\scriptsize,below=of r1b] (r1c) {\texttt{"It equals 5"}};
\node[cellbox,draw=red!70!black,fit=(r1a)(r1b)(r1c),
      label={[font=\scriptsize]right:{$\cA_1:\ R=1$}}] (rcellone) {};

\node[note,font=\small] (rewardlabel) at (3.3,-2.15)
{\textbf{(a) Semantic classes}};

\begin{scope}[yshift=0.35cm]
\draw[->,thin] (-0.55,-6.3) -- (6.9,-6.3);
\node[font=\scriptsize,anchor=north] at (3.2,-6.5) {terse $\to$ explanatory};
\draw[->,thin] (-0.55,-6.3) -- (-0.55,-3.45);
\node[font=\scriptsize,rotate=90,anchor=south] at (-0.85,-4.9) {numeric $\to$ verbal};
\node[stylept,fill=linkblue!25,label={[font=\ttfamily\scriptsize]90:{"4"}}]                at (0.45,-5.95) {};
\node[stylept,fill=linkblue!25,label={[font=\ttfamily\scriptsize,xshift=6pt]135:{"four"}}]            at (0.65,-3.75) {};
\node[stylept,fill=linkblue!25,label={[font=\ttfamily\scriptsize]270:{"The answer is 4"}}] at (5.35,-4.55) {};
\node[stylept,fill=red!20,label={[font=\ttfamily\scriptsize]90:{"5"}}]             at (1.15,-5.95) {};
\node[stylept,fill=red!20,label={[font=\ttfamily\scriptsize]90:{"five"}}]          at (1.55,-3.75) {};
\node[stylept,fill=red!20,label={[font=\ttfamily\scriptsize]90:{"It equals 5"}}]   at (4.65,-4.45) {};
\draw[arrow,linkblue] (2.0,-5.5) -- (3.45,-4.6)
node[midway,above,sloped,font=\scriptsize] {teacher preference};
\draw[thick,dashed,green!60!black] (0.8,-5.85) ellipse (0.72 and 0.42);
\node[font=\scriptsize\bfseries,green!60!black,anchor=west] at (1.6,-5.85) {Task 1};
\draw[thick,dashed,orange!85!black] (1.1,-3.65) ellipse (0.9 and 0.4);
\node[font=\scriptsize\bfseries,orange!85!black,anchor=west] at (2.1,-3.65) {Task 2};
\draw[thick,dashed,linkblue] (5.0,-4.5) ellipse (0.85 and 0.5);
\node[font=\scriptsize\bfseries,linkblue,anchor=west] at (5.9,-4.5) {Task 3};
\node[note,font=\small] (stylelabel) at (3.3,-7.1)
{\textbf{(b) Style space}};
\end{scope}
\end{tikzpicture}

%% file: references.bib
@inproceedings{howard2018universal,
  title     = {Universal Language Model Fine-tuning for Text Classification},
  author    = {Howard, Jeremy and Ruder, Sebastian},
  booktitle = {Proceedings of the 56th Annual Meeting of the Association for Computational Linguistics (Volume 1: Long Papers)},
  pages     = {328--339},
  year      = {2018},
  publisher = {Association for Computational Linguistics},
  doi       = {10.18653/v1/P18-1031},
  url       = {https://aclanthology.org/P18-1031/}
}

@techreport{radford2018improving,
  title       = {Improving Language Understanding by Generative Pre-Training},
  author      = {Radford, Alec and Narasimhan, Karthik and Salimans, Tim and Sutskever, Ilya},
  institution = {OpenAI},
  year        = {2018},
  url         = {https://cdn.openai.com/research-covers/language-unsupervised/language_understanding_paper.pdf}
}

@inproceedings{devlin2019bert,
  title     = {{BERT}: Pre-training of Deep Bidirectional Transformers for Language Understanding},
  author    = {Devlin, Jacob and Chang, Ming-Wei and Lee, Kenton and Toutanova, Kristina},
  booktitle = {Proceedings of the 2019 Conference of the North American Chapter of the Association for Computational Linguistics: Human Language Technologies, Volume 1 (Long and Short Papers)},
  pages     = {4171--4186},
  year      = {2019},
  publisher = {Association for Computational Linguistics},
  doi       = {10.18653/v1/N19-1423},
  url       = {https://aclanthology.org/N19-1423/}
}

@article{raffel2020exploring,
  title   = {Exploring the Limits of Transfer Learning with a Unified Text-to-Text Transformer},
  author  = {Raffel, Colin and Shazeer, Noam and Roberts, Adam and Lee, Katherine and Narang, Sharan and Matena, Michael and Zhou, Yanqi and Li, Wei and Liu, Peter J.},
  journal = {Journal of Machine Learning Research},
  volume  = {21},
  number  = {140},
  pages   = {1--67},
  year    = {2020},
  url     = {https://www.jmlr.org/papers/v21/20-074.html}
}

@article{chung2024scaling,
  title   = {Scaling Instruction-Finetuned Language Models},
  author  = {Chung, Hyung Won and Hou, Le and Longpre, Shayne and Zoph, Barret and Tay, Yi and Fedus, William and Li, Yunxuan and Wang, Xuezhi and Dehghani, Mostafa and Brahma, Siddhartha and others},
  journal = {Journal of Machine Learning Research},
  volume  = {25},
  number  = {70},
  pages   = {1--53},
  year    = {2024},
  url     = {https://www.jmlr.org/papers/v25/23-0870.html}
}

@article{guo2025deepseekr1,
  title   = {{DeepSeek-R1} incentivizes reasoning in {LLMs} through reinforcement learning},
  author  = {Guo, Daya and Yang, Dejian and Zhang, Haowei and Song, Junxiao and Wang, Peiyi and Zhu, Qihao and Xu, Runxin and Zhang, Ruoyu and Ma, Shirong and Bi, Xiao and others},
  journal = {Nature},
  volume  = {645},
  pages   = {633--638},
  year    = {2025},
  doi     = {10.1038/s41586-025-09422-z},
  url     = {https://www.nature.com/articles/s41586-025-09422-z}
}

@inproceedings{rolnick2019experience,
  title     = {Experience Replay for Continual Learning},
  author    = {Rolnick, David and Ahuja, Arun and Schwarz, Jonathan and Lillicrap, Timothy and Wayne, Gregory},
  booktitle = {Advances in Neural Information Processing Systems (NeurIPS)},
  volume    = {32},
  year      = {2019},
  url       = {https://proceedings.neurips.cc/paper/2019/hash/fa7cdfad1a5aaf8370ebeda47a1ff1c3-Abstract.html}
}

@inproceedings{zenke2017continual,
  title     = {Continual Learning Through Synaptic Intelligence},
  author    = {Zenke, Friedemann and Poole, Ben and Ganguli, Surya},
  booktitle = {International Conference on Machine Learning (ICML)},
  series    = {Proceedings of Machine Learning Research},
  volume    = {70},
  pages     = {3987--3995},
  year      = {2017},
  url       = {https://proceedings.mlr.press/v70/zenke17a.html}
}

@inproceedings{mallya2018packnet,
  title     = {{PackNet}: Adding Multiple Tasks to a Single Network by Iterative Pruning},
  author    = {Mallya, Arun and Lazebnik, Svetlana},
  booktitle = {Proceedings of the IEEE Conference on Computer Vision and Pattern Recognition (CVPR)},
  pages     = {7765--7773},
  year      = {2018},
  url       = {https://openaccess.thecvf.com/content_cvpr_2018/html/Mallya_PackNet_Adding_Multiple_CVPR_2018_paper.html}
}

@article{luo2025empirical,
  title   = {An Empirical Study of Catastrophic Forgetting in Large Language Models During Continual Fine-Tuning},
  author  = {Luo, Yun and Yang, Zhen and Meng, Fandong and Li, Yafu and Zhou, Jie and Zhang, Yue},
  journal = {IEEE Transactions on Audio, Speech and Language Processing},
  volume  = {33},
  pages   = {3776--3786},
  year    = {2025},
  doi     = {10.1109/TASLPRO.2025.3606231},
  url     = {https://arxiv.org/abs/2308.08747}
}

@article{wang2024comprehensive,
  title   = {A Comprehensive Survey of Continual Learning: Theory, Method and Application},
  author  = {Wang, Liyuan and Zhang, Xingxing and Su, Hang and Zhu, Jun},
  journal = {IEEE Transactions on Pattern Analysis and Machine Intelligence},
  volume  = {46},
  number  = {8},
  pages   = {5362--5383},
  year    = {2024},
  doi     = {10.1109/TPAMI.2024.3367329},
  url     = {https://arxiv.org/abs/2302.00487}
}

@inproceedings{lin2023theory,
  title     = {Theory on Forgetting and Generalization of Continual Learning},
  author    = {Lin, Sen and Ju, Peizhong and Liang, Yingbin and Shroff, Ness},
  booktitle = {Proceedings of the 40th International Conference on Machine Learning},
  series    = {Proceedings of Machine Learning Research},
  volume    = {202},
  pages     = {21078--21100},
  year      = {2023},
  publisher = {PMLR},
  url       = {https://proceedings.mlr.press/v202/lin23f.html}
}

@inproceedings{ding2024understanding,
  title     = {Understanding Forgetting in Continual Learning with Linear Regression},
  author    = {Ding, Meng and Ji, Kaiyi and Wang, Di and Xu, Jinhui},
  booktitle = {Proceedings of the 41st International Conference on Machine Learning},
  series    = {Proceedings of Machine Learning Research},
  volume    = {235},
  pages     = {10978--11001},
  year      = {2024},
  publisher = {PMLR},
  url       = {https://proceedings.mlr.press/v235/ding24c.html}
}

@inproceedings{evron2023continual,
  title     = {Continual Learning in Linear Classification on Separable Data},
  author    = {Evron, Itay and Moroshko, Edward and Buzaglo, Gon and Khriesh, Maroun and Marjieh, Badea and Srebro, Nathan and Soudry, Daniel},
  booktitle = {Proceedings of the 40th International Conference on Machine Learning},
  series    = {Proceedings of Machine Learning Research},
  volume    = {202},
  pages     = {9440--9484},
  year      = {2023},
  publisher = {PMLR},
  url       = {https://proceedings.mlr.press/v202/evron23a.html}
}

@inproceedings{li2025theory,
  title     = {Theory on Mixture-of-Experts in Continual Learning},
  author    = {Li, Hongbo and Lin, Sen and Duan, Lingjie and Liang, Yingbin and Shroff, Ness B.},
  booktitle = {International Conference on Learning Representations (ICLR)},
  year      = {2025},
  url       = {https://proceedings.iclr.cc/paper_files/paper/2025/hash/17a234c91f746d9625a75cf8a8731ee2-Abstract-Conference.html}
}

@inproceedings{chaudhry2018riemannian,
  title     = {Riemannian Walk for Incremental Learning: Understanding Forgetting and Intransigence},
  author    = {Chaudhry, Arslan and Dokania, Puneet K. and Ajanthan, Thalaiyasingam and Torr, Philip H. S.},
  booktitle = {European Conference on Computer Vision (ECCV)},
  pages     = {532--547},
  year      = {2018},
  url       = {https://openaccess.thecvf.com/content_ECCV_2018/html/Arslan_Chaudhry__Riemannian_Walk_ECCV_2018_paper.html}
}

@inproceedings{chaudhry2019efficient,
  title     = {Efficient Lifelong Learning with {A-GEM}},
  author    = {Chaudhry, Arslan and Ranzato, Marc'Aurelio and Rohrbach, Marcus and Elhoseiny, Mohamed},
  booktitle = {International Conference on Learning Representations (ICLR)},
  year      = {2019},
  url       = {https://openreview.net/forum?id=Hkf2_sC5FX}
}

@article{williams1992simple,
  title   = {Simple statistical gradient-following algorithms for connectionist reinforcement learning},
  author  = {Williams, Ronald J.},
  journal = {Machine Learning},
  volume  = {8},
  number  = {3--4},
  pages   = {229--256},
  year    = {1992},
  url     = {https://doi.org/10.1007/BF00992696}
}

@inproceedings{sutton2000policy,
  title     = {Policy gradient methods for reinforcement learning with function approximation},
  author    = {Sutton, Richard S. and McAllester, David and Singh, Satinder and Mansour, Yishay},
  booktitle = {Advances in Neural Information Processing Systems (NeurIPS)},
  year      = {1999},
  url       = {https://proceedings.nips.cc/paper_files/paper/1999/hash/464d828b85b0bed98e80ade0a5c43b0f-Abstract.html}
}

@article{agarwal2021theory,
  title   = {On the Theory of Policy Gradient Methods: Optimality, Approximation, and Distribution Shift},
  author  = {Agarwal, Alekh and Kakade, Sham M. and Lee, Jason D. and Mahajan, Gaurav},
  journal = {Journal of Machine Learning Research},
  volume  = {22},
  number  = {98},
  pages   = {1--76},
  year    = {2021},
  url     = {https://jmlr.org/papers/v22/19-736.html}
}

@inproceedings{kakade2001natural,
  title     = {A Natural Policy Gradient},
  author    = {Kakade, Sham M.},
  booktitle = {Advances in Neural Information Processing Systems (NeurIPS)},
  volume    = {14},
  pages     = {1531--1538},
  year      = {2001},
  url       = {https://papers.nips.cc/paper_files/paper/2001/hash/4b86abe48d358ecf194c56c69108433e-Abstract.html}
}

@article{shao2024deepseekmath,
  title   = {{DeepSeekMath}: Pushing the Limits of Mathematical Reasoning in Open Language Models},
  author  = {Shao, Zhihong and Wang, Peiyi and Zhu, Qihao and Xu, Runxin and Song, Junxiao and Bi, Xiao and Zhang, Haowei and Zhang, Mingchuan and Li, Y. K. and Wu, Y. and Guo, Daya},
  journal = {arXiv preprint arXiv:2402.03300},
  year    = {2024},
  url     = {https://arxiv.org/abs/2402.03300}
}

@inproceedings{zhou2023lima,
  title     = {{LIMA}: Less Is More for Alignment},
  author    = {Zhou, Chunting and Liu, Pengfei and Xu, Puxin and Iyer, Srini and Sun, Jiao and Mao, Yuning and Ma, Xuezhe and Efrat, Avia and Yu, Ping and Yu, Lili and Zhang, Susan and Ghosh, Gargi and Lewis, Mike and Zettlemoyer, Luke and Levy, Omer},
  booktitle = {Advances in Neural Information Processing Systems},
  volume    = {36},
  year      = {2023},
  url       = {https://proceedings.neurips.cc/paper_files/paper/2023/hash/ac662d74829e4407ce1d126477f4a03a-Abstract-Conference.html}
}

@inproceedings{gudibande2024false,
  title     = {The False Promise of Imitating Proprietary Language Models},
  author    = {Gudibande, Arnav and Wallace, Eric and Snell, Charlie and Geng, Xinyang and Liu, Hao and Abbeel, Pieter and Levine, Sergey and Song, Dawn},
  booktitle = {International Conference on Learning Representations (ICLR)},
  year      = {2024},
  url       = {https://proceedings.iclr.cc/paper_files/paper/2024/hash/4db16435e3a5a2ef3fc39b8f0d12498d-Abstract-Conference.html}
}

@inproceedings{chu2025sft,
  title     = {{SFT} Memorizes, {RL} Generalizes: A Comparative Study of Foundation Model Post-training},
  author    = {Chu, Tianzhe and Zhai, Yuexiang and Yang, Jihan and Tong, Shengbang and Xie, Saining and Schuurmans, Dale and Le, Quoc V. and Levine, Sergey and Ma, Yi},
  booktitle = {International Conference on Machine Learning (ICML)},
  series    = {Proceedings of Machine Learning Research},
  volume    = {267},
  pages     = {10818--10838},
  year      = {2025},
  url       = {https://proceedings.mlr.press/v267/chu25c.html}
}

@inproceedings{shenfeld2026razor,
  title     = {{RL}'s Razor: Why Online Reinforcement Learning Forgets Less},
  author    = {Shenfeld, Idan and Pari, Jyothish and Agrawal, Pulkit},
  booktitle = {International Conference on Learning Representations (ICLR)},
  year      = {2026},
  url       = {https://proceedings.iclr.cc/paper_files/paper/2026/hash/618c95f4557c15b253fb0e6f548ea0c0-Abstract-Conference.html}
}

@inproceedings{shenfeld2026selfdistillation,
  title     = {Self-Distillation Enables Continual Learning},
  author    = {Shenfeld, Idan and Damani, Mehul and H{\"u}botter, Jonas and Agrawal, Pulkit},
  booktitle = {International Conference on Machine Learning (ICML)},
  year      = {2026},
  url       = {https://arxiv.org/abs/2601.19897}
}

@inproceedings{chen2026retaining,
  title     = {Retaining by Doing: The Role of On-Policy Data in Mitigating Forgetting},
  author    = {Chen, Howard and Razin, Noam and Narasimhan, Karthik and Chen, Danqi},
  booktitle = {International Conference on Machine Learning (ICML)},
  year      = {2026},
  url       = {https://arxiv.org/abs/2510.18874}
}

@article{balasubramanian2026quantitative,
  title   = {A Quantitative Characterization of Forgetting in Post-Training},
  author  = {Balasubramanian, Krishnakumar and Kasiviswanathan, Shiva Prasad},
  journal = {arXiv preprint arXiv:2603.12163},
  year    = {2026},
  doi     = {10.48550/arXiv.2603.12163},
  url     = {https://arxiv.org/abs/2603.12163}
}

@inproceedings{lai2026reinforcement,
  title     = {Reinforcement Fine-Tuning Naturally Mitigates Forgetting in Continual Post-Training},
  author    = {Lai, Song and Zhao, Haohan and Feng, Rong and Ma, Changyi and Liu, Wenzhuo and Zhao, Hongbo and Lin, Xi and Yi, Dong and Zhang, Qingfu and Liu, Hongbin and Meng, Gaofeng and Zhu, Fei},
  booktitle = {International Conference on Machine Learning (ICML)},
  year      = {2026},
  url       = {https://arxiv.org/abs/2507.05386}
}

@inproceedings{zhang2026data,
  title     = {Why Reinforcement Fine-Tuning Preserves Prior Knowledge Better: A Data Perspective},
  author    = {Zhang, Zhihao and Dong, Qiaole and Zhang, Qi and Zhou, Enyu and Zhao, Jun and Xi, Zhiheng and Jin, Senjie and Fan, Xiaoran and Zhou, Yuhao and Wu, Mingqi and Fu, Yanwei and Ji, Tao and Gui, Tao and Huang, Xuanjing and Chen, Kai},
  booktitle = {International Conference on Learning Representations (ICLR)},
  year      = {2026},
  url       = {https://openreview.net/pdf/3b5fda354275f890c9404d61311720aac059b79d.pdf}
}

@inproceedings{ouyang2022training,
  title     = {Training language models to follow instructions with human feedback},
  author    = {Ouyang, Long and Wu, Jeff and Jiang, Xu and Almeida, Diogo and Wainwright, Carroll L. and Mishkin, Pamela and Zhang, Chong and Agarwal, Sandhini and Slama, Katarina and Ray, Alex and others},
  booktitle = {Advances in Neural Information Processing Systems (NeurIPS)},
  year      = {2022},
  url       = {https://arxiv.org/abs/2203.02155}
}

@article{kirkpatrick2017overcoming,
  title   = {Overcoming catastrophic forgetting in neural networks},
  author  = {Kirkpatrick, James and Pascanu, Razvan and Rabinowitz, Neil and Veness, Joel and Desjardins, Guillaume and Rusu, Andrei A. and Milan, Kieran and Quan, John and Ramalho, Tiago and Grabska-Barwinska, Agnieszka and Hassabis, Demis and Clopath, Claudia and Kumaran, Dharshan and Hadsell, Raia},
  journal = {Proceedings of the National Academy of Sciences},
  volume  = {114},
  number  = {13},
  pages   = {3521--3526},
  year    = {2017},
  doi     = {10.1073/pnas.1611835114},
  url     = {https://arxiv.org/abs/1612.00796}
}
